\newtheorem{theorem}{Theorem}
\newtheorem{lemma}{Lemma} 
\newtheorem{proposition}[theorem]{Proposition}
\newtheorem{remark}{Remark}
\newtheorem{definition}{Definition}
\begin{document}

\title{ReLU integral probability metric and its applications}

\author{\name Yuha Park \email yuha.park@uni-hamburg.de \\
       \addr Department of Mathematics\\
       University of Hamburg\\
       Mittelweg 177, Hamburg, Germany
       \AND
       \name Kunwoong Kim \email kwkim.online@gmail.com \\
       \addr Department of Statistics\\
       Seoul National University\\
       1 Gwanak-ro, Gwanak-gu, Seoul, Republic of Korea
       \AND
       \name Insung Kong \email insung.kong@utwente.nl \\
       \addr Department of Applied Mathematics\\
       University of Twente\\
       Drienerlolaan 5, 7522 NB Enschede, Netherlands
       \AND
       \name Yongdai Kim \email ydkim0903@gmail.com \\
       \addr Department of Statistics\\
       Seoul National University\\
       1 Gwanak-ro, Gwanak-gu, Seoul, Republic of Korea
       }

\editor{}

\maketitle

\begin{abstract}
We propose a parametric integral probability metric (IPM) to measure the discrepancy between two probability measures. The proposed IPM leverages a specific parametric family of discriminators, such as single-node neural networks with ReLU activation, to effectively distinguish between distributions, making it applicable in high-dimensional settings. By optimizing over the parameters of the chosen discriminator class, the proposed IPM demonstrates that its estimators have good convergence rates and can serve as a surrogate for other IPMs that use smooth nonparametric discriminator classes.
We present an efficient algorithm for practical computation, offering a simple implementation and requiring fewer hyperparameters. Furthermore, we explore its applications in various tasks, such as covariate balancing for causal inference and fair representation learning. Across such diverse applications, we demonstrate that the proposed IPM provides strong theoretical guarantees, and empirical experiments show that it achieves comparable or even superior performance to other methods.
\end{abstract}

\begin{keywords}
  integral probability metric, ReLU activation, convergence bound, covariance balancing, fair representation learning.
\end{keywords}


\section{Introduction}

The concept of measuring the distance between two probability measures is useful across various domains, including statistical inference, machine learning, and probability theory.
The \textit{Integral Probability Metrics} (IPMs), in particular, have emerged as powerful tools \citep{muller1997integral, Sriperumbudur2009IPM, sriperumbudur2012} for assessing discrepancies between two probability measures, enabling significant advancements in various applications such as generative modeling, statistical testing, and domain adaptation. For example, the Wasserstein distance is used in image generation \citep{arjovsky2017wasserstein, gulrajani2017improved}, image retrieval \citep{rubner2000earth}, reinforcement learning \citep{bellemare2017distributional}, distributionally robust optimization \citep{blanchet2019robust}, and optimal transport problems \citep{villani2009optimal, NIPS2013_af21d0c9}. The maximum mean discrepancy (MMD) is  used in statistical testing \citep{gretton2012kernel}, domain adaptation \citep{NIPS2012_ca8155f4, 8099590}, generative modeling \citep{pmlr-v37-li15, dziugaite2015training, li2017mmd} and causal inference \citep{Wong2017sobolev, pmlr-v202-kong23d}.
The H\"{o}lder-IPM is used for statistical testing \citep{Wang2023manifold, tang2023} and the Sobolev-IPM is used for image generation \citep{mroueh2017sobolev} and causal inference \citep{Wong2017sobolev}. Additionally, a parametric IPM (an IPM whose discrimnator class is parametric)
is applied to fair representation learning tasks \citep{pmlr-v162-kim22b} and causal effect estimation \citep{pmlr-v202-kong23d}.

Let $\mathcal{F}$ be a class of (real-valued) bounded measurable functions $f$ from $\mathcal{X}$ to $\mathbb{R}$. For a given class $\mathcal{F}$, the IPM between two given probability measures $\mathcal{P}$ and $\mathcal{Q}$, denoted as $d_{\mathcal{F}}(\mathcal{P}, \mathcal{Q})$,  is defined as the maximum average difference between $\mathcal{P}$ and $\mathcal{Q}$ on the class $\mathcal{F}$ over ${\mathcal {X}}$,
\begin{equation} \label{equ:IPM}
    d_{\mathcal{F}}(\mathcal{P}, \mathcal{Q}) \coloneqq \sup_{f \in \mathcal{F}} \left| \int f(d\mathcal{P}-d\mathcal{Q}) \right| = \sup_{f \in \mathcal{F}} \Big| \mathbb{E}_{\mathbf{X}} f(\mathbf{X})-\mathbb{E}_{\mathbf{Y}} f(\mathbf{Y}) \Big|,
\end{equation}
when $\mathbf{X}\sim \mathcal{P}$ and $\mathbf{Y}\sim\mathcal{Q}.$
Here, the functions $f$ are referred to as \textit{discriminators}, \textit{critics} or \textit{adversaries}, depending on the context of their specific applications.

Different choices of the discriminator class $\mathcal{F}$ result in different IPMs.
For example, the MMD is defined by selecting a reproducing kernel Hilbert space (RKHS) 
whereas the Wasserstein distance uses the set of 1-Lipschitz functions as its discriminator class. 
The H\"{o}lder-IPM and Sobolev-IPM select the H\"{o}lder and Sobolev spaces, respectively, as their discriminator classes. 
In turn, different IPMs affect the final estimator differently. A larger discriminator class can identify more complex differences between two probability measures, and thus more complex patterns could be learned to yield less bias in the final estimator.
In contrast, as usual, large discriminator classes would lead to over-fitting and thus higher variance. 
Moreover, numerical computation of IPMs with a larger discriminator class is more difficult.
Thus, there is a need to find a discriminator class which is small but is able to identify complex patterns between two probability measures.

\citet{pmlr-v162-kim22b} consider a parametric discriminator class
$\{\textup{sig}(\boldsymbol{\theta}^{\top}\mathbf{z} + \mu): \mu\in \mathbb{R}, \boldsymbol{\theta} \in \mathbb{R}^d\},$
where $\textup{sig}(\cdot)$ is the sigmoid function (i.e. $\textup{sig}(z)=(1+\exp(-z))^{-1}$),
whose corresponding IPM is called the sigmoid-IPM (SIPM).
An interesting property of the SIPM is that it is theoretically equivalent to an IPM with a certain nonparametric discriminator class (e.g. the set of all infinitely differentiable functions).
\citet{pmlr-v162-kim22b} and \citet{pmlr-v202-kong23d} empirically demonstrate that the SIPM competes well with existing nonparametric IPMs for fair representation learning and causal inference, respectively.

The aim of this paper is to propose a new parametric IPM, called the ReLU-IPM, where
the discriminator class is given as $\{ (\boldsymbol{\theta}^{\top}\mathbf{z} + \mu)_+: \mu \in [-1,1], \, \boldsymbol{\theta} \in \mathbb{S}^{d-1} \},$
where $(\cdot)_+ = \max(\cdot, 0)$ denotes the ReLU activation function and $\mathbb{S}^{d-1}$ is the unit sphere in $\mathbb{R}^d.$
We provide an interesting theoretical relation between the ReLU-IPM and the H\"{o}lder-IPM in that the ReLU-IPM is an upper bound of the H\"{o}lder-IPM. Moreover, it is proved that the ReLU-IPM and H\"{o}lder-IPM are asymptotically equivalent when the H\"{o}lder discriminators are sufficiently smooth. These theoretical results imply that the ReLU-IPM is essentially a nonparametric IPM even if the discriminator class is parametric.

An advantage of the H\"{o}lder-IPM is to control the size of the discriminator class by
controlling the smoothness of the H\"{o}lder discriminators.
In contrast, the discriminator classes of the Wasserstein distance, which include all of differentiable functions, would be too large while the discriminator class of the MMD, which consists of only very smooth functions
(e.g. infinitely differentiable functions for the MMD with the Gaussian kernel), would be too small.
However, computation of the H\"{o}lder-IPM is difficult since H\"{o}lder smooth discriminators should be approximated
by computable models such as deep neural networks \citep{Wang2023manifold}. The choice of an appropriate architecture of DNNs for approximating H\"{o}lder smooth discriminators would be hard, which makes
the H\"{o}lder-IPM be underappreciated. Our theoretical results suggest that 
the ReLU-IPM can be used as a surrogate IPM for the H\"{o}lder-IPM.

To illustrate the empirical superiority of the ReLU-IPM,  we consider the causal inference problem and fair representation learning. For the causal problem, we use the ReLU-IPM in the covariate balancing (CB) estimator proposed by \citet{pmlr-v202-kong23d} and derive the convergence rate of the estimated average treatment effect on treated (ATT).
In particular, when the output regression model is sufficiently smooth, the CB estimator of ATT with the ReLU-IPM achieves a parametric rate.
By analyzing simulated data, we demonstrate that the CB estimator with the ReLU-IPM outperforms other competitors, including the SIPM.

For fair representation learning, we consider using the ReLU-IPM to measure the degree of fairness of a learned fair representation. We prove that the final prediction model constructed on the top of the learned fair representation is guaranteed to be fair provided that the prediction head (a prediction model from the representation space to the output space) is H\"{o}lder-smooth. A similar result for the SIPM has been obtained by \citet{pmlr-v162-kim22b} when the prediction head is infinitely differentiable. In this view, the ReLU-IPM can be considered as an extension of the SIPM for H\"{o}lder smooth prediction heads.
By analyzing several benchmark datasets, we demonstrate that the fair representation learning algorithm with the ReLU-IPM is compared favorably with various baseline algorithms.

The remainder of this paper is organized as follows: Section \ref{sec:pre} provides a detailed review of IPMs, discussing their definitions, applications, and limitations in the context of machine learning. We introduce our proposed metrics, ReLU-IPM, in Section \ref{sec:ReLUIPM} and present its applications to the following tasks: causal inference and fair representation learning in Sections \ref{sec:app3} and \ref{sec:app2}, respectively.
Finally, Section \ref{sec:conc} concludes the paper and suggests directions for future research.

\vspace{2pt}
\begin{paragraph}{Notations}
    We use bold letters to denote vectors and matrices. 
    Let $\mathbb{N}=\{1,2,\ldots \}$ be the set of natural numbers.
    For a given two sequence $\{a_n\}$ and $\{b_n\}$, the notation $a_n \lesssim b_n$ and $a_n \gtrsim b_n$ indicates that there exists a positive constant $c$ such that for all $n$, $a_n \leq cb_n$ and $a_n \geq cb_n$, respectively. 
    When $a_n \lesssim b_n$ and $a_n \gtrsim b_n$, we write $a_n \asymp b_n$.
    For $a \in \mathbb{R}$, let $\lfloor a \rfloor$ and $\lceil a \rceil$ denote the floor and ceiling functions, which round $a$ to the next strictly smaller and larger integer, respectively.
    We denote $\mathbf{0}$ and $\mathbf{1}$ as the vectors of all zeros and all ones, respectively, in the corresponding dimensional space.
    For a given condition or event $A$, $\mathbb{I}(A)$ denotes the indicator function, where $\mathbb{I}(A) = 1$ if $A$ is true and $\mathbb{I}(A) = 0$ if $A$ is false. 
\end{paragraph}


\section{Review of existing IPMs}
\label{sec:pre}

In this section, we review popularly used IPMs.

\begin{paragraph}{Wasserstein distance}
    Given two distributions $\mathcal{P}$ and $\mathcal{Q}$, the p-Wasserstein distance is defined as
    \begin{equation} \label{eq:wass}
    W_p(\mathcal{P}, \mathcal{Q}) \coloneqq \left( \inf_{\pi \in \Pi(\mathcal{P}, \mathcal{Q})} \int_{\mathcal{X} \times \mathcal{X}} d(\mathbf{x}, \mathbf{y})^p d\pi(\mathbf{x}, \mathbf{y}) \right)^{1/p},
    \end{equation}
    where $d(\mathbf{x}, \mathbf{y})$ is a distance between two points $\mathbf{x}$ and $\mathbf{y}$, and $\Pi(\mathcal{P}, \mathcal{Q})$ is the set of all joint probability measures (couplings) $\pi(\mathbf{x},\mathbf{y})$ with the marginals $\mathcal{P}$ and $\mathcal{Q}$.
    By the Kantorovich-Rubinstein duality \citep{kantorovich1958space, villani2009optimal}, the 1-Wasserstein distance can be expressed as the IPM with a discriminator class containing all bounded 1-Lipschitz functions.
    That is, the 1-Wasserstein distance can be redefined as $d_{\mathcal{F}_{L}}(\mathcal{P}, \mathcal{Q}) \coloneqq \sup_{f \in \mathcal{F}_{L}} | \mathbb{E}_{\mathbf{X}} f(\mathbf{X})-\mathbb{E}_{\mathbf{Y}} f(\mathbf{Y})|$, where $\mathcal{F}_{L} \coloneqq \left\{ f: \|f\|_L \leq 1 \right\}$
    \footnote{ $\|f\|_L \coloneqq \sup_{\mathbf{x} \neq \mathbf{y}}\frac{|f(\mathbf{x})-f(\mathbf{y})|}{\|\mathbf{x}-\mathbf{y}\|}$ is the Lipschitz semi-norm of a real-valued function $f: \mathcal{X} \rightarrow \mathbb{R}$, where $\|\cdot\|$ is a certain norm defined on $\mathcal{X}$ (i.e., $\|f\|_L$ is the Lipschitz constant of the function $f$). }.
    Originally derived from the theory of optimal transport, it measures the minimum cost required to transform one distribution into another, considering both the distance and the amount of mass that must be moved. 
    Thus, the Wasserstein distance captures geometric differences between two distributions.
    
    Recently, the Wasserstein distance has gained popularity in the domain of image generation \citep{arjovsky2017wasserstein, NEURIPS2019_c9e1074f}, where it helps to improve the stability and convergence of the training process by addressing issues such as mode collapse.
    Furthermore, the Wasserstein distance has been used in various tasks, including domain adaptation \citep{Shen_Qu_Zhang_Yu_2018, xu2019wasserstein}, object detection \citep{xu2018multi, NEURIPS2020_fe7ecc4d, pmlr-v139-yang21l}, statistical testing \citep{63640154-a4b4-3ee5-9180-1c20fb2afd23, 10.1214/21-EJS1816}, and representation learning \citep{NEURIPS2019_f9209b78}.
    The Wasserstein distance can be computed using an algorithm that employs a DNN to approximate 1-Lipschitz functions, as suggested by \citet{arjovsky2017wasserstein} and \citet{gulrajani2017improved}. However, approximating 1-Lipschitz functions is challenging since the choice of architectures would not be easy and the optimization would be unstable.
\end{paragraph}

\begin{paragraph}{Maximum mean discrepancy (MMD)}
    The discriminator class for MMD is $\mathcal{H}_{k} \coloneqq \left\{  f: \|f\|_{\mathcal{H}_k} \leq 1 \right\}$, where $\mathcal{H}_k$ is the reproducing kernel Hilbert space associated with a given kernel $k$.
    That is, the MMD is defined as $d_{\mathcal{H}_{k}}(\mathcal{P}, \mathcal{Q}) \coloneqq \sup_{f \in \mathcal{H}_{k}} | \mathbb{E}_{\mathbf{X}} f(\mathbf{X})-\mathbb{E}_{\mathbf{Y}} f(\mathbf{Y})|$. 
    An important advantage of the MMD is that it can be calculated in a closed form.
    This leads to the well-known expression of the MMD in terms of a given kernel function. i.e., 
    $$
    d_{\mathcal{H}_{k}}(\mathcal{P}, \mathcal{Q}) 
    = \mathbb{E}k(\mathbf{X}, \mathbf{X}^{\prime}) - 2\mathbb{E}k(\mathbf{X}, \mathbf{Y}) + \mathbb{E}k(\mathbf{Y}, \mathbf{Y}^{\prime}).
    $$
    The MMD is used in statistical testing \citep{gretton2012kernel}, generative models \citep{pmlr-v37-li15, dziugaite2015training, li2017mmd} such as generative adversarial networks (GANs), and domain adaptation \citep{NIPS2012_ca8155f4, 8099590}.
    The MMD is computationally efficient, easy to implement, and generally performs well for high-dimensional data. However, it is sensitive to the choice of kernel functions. The Gaussian kernel is popularly used \citep{vert2004primer, gretton2012kernel, NIPS2016_5055cbf4, garreau2018largesampleanalysismedian, li2024optimality},
    but the corresponding RKHS, which includes only infinitely differentiable functions \citep{micchelli2006universal, kroese2019data}, would be too small and the choice of bandwidths in practice would not be easy. 
\end{paragraph}

\begin{paragraph}{H\"{o}lder-IPM}
    The H\"{o}lder-IPM \citep{Wang2023manifold, tang2023} uses the H\"{o}lder function class,
    $
    \mathcal{H}_{d}^{\beta} \coloneqq \{ f: \|f\|_{\mathcal{H}_{d}^{\beta}} \leq 1 \},
    $
    for the discriminator class, 
    where the H\"{o}lder norm is defined as
    $$
    \|f\|_{\mathcal{H}_{d}^{\beta}} = \sum_{\|\bm{\alpha}\|_1=\lfloor \beta \rfloor} \max_{x,y \in \Omega, x \neq y} \frac{|f^{(\bm{\alpha})}(x)-f^{(\bm{\alpha})}(y)|}{\|x-y\|_2^{\beta - \lfloor \beta \rfloor}} + \sum_{\|\bm{\alpha}\|_1 \leq \lfloor \beta \rfloor} \max_{x \in \Omega} |f^{(\bm{\alpha})}(x)|,
    $$
    where $f^{(\bm{\alpha})}=\frac{\partial^{\|\bm{\alpha}\|_1} f}{\partial x_1^{\alpha_1} \partial x_2^{\alpha_2} \ldots \partial x_d^{\alpha_d}}$ for $\bm{\alpha} = (\alpha_1, \alpha_2, \ldots, \alpha_d)^{\top}$.
    That is, the H\"{o}lder-IPM is defined as $d_{\mathcal{H}_{d}^{\beta}}(\mathcal{P}, \mathcal{Q}) \coloneqq \sup_{f \in {\mathcal{H}_{d}^{\beta}}} | \mathbb{E}_{\mathbf{X}} f(\mathbf{X})-\mathbb{E}_{\mathbf{Y}} f(\mathbf{Y})|$.
    \citet{Wang2023manifold} considers the H\"{o}lder-IPM for statistical testing problems.
    \citet{Wang2023manifold} states that practical applications of the H\"{o}lder-IPM become feasible through the approximation of H\"{o}lder smooth functions by the use of neural networks because it is well known that neural networks are good at approximating H\"{o}lder smooth functions \citep{ohn2019smooth, Yang2024}.
    However, the choice of neural network architectures is difficult.
\end{paragraph}

\begin{paragraph}{Sigmoid-IPM (SIPM)}
    The SIPM introduced by \citet{pmlr-v162-kim22b} is an IPM with a parametric discriminator class, where the discriminator class $\mathcal{F}_{\textup{sig}}$ is given as 
    $$
    \mathcal{F}_{\textup{sig}} \coloneqq \left\{ f: f(\mathbf{z}) = \operatorname{sig}\left(\boldsymbol{\theta}^{\top}\mathbf{z} + \mathbf{\mu}\right), \, \boldsymbol{\theta} \in \mathbb{R}^{d}, \, \mathbf{\mu} \in \mathbb{R} \right\}.
    $$ 
    That is, the SIPM is defined as $d_{\mathcal{F}_{\textup{sig}}}(\mathcal{P}, \mathcal{Q}) \coloneqq \sup_{f \in \mathcal{F}_{\textup{sig}}} | \mathbb{E}_{\mathbf{X}} f(\mathbf{X})-\mathbb{E}_{\mathbf{Y}} f(\mathbf{Y})|$.
    The development and application of the SIPM are well-studied in fair representation learning (FRL) \citep{pmlr-v162-kim22b} and  causal inference \citep{pmlr-v202-kong23d}. An interesting theoretical property of the SIPM obtained by \citet{pmlr-v162-kim22b} is that the SIPM is equivalent to the IPM whose discriminator class is the set of all infinitely differentiable functions. That is, even if the SIPM uses a parametric discriminator class,
    it covers a nonparametric discriminator class.
\end{paragraph}


\section{ReLU-IPM: New parametric IPM}
\label{sec:ReLUIPM}

Our goal is to develop an IPM with a parametric discriminator class $\mathcal{F}$ that is simple but effective. For $\mathcal{F}$ to be a suitable choice, it must be computationally efficient while maintaining sufficient flexibility to capture a broad range of discriminators larger than $\mathcal{F}.$
An example of such an IPM is the SIPM.
In this section, we propose a new parametric discriminator class that is similar to $\mathcal{F}_{\textup{sig}}$, but the sigmoid activation function is replaced by the ReLU activation function.
In the context of deep neural networks, it is well-known that the ReLU activation not only facilitates faster and more effective training of deep neural networks, especially on large and complex datasets, but also mitigates the vanishing gradient problem. Moreover, it exhibits several advantageous properties, including \textit{sparsity}, \textit{scale-invariance}, and \textit{efficient computation} \citep{Glorot2011, Ramachandran2017}.

\subsection{Definition}
\label{sec:def_relu}

We consider the discriminator class $\mathcal{F}_{\textup{ReLU}}$ defined on the unit ball $\mathbb{B}^d = \{\mathbf{z} \in \mathbb{R}^d : \|\mathbf{z}\|_2 \leq 1 \}$ of $\mathbb{R}^d$ given as
\begin{equation}
    \mathcal{F}_{\textup{ReLU}} \coloneqq \left\{ f :\mathbb{B}^d \rightarrow \mathbb{R} \, | \, f(\mathbf{z}) = (\boldsymbol{\theta}^{\top}\mathbf{z} + \mu)_+, \, \mu \in [-1,1], \, \boldsymbol{\theta} \in \mathbb{S}^{d-1} \right\},
\end{equation}
where  $\mathbb{S}^{d-1} = \{\mathbf{z} \in \mathbb{R}^d : \|\mathbf{z}\|_2 = 1 \}$ is the unit sphere in $\mathbb{R}^{d}.$ 
Then we define the ReLU-IPM as
\begin{equation*}
    d_{\mathcal{F}_{\textup{ReLU}}}(\mathcal{P},\mathcal{Q}) = \sup_{f \in \mathcal{F}_{\textup{ReLU}}} \Big| \mathbb{E}_{\mathbf{X} \sim \mathcal{P}} f(\mathbf{X})-\mathbb{E}_{\mathbf{Y} \sim \mathcal{Q}} f(\mathbf{Y}) \Big|
\end{equation*}
for the two probability measures $\mathcal{P}$ and $\mathcal{Q}$ defined on $\mathbb{B}^d$.

By replacing the population expectations with the empirical counterparts based on the samples $\mathbf{X}^{n}=\{\mathbf{X}_1,\ldots,\mathbf{X}_n\}$ and $\mathbf{Y}^{m}=\{\mathbf{Y}_1,\ldots,\mathbf{Y}_m\}$, it is natural to define the empirical estimate of the ReLU-IPM as
\begin{equation} \label{ReLUIPM}
d_{\mathcal{F}_{\textup{ReLU}}}(\widehat{\mathcal{P}}_n, \widehat{\mathcal{Q}}_m) \coloneqq \sup_{f \in \mathcal{F}_{\textup{ReLU}}} \left| \frac{1}{n} \sum_{i=1}^n f(\mathbf{X}_i) - \frac{1}{m} \sum_{i=1}^m f(\mathbf{Y}_i) \right|,
\end{equation}
where $\widehat{\mathcal{P}}_n(\cdot) = \frac{1}{n} \sum_{i=1}^n \delta_{\mathbf{X}_i}(\cdot)$ and $\widehat{\mathcal{Q}}_m(\cdot) = \frac{1}{m} \sum_{i=1}^m \delta_{\mathbf{Y}_i}(\cdot)$.
Here, $\delta_x$ denotes the Dirac measure on $x$.

\subsection{Theoretical Studies of ReLU-IPM}
\label{sec:theo_relu}

The first theoretical property is that the ReLU-IPM can perfectly distinguish any two probability measures which is stated in the following proposition. The proof of Proposition \ref{prop:discriminative} is deferred to Appendix \ref{pf:prop1}.

\begin{proposition} \label{prop:discriminative}
$d_{\mathcal{F}_{\textup{ReLU}}}(\mathcal{P}, \mathcal{Q})=0 \text{ if and only if } \mathcal{P} \equiv \mathcal{Q}$ for two probability measures $\mathcal{P}$ and $\mathcal{Q}$.
\end{proposition}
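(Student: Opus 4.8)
The plan is to treat the two implications separately. The \emph{if} direction is immediate: if $\mathcal{P}\equiv\mathcal{Q}$, then $\mathbb{E}_{\mathcal{P}}f=\mathbb{E}_{\mathcal{Q}}f$ for every bounded measurable $f$, in particular for every $f\in\mathcal{F}_{\textup{ReLU}}$, so the supremum defining $d_{\mathcal{F}_{\textup{ReLU}}}(\mathcal{P},\mathcal{Q})$ is zero. All the content is in the converse.

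Suppose $d_{\mathcal{F}_{\textup{ReLU}}}(\mathcal{P},\mathcal{Q})=0$, i.e.\ $\mathbb{E}_{\mathbf{X}\sim\mathcal{P}}(\boldsymbol{\theta}^{\top}\mathbf{X}+\mu)_+=\mathbb{E}_{\mathbf{Y}\sim\mathcal{Q}}(\boldsymbol{\theta}^{\top}\mathbf{Y}+\mu)_+$ for every $\boldsymbol{\theta}\in\mathbb{S}^{d-1}$ and every $\mu\in[-1,1]$. I would first reduce to one dimension: fix $\boldsymbol{\theta}\in\mathbb{S}^{d-1}$ and let $\nu_{\mathcal{P}},\nu_{\mathcal{Q}}$ be the laws of $\boldsymbol{\theta}^{\top}\mathbf{X}$ and $\boldsymbol{\theta}^{\top}\mathbf{Y}$. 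Since $\mathcal{P},\mathcal{Q}$ are supported on $\mathbb{B}^d$ and $\|\boldsymbol{\theta}\|_2=1$, Cauchy--Schwarz shows $\nu_{\mathcal{P}},\nu_{\mathcal{Q}}$ are probability measures on $[-1,1]$, and the hypothesis reads $\int_{[-1,1]}(t+\mu)_+\,d\nu_{\mathcal{P}}(t)=\int_{[-1,1]}(t+\mu)_+\,d\nu_{\mathcal{Q}}(t)$ for all $\mu\in[-1,1]$. The key step is to recover $\nu_{\mathcal{P}}=\nu_{\mathcal{Q}}$ from this one-parameter family of identities. I would argue via convexity: the map $h_{\nu}(\mu)=\int(t+\mu)_+\,d\nu(t)$ is convex on $[-1,1]$, being an average of the convex maps $\mu\mapsto(t+\mu)_+$, and an interchange of derivative and integral (valid for convex functions, since the difference quotients are monotone) gives that its right derivative at $\mu$ equals $\nu([-\mu,1])$. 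Two convex functions that agree on an interval have identical right derivatives throughout the interior, so $\nu_{\mathcal{P}}([-\mu,1])=\nu_{\mathcal{Q}}([-\mu,1])$ for all $\mu\in(-1,1)$; writing $x=-\mu$, the survival functions of $\nu_{\mathcal{P}}$ and $\nu_{\mathcal{Q}}$ coincide on $(-1,1)$, and since both measures have total mass $1$ on $[-1,1]$, a routine limiting argument at the endpoints upgrades this to $\nu_{\mathcal{P}}=\nu_{\mathcal{Q}}$.

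At this point every one-dimensional projection of $\mathcal{P}$ equals the corresponding projection of $\mathcal{Q}$: first for unit vectors, and then for all $\mathbf{a}\in\mathbb{R}^d$ by rescaling, since $\mathbf{a}^{\top}\mathbf{X}=\|\mathbf{a}\|_2\,(\boldsymbol{\theta}^{\top}\mathbf{X})$ for $\boldsymbol{\theta}=\mathbf{a}/\|\mathbf{a}\|_2$ and multiplication by a constant preserves equality in law. I would then close the argument with the Cram\'er--Wold device: equality in law of $\mathbf{a}^{\top}\mathbf{X}$ and $\mathbf{a}^{\top}\mathbf{Y}$ for every $\mathbf{a}$ forces $\mathbb{E}\,e^{\mathrm{i}\mathbf{a}^{\top}\mathbf{X}}=\mathbb{E}\,e^{\mathrm{i}\mathbf{a}^{\top}\mathbf{Y}}$ for all $\mathbf{a}\in\mathbb{R}^d$, i.e.\ $\mathcal{P}$ and $\mathcal{Q}$ have the same characteristic function, hence $\mathcal{P}\equiv\mathcal{Q}$.

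The only genuinely delicate point is the passage from ``$h_{\nu_{\mathcal{P}}}\equiv h_{\nu_{\mathcal{Q}}}$ on $[-1,1]$'' to ``$\nu_{\mathcal{P}}=\nu_{\mathcal{Q}}$'': atoms of $\nu$ make $h_\nu$ merely piecewise affine and put jumps in the survival function, so one should not pretend $h_\nu$ is $C^1$ but instead work with one-sided derivatives (equivalently, subdifferentials) of convex functions, and the endpoints $\mu=\pm1$ must be handled separately (for instance $\mu=1$ recovers only the first moment of the projection, which is already subsumed by the rest). Everything else --- the reduction to projections, the rescaling, and Cram\'er--Wold --- is standard.
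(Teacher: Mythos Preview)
Your proposal is correct and follows essentially the same approach as the paper: reduce to one-dimensional projections, recover the distribution function of each projection by differentiating $\mu\mapsto\int(t+\mu)_+\,d\nu(t)$, and then invoke Cram\'er--Wold. The paper carries out the differentiation via an explicit finite-difference bound $\bigl|\mathbb{P}(\mathbf{c}^{\top}\mathbf{X}>t)-\tfrac{1}{\delta}\mathbb{E}\Delta_\delta(\mathbf{X})\bigr|\le\mathbb{P}(t-\delta<\mathbf{c}^{\top}\mathbf{X}\le t)$ and a separate right-continuity argument for atoms, whereas your convexity/right-derivative framing packages the same computation more cleanly and handles atoms in one stroke; the underlying mathematics is identical.
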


\begin{remark} \label{remark:1}
    It would not be very surprising that a parametric IPM perfectly distinguishes any two probability measures.
    The characteristic function distance (CFD), which is the IPM with discriminator class
$\mathcal{F}_{\textup{cf}} = { e^{i \mathbf{t}^{\top}\mathbf{x}}: \mathbf{t} \in \mathbb{R}^d }$, is introduced by \citet{ansari2020characteristic}.    
    It is well known that $d_{\mathcal{F}_{\textup{cf}}}(\mathcal{P}, \mathcal{Q})=0$ if and only if $\mathcal{P} = \mathcal{Q}.$
    However, \citet{mccullagh1994does} notices that $d_{\mathcal{F}_{\textup{cf}}}$ would not be a useful metric between two probability measures.
\end{remark}

The following theorem is the main result of this paper which describes the relationship between the ReLU-IPM and H\"{o}lder-IPM.
In Theorem \ref{thm:holder} below, we prove that the H\"{o}lder-IPM is upper bounded by the ReLU-IPM. 
This result implies that the ReLU-IPM can be used as a surrogate for the H\"{o}lder-IPM. That is, we can reduce the H\"{o}lder-IPM by reducing the ReLU-IPM. The proof of Theorem \ref{thm:holder} is presented to Appendix \ref{pf:holder}.

\begin{theorem}[ReLU-IPM bounds H\"{o}lder-IPM] \label{thm:holder}
    Let $d \in \mathbb{N}$ and $\beta > 0$ be given. For some positive constant $c$ that depends on $d$ and $\beta$, we have \\
    (1) if $\beta < \frac{d+3}{2}$, 
    $$
    d_{\mathcal{H}^{\beta}_d}(\mathcal{P},\mathcal{Q}) \leq c \cdot d_{\mathcal{F}_{\textup{ReLU}}}(\mathcal{P},\mathcal{Q})^{\frac{2\beta}{d+3}},
    $$
    and 
    (2) if $\beta > \frac{d+3}{2}$,
    $$
    d_{\mathcal{H}^{\beta}_d}(\mathcal{P},\mathcal{Q}) \leq c \cdot d_{\mathcal{F}_{\textup{ReLU}}}(\mathcal{P},\mathcal{Q}),
    $$
    for any two probability measures $\mathcal{P}$ and $\mathcal{Q}$ in $\mathbb{B}^d$.
\end{theorem}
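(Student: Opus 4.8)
The plan is to route the bound through the reproducing kernel canonically attached to $\mathcal{F}_{\textup{ReLU}}$, namely $k_{\textup{ReLU}}(\mathbf{x},\mathbf{y}) \coloneqq \mathbb{E}_{\boldsymbol{\theta},\mu}\big[(\boldsymbol{\theta}^{\top}\mathbf{x}+\mu)_+(\boldsymbol{\theta}^{\top}\mathbf{y}+\mu)_+\big]$ with $\boldsymbol{\theta}$ uniform on $\mathbb{S}^{d-1}$ and $\mu$ uniform on $[-1,1]$, independently. Two facts drive everything. (a) Since $d_{\mathcal{F}_{\textup{ReLU}}}$ is a \emph{supremum} over $(\boldsymbol{\theta},\mu)$ whereas the MMD with $k_{\textup{ReLU}}$ is an $L^{2}$-average over the same parameters against a \emph{probability} measure,
\[
\operatorname{MMD}_{k_{\textup{ReLU}}}(\mathcal{P},\mathcal{Q}) = \Big(\mathbb{E}_{\boldsymbol{\theta},\mu}\Big|\int (\boldsymbol{\theta}^{\top}\mathbf{z}+\mu)_+ \,d(\mathcal{P}-\mathcal{Q})(\mathbf{z})\Big|^{2}\Big)^{1/2} \le d_{\mathcal{F}_{\textup{ReLU}}}(\mathcal{P},\mathcal{Q}).
\]
(b) The RKHS of $k_{\textup{ReLU}}$ is norm-equivalent to the Sobolev space $H^{(d+3)/2}$. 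I would establish (b) by lifting $\mathbb{B}^{d}$ onto a spherical cap of $\mathbb{S}^{d}\subset\mathbb{R}^{d+1}$ via $\mathbf{z}\mapsto(\mathbf{z},1)/\sqrt{\|\mathbf{z}\|^{2}+1}$, under which a biased ReLU ridge function on $\mathbb{B}^{d}$ becomes, up to a smooth bounded strictly-positive factor, a homogeneous ReLU ridge function on $\mathbb{S}^{d}$, and then reading off the Gegenbauer/Funk--Hecke expansion of the resulting zonal kernel on $\mathbb{S}^{d}$, whose degree-$j$ eigenvalue behaves like $j^{-(d+3)}$. Consequently $\|f\|_{\mathcal{H}_{k_{\textup{ReLU}}}}^{2}\asymp\sum_{j} j^{\,d+3}\|f_{j}\|_{L^{2}}^{2}$, where $f_{j}$ is the degree-$j$ spherical-harmonic component of (the lift of) $f$.

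Granting (a)--(b), both parts follow from one cutoff/interpolation argument applied to an arbitrary $f$ with $\|f\|_{\mathcal{H}^{\beta}_{d}}\le1$; after lifting and extending, $\|f\|_{H^{\beta}(\mathbb{S}^{d})}\lesssim 1$ since $\mathcal{H}^{\beta}_{d}\hookrightarrow H^{\beta}$. Split $f=f^{\le J}+f^{>J}$ by a smoothed truncation of the spherical-harmonic expansion at degree $J$. For the low-frequency part, RKHS duality and (a) give
\[
\Big|\int f^{\le J}\, d(\mathcal{P}-\mathcal{Q})\Big| \le \|f^{\le J}\|_{\mathcal{H}_{k_{\textup{ReLU}}}}\,\operatorname{MMD}_{k_{\textup{ReLU}}}(\mathcal{P},\mathcal{Q}) \lesssim J^{\,(d+3)/2-\beta}\, d_{\mathcal{F}_{\textup{ReLU}}}(\mathcal{P},\mathcal{Q}),
\]
using $\|f^{\le J}\|_{\mathcal{H}_{k_{\textup{ReLU}}}}^{2}\lesssim J^{\,d+3-2\beta}\sum_{j}j^{\,2\beta}\|f_{j}\|_{L^{2}}^{2}\lesssim J^{\,d+3-2\beta}$ from (b). For the high-frequency part, the Jackson-type estimate for approximating $C^{\beta}$ functions by spherical polynomials yields $\|f^{>J}\|_{\infty}\lesssim J^{-\beta}$, hence $|\int f^{>J}\,d(\mathcal{P}-\mathcal{Q})|\le 2\|f^{>J}\|_{\infty}\lesssim J^{-\beta}$. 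Summing the two and choosing $J\asymp d_{\mathcal{F}_{\textup{ReLU}}}(\mathcal{P},\mathcal{Q})^{-2/(d+3)}$ balances the terms and, after taking the supremum over $f$, gives part (1). When $\beta>(d+3)/2$ no truncation is needed: $\mathcal{H}^{\beta}_{d}\hookrightarrow H^{\beta}\hookrightarrow H^{(d+3)/2}\cong\mathcal{H}_{k_{\textup{ReLU}}}$, so $|\int f\,d(\mathcal{P}-\mathcal{Q})|\le\|f\|_{\mathcal{H}_{k_{\textup{ReLU}}}}\operatorname{MMD}_{k_{\textup{ReLU}}}(\mathcal{P},\mathcal{Q})\lesssim d_{\mathcal{F}_{\textup{ReLU}}}(\mathcal{P},\mathcal{Q})$, which is part (2); this also explains the threshold, $(d+3)/2$ being precisely the smoothness index of the RKHS of $k_{\textup{ReLU}}$, where the two stated exponents agree.

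The main obstacle is fact (b): pinning down the $j^{-(d+3)}$ decay of the ReLU zonal-kernel eigenvalues on $\mathbb{S}^{d}$ — equivalently, estimating $\int_{-1}^{1}\kappa(t)\,C_{j}(t)(1-t^{2})^{(d-2)/2}\,dt$ for the zonal profile $\kappa$ of $k_{\textup{ReLU}}$ and Gegenbauer polynomials $C_{j}$. The ReLU's parity (only even degrees, plus degree one, survive) and the exact power both enter here, and a cruder slice-by-slice ridgelet bound that treats each direction $\boldsymbol{\theta}$ independently in sup-norm only yields the weaker exponent $\beta/(d+2)$; recovering the sharp $2\beta/(d+3)$ genuinely needs the $L^{2}$ geometry on the sphere (an $\ell^{2}$- rather than $\ell^{1}$-type sum over harmonic degrees). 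An essentially equivalent but more hands-on alternative replaces the RKHS duality by an explicit ridgelet/Radon representation $f^{\le J}(\mathbf{z})=\int (\boldsymbol{\theta}^{\top}\mathbf{z}+\mu)_+\,d\nu_{J}(\boldsymbol{\theta},\mu)+(\text{affine in }\mathbf{z})$ with $\|\nu_{J}\|_{\mathrm{TV}}\lesssim J^{(d+3)/2-\beta}$; here the affine term is harmless because constants vanish against $\mathcal{P}-\mathcal{Q}$ and $\boldsymbol{\theta}^{\top}\mathbf{z}=\tfrac12[(\boldsymbol{\theta}^{\top}\mathbf{z}+1)_+-(-\boldsymbol{\theta}^{\top}\mathbf{z}+1)_+]$ already lies in the span of $\mathcal{F}_{\textup{ReLU}}$ on $\mathbb{B}^{d}$, but that version must additionally reconcile the variation ($\gamma_{1}$) and RKHS ($\gamma_{2}$) norms. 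The remaining nuisances — the ball-to-sphere lift and the $\varepsilon$-loss in $\mathcal{H}^{\beta}_{d}\hookrightarrow H^{\beta}$ at critical exponents, handled with Besov spaces — affect only the constant $c$.
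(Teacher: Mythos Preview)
Your approach is genuinely different from the paper's. The paper invokes an approximation lemma (Corollary~2.4 of Yang~2024): every $h\in\mathcal{H}_d^\beta$ admits $f_1,\dots,f_L\in\mathcal{F}_{\textup{ReLU}}$ and scalars with $\sum_i|a_i|\le M$ such that $\|h-\sum_i a_i f_i\|_\infty\lesssim L^{-(d+3)/(2d)}$ when $\beta>(d+3)/2$ (with $M$ a constant depending only on $d,\beta$), and $\|h-\sum_i a_i f_i\|_\infty\lesssim L^{-\beta/d}\vee M^{-2\beta/(d+3-2\beta)}$ when $\beta<(d+3)/2$. One then writes $|\int h\,d(\mathcal{P}-\mathcal{Q})|\le M\cdot d_{\mathcal{F}_{\textup{ReLU}}}(\mathcal{P},\mathcal{Q})+(\text{approximation error})$ and either sends $L\to\infty$ (case~2) or sets $M=d_{\mathcal{F}_{\textup{ReLU}}}(\mathcal{P},\mathcal{Q})^{-(d+3-2\beta)/(d+3)}$ (case~1). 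This is exactly the ``explicit ridgelet/Radon representation with $\|\nu_J\|_{\mathrm{TV}}$-control'' that you list as your \emph{alternative}; the paper takes what you call the $\gamma_1$ route, using Yang's result as a black box.

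Your primary $\gamma_2$/RKHS route is more conceptual: it identifies $(d+3)/2$ as the Sobolev index of the ReLU random-feature RKHS and replaces the bespoke approximation lemma by standard zonal-kernel spectral theory plus a Jackson bound. The cost is that fact~(b) is itself real work---comparable in depth to Yang's lemma---and the parity zeros you flag are a genuine nuisance: with $\tilde{\boldsymbol{\theta}}$ uniform on $\mathbb{S}^d$ the arc-cosine kernel has vanishing eigenvalues at odd degrees $\ge 3$, so $f^{\le J}\notin\mathcal{H}_{k_{\textup{ReLU}}}$ unless you first extend $f$ from the cap to $\mathbb{S}^d$ \emph{evenly} (which is legitimate here since the cap $\{\tilde z_{d+1}\ge 1/\sqrt{2}\}$ and its antipode are disjoint, so a smooth even extension exists). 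You have anticipated both this and the non-uniform pushforward on $\tilde{\boldsymbol{\theta}}$; neither breaks the argument, but they are not mere constants-only nuisances and would need to be written out. Both routes reach the same exponents; the paper's is shorter once Yang's lemma is granted, yours is more self-contained and explains structurally why the threshold sits at $(d+3)/2$.
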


\begin{remark} 
    We do not consider the case $\beta = \frac{d+3}{2}$, since the upper bound depends on whether $\beta$ is an even integer. Refer to Appendix \ref{pf:holder} for discussions.
\end{remark}

By the definition of IPMs, it is obvious that an IPM with a larger discriminator class has a larger value than IPMs with smaller discriminator classes.
In fact, from $\mathcal{F}_{\textup{ReLU}} \subset \mathcal{H}^{1}_d$ we have
$d_{\mathcal{F}_{\textup{ReLU}}}(\mathcal{P}, \mathcal{Q}) \leq d_{\mathcal{H}_d^{\beta}}(\mathcal{P}, \mathcal{Q})$ for any $\beta \leq 1.$
In contrast, Theorem \ref{thm:holder} provides an opposite result. That is,
a small value of the ReLU-IPM  guarantees a small value of the H\"{o}lder-IPM.
It is remarkable that the IPM utilizing the parametric class $\mathcal{F}_{\textup{ReLU}}$, which is significantly smaller than the nonparametric class $\mathcal{H}^{\beta}_d$, has such a property.

Since the ReLU-IPM uses a parametric discriminator class, it has many desirable statistical properties.
One of such desirable properties is a fast convergence of the empirical ReLU-IPM to its population version
as stated in the following theorem, whose proof is given in Appendix \ref{pf:conv_relu}.

\begin{theorem}[Rate of convergence of the empirical ReLU-IPM] \label{thm:conv_relu}
    Let $\mathcal{P}, \mathcal{Q}, \widehat{\mathcal{P}}_n, \widehat{\mathcal{Q}}_n$ be defined on $\mathbb{B}^d$. Then, for a constant $c$,
    $$
    \Big| d_{\mathcal{F}_{\textup{ReLU}}}(\mathcal{P},\mathcal{Q}) - d_{\mathcal{F}_{\textup{ReLU}}}(\widehat{\mathcal{P}}_n, \widehat{\mathcal{Q}}_n) \Big| \leq \frac{c}{\sqrt{n}}  + \epsilon
    $$
    with probability at least $1 - 2 \exp \left(-\frac{\epsilon^2 n}{16} \right)$. Furthermore, $d_{\mathcal{F}_{\textup{ReLU}}}(\widehat{\mathcal{P}}_n, \widehat{\mathcal{Q}}_n)$ converges almost surely to $d_{\mathcal{F}_{\textup{ReLU}}}(\mathcal{P},\mathcal{Q})$, i.e.,
    $$
    \Big| d_{\mathcal{F}_{\textup{ReLU}}}(\mathcal{P},\mathcal{Q}) - d_{\mathcal{F}_{\textup{ReLU}}}(\widehat{\mathcal{P}}_n, \widehat{\mathcal{Q}}_n) \Big| \overset{a.s.}{\longrightarrow} 0 \quad \text{as} \quad n \to \infty.
    $$
\end{theorem}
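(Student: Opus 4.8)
The plan is to follow the standard template for the empirical estimation of IPMs: split the error into a deterministic bias term and a mean-zero fluctuation term, bound the first by symmetrization and the second by McDiarmid's inequality. Write $g_n := d_{\mathcal{F}_{\textup{ReLU}}}(\widehat{\mathcal{P}}_n,\widehat{\mathcal{Q}}_n)$. Since $d_{\mathcal{F}_{\textup{ReLU}}}$ is symmetric and obeys the triangle inequality on probability measures (from the supremum definition, $d_{\mathcal{F}}(\mu,\nu)=\sup_f|\mu f-\nu f|\le \sup_f|\mu f-\rho f|+\sup_f|\rho f-\nu f|$), we get $\big|d_{\mathcal{F}_{\textup{ReLU}}}(\mathcal{P},\mathcal{Q})-g_n\big|\le d_{\mathcal{F}_{\textup{ReLU}}}(\mathcal{P},\widehat{\mathcal{P}}_n)+d_{\mathcal{F}_{\textup{ReLU}}}(\mathcal{Q},\widehat{\mathcal{Q}}_n)$, and hence, taking expectations and using $|\mathbb{E}Z|\le\mathbb{E}|Z|$, also $\big|d_{\mathcal{F}_{\textup{ReLU}}}(\mathcal{P},\mathcal{Q})-\mathbb{E}g_n\big|\le \mathbb{E}\,d_{\mathcal{F}_{\textup{ReLU}}}(\mathcal{P},\widehat{\mathcal{P}}_n)+\mathbb{E}\,d_{\mathcal{F}_{\textup{ReLU}}}(\mathcal{Q},\widehat{\mathcal{Q}}_n)$. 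It thus remains to (i) bound each expected one-sample deviation by $c/\sqrt n$, and (ii) show $g_n$ concentrates around its mean with the stated sub-Gaussian tail.

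For (i), note first the constant envelope: for $f\in\mathcal{F}_{\textup{ReLU}}$ and $\mathbf{z}\in\mathbb{B}^d$, $0\le f(\mathbf{z})=(\boldsymbol{\theta}^\top\mathbf{z}+\mu)_+\le\|\boldsymbol{\theta}\|_2\|\mathbf{z}\|_2+|\mu|\le 2$. By the usual symmetrization inequality, $\mathbb{E}\,d_{\mathcal{F}_{\textup{ReLU}}}(\mathcal{P},\widehat{\mathcal{P}}_n)=\mathbb{E}\sup_{f\in\mathcal{F}_{\textup{ReLU}}}\big|\mathbb{E}_{\mathcal{P}}f-\tfrac1n\sum_{i=1}^n f(\mathbf{X}_i)\big|\le 2\,\mathcal{R}_n(\mathcal{F}_{\textup{ReLU}})$, where $\mathcal{R}_n$ is the Rademacher complexity. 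The essential observation is that $\mathcal{F}_{\textup{ReLU}}$ is a genuinely \emph{small} class: it is the image of the compact parameter set $\mathbb{S}^{d-1}\times[-1,1]$ under $(\boldsymbol{\theta},\mu)\mapsto(\boldsymbol{\theta}^\top(\cdot)+\mu)_+$, and since $(\cdot)_+$ is $1$-Lipschitz, $\big\|(\boldsymbol{\theta}^\top(\cdot)+\mu)_+-(\boldsymbol{\theta}'^\top(\cdot)+\mu')_+\big\|_{\infty,\mathbb{B}^d}\le\|\boldsymbol{\theta}-\boldsymbol{\theta}'\|_2+|\mu-\mu'|$; hence its $\varepsilon$-covering number in sup-norm is $O((C/\varepsilon)^d)$ for $C=C(d)$. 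Since the covering number is only polynomial in $1/\varepsilon$, Dudley's entropy integral converges and yields $\mathcal{R}_n(\mathcal{F}_{\textup{ReLU}})\lesssim\sqrt{d/n}$ with no logarithmic factor, so $\big|d_{\mathcal{F}_{\textup{ReLU}}}(\mathcal{P},\mathcal{Q})-\mathbb{E}g_n\big|\le c/\sqrt n$ for some $c=c(d)$. (Equivalently, $\{\mathbf{z}\mapsto\boldsymbol{\theta}^\top\mathbf{z}+\mu\}$ is a $(d+1)$-dimensional vector space of functions, hence VC-subgraph, and composition with the fixed nondecreasing map $(\cdot)_+$ preserves the VC-subgraph property, so the uniform-entropy bound for bounded VC-subgraph classes delivers the same $O(\sqrt{d/n})$ rate.)

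For (ii), apply McDiarmid's bounded-differences inequality to $g_n$ as a function of the $2n$ independent points $\mathbf{X}_1,\dots,\mathbf{X}_n,\mathbf{Y}_1,\dots,\mathbf{Y}_n$. Replacing a single point changes the corresponding empirical average $\tfrac1n\sum f(\mathbf{X}_i)$ by at most $\tfrac1n|f(\mathbf{X}_j)-f(\mathbf{X}_j')|\le\tfrac4n$ uniformly over $f\in\mathcal{F}_{\textup{ReLU}}$ (using $\|f\|_\infty\le 2$), hence changes $g_n$ by at most $4/n$; with $2n$ coordinates of bounded difference $4/n$, McDiarmid gives $\mathbb{P}\big(|g_n-\mathbb{E}g_n|\ge\epsilon\big)\le 2\exp\big(-2\epsilon^2/(2n(4/n)^2)\big)=2\exp(-\epsilon^2 n/16)$. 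Combining with (i) through $\big|d_{\mathcal{F}_{\textup{ReLU}}}(\mathcal{P},\mathcal{Q})-g_n\big|\le\big|d_{\mathcal{F}_{\textup{ReLU}}}(\mathcal{P},\mathcal{Q})-\mathbb{E}g_n\big|+|\mathbb{E}g_n-g_n|\le c/\sqrt n+|\mathbb{E}g_n-g_n|$ gives the claimed high-probability bound. For the almost-sure statement, take $\mathbf{X}_1,\mathbf{X}_2,\dots$ and $\mathbf{Y}_1,\mathbf{Y}_2,\dots$ to be i.i.d. sequences on a common probability space; fixing $\epsilon>0$, $\sum_{n\ge1}\mathbb{P}\big(|d_{\mathcal{F}_{\textup{ReLU}}}(\mathcal{P},\mathcal{Q})-g_n|>c/\sqrt n+\epsilon\big)\le\sum_{n\ge1}2\exp(-\epsilon^2 n/16)<\infty$, so Borel--Cantelli gives $\limsup_n|d_{\mathcal{F}_{\textup{ReLU}}}(\mathcal{P},\mathcal{Q})-g_n|\le\epsilon$ almost surely; taking $\epsilon=1/k$ and intersecting over $k\in\mathbb{N}$ yields $d_{\mathcal{F}_{\textup{ReLU}}}(\widehat{\mathcal{P}}_n,\widehat{\mathcal{Q}}_n)\to d_{\mathcal{F}_{\textup{ReLU}}}(\mathcal{P},\mathcal{Q})$ almost surely.

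The only non-routine ingredient is the entropy (equivalently VC-subgraph) bound for $\mathcal{F}_{\textup{ReLU}}$ in step (i): establishing that this parametric class is small enough to have $O(1/\sqrt n)$ Rademacher complexity and tracking the (necessarily $d$-dependent) constant. The triangle inequality for IPMs, symmetrization, McDiarmid, and Borel--Cantelli are all standard, so I expect the bookkeeping of the Lipschitz modulus of $(\boldsymbol{\theta},\mu)\mapsto(\boldsymbol{\theta}^\top(\cdot)+\mu)_+$ on $\mathbb{B}^d$ to be the main place where care is needed.
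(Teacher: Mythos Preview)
Your overall architecture---McDiarmid for concentration, symmetrization to reduce the bias term to a Rademacher complexity, Borel--Cantelli for the almost-sure statement---matches the paper's exactly, down to the bounded-differences constant $4/n$ and the resulting $\exp(-\epsilon^2 n/16)$ tail. The one substantive difference is how you control $\mathcal{R}_n(\mathcal{F}_{\textup{ReLU}})$. You go through sup-norm covering numbers (equivalently, the VC-subgraph property) and Dudley's integral, which yields $\mathcal{R}_n(\mathcal{F}_{\textup{ReLU}})\lesssim\sqrt{d/n}$ with a constant depending on $d$; you even flag the $d$-dependence as ``necessarily'' present. The paper takes a sharper route: since $(\cdot)_+$ is $1$-Lipschitz with value $0$ at $0$, the Ledoux--Talagrand contraction inequality peels off the ReLU at the cost of a factor $2$, reducing to the linear class $\{\mathbf{z}\mapsto\boldsymbol{\theta}^\top\mathbf{z}+\mu:\boldsymbol{\theta}\in\mathbb{S}^{d-1},\mu\in[-1,1]\}$. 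A direct Cauchy--Schwarz bound $\mathbb{E}_\varepsilon\big\|\sum_i\varepsilon_i\mathbf{X}_i\big\|_2\le\sqrt{\sum_i\|\mathbf{X}_i\|_2^2}\le\sqrt n$ together with $\mathbb{E}\big|\sum_i\varepsilon_i\big|\le\sqrt{2n/\pi}$ then gives the explicit, \emph{dimension-free} bound $\widehat{\mathcal{R}}_n(\mathcal{F}_{\textup{ReLU}})\le\tfrac{2}{\sqrt n}\big(1+\sqrt{2/\pi}\big)$. Your argument is correct and more generic (it would apply to any Lipschitz-parametrized class over a compact parameter set), but the contraction argument is shorter, avoids entropy bookkeeping entirely, and delivers the dimension-independent constant that the paper emphasizes immediately after the theorem.
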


Note that the convergence rate is  parametric and not depending on the dimension $d$ of data, which is a highly desirable property. In contrast, the convergence rate of the empirical Wasserstein distance is $\mathcal{O}(n^{-\frac{1}{2}} \log n)$ when $d=1$ but $\mathcal{O}(n^{-\frac{1}{d+1}})$ when $d \geq 2$ \citep{Sriperumbudur2009IPM, sriperumbudur2012}.

\subsection{An algorithm for computing the ReLU-IPM}
\label{sec:com_relu}

For the computation of the ReLU-IPM, we use the gradient projection method. That is, given the current values of parameters $\mu^{\textup{old}}$ and $\boldsymbol{\theta}^{\textup{old}}$, we take the gradients of $\mu^{\textup{old}}$ and $\boldsymbol{\theta}^{\textup{old}}$, and update the parameters to $\mu^{\textup{new}}$ and $\boldsymbol{\theta}^{\textup{new}}$. We then keep $\mu^{\textup{new}}$ fixed but normalize $\boldsymbol{\theta}^{\textup{new}}$ to have the unit norm.
The gradient projection method often gets stuck in a local optimum.
To overcome this difficulty, we randomly select $K$ initial parameters $\{\mathbf{v}_i\}_{i=1}^K :=\{(\mu_i, \boldsymbol{\theta}_i)\}_{i=1}^K$, update each parameter using the gradient projection method until convergence, and return the highest value among the $K$ many ReLU-IPMs.

\begin{algorithm}[H]
\caption{Calculation of ReLU-IPM}
\label{alg_relu}
\begin{algorithmic}[1]
\STATE Initialize $\mathbf{v} = (\mathbf{v}_1, \ldots, \mathbf{v}_K)$
\FOR{$t = 1, 2, \ldots, n_{\textup{epoch}}$}
    \STATE Calculate 
    $L(\mathbf{v}) = \sum_{k=1}^K \left\{ \frac{1}{n} \sum_{i=1}^n f_{\mathbf{v}_k}(\mathbf{X}_i) - \frac{1}{m} \sum_{i=1}^m f_{\mathbf{v}_k}(\mathbf{Y}_i) \right\}^2$
    \STATE $\mathbf{v} \gets \mathbf{v} + \eta \nabla_{\mathbf{v}} L(\mathbf{v})$
\ENDFOR
\RETURN $\max_{1 \leq k \leq K} \left| \frac{1}{n} \sum_{i=1}^n f_{\mathbf{v}_k}(\mathbf{X}_i) - \frac{1}{m} \sum_{i=1}^m f_{\mathbf{v}_k}(\mathbf{Y}_i) \right|$
\end{algorithmic} 
\end{algorithm}


\section{Application 1: Estimation of causal effects using ReLU-IPM}
\label{sec:app3}

Estimating causal effects from observational data has become an important research topic \citep{yao2021survey}. 
The main difficulty in estimating the causal effect is that the covariate distribution of the treated group differs significantly from that of the control group. 
To adjust such systematic biases, various methods have been developed including regression adjustment \citep{hill2011bayesian}, matching \citep{stuart2010matching} and weighting \citep{imai2014covariate}.  
The common goal of these methods is to make the covariate distributions of the treated and control groups be (asymptotically) balanced.

One of the core approaches for covariate balancing is the inverse probability weighting (IPW) \citep{hirano2003efficient}.
The IPW method gives weights to observed samples reciprocally proportional to the propensity scores that are the conditional probabilities of being assigned to the treatment given covariates. 
One issue in using the IPW is that its finite sample property would be inferior  because accurately estimating the propensity score does not automatically mean the accurate estimation of its inverse \citep{li2018balancing}.

Directly estimating the weights for covariate balancing has been received much attention. 
\citet{hainmueller2012entropy} and \citet{imai2014covariate} propose to find the weights that match the sample moments of the covariates.
These methods, however, are asymptotically valid only when the output regression model is linear in covariates.
To resolve this problem, estimating the weights by use of IPMs has also been considered.
\citet{Wong2017sobolev} uses the MMD, and \citet{pmlr-v202-kong23d} provides sufficient conditions on general IPMs for the consistency of the causal effect estimation. 

In this section, we apply the ReLU-IPM to estimate the weights for covariate balancing.
We prove that the estimator of the causal effect is consistent when the true output regression model is H\"{o}lder smooth. In addition, we derive the convergence rate of the estimator of the causal effect.
By analyzing simulated data, we empirically demonstrate that the ReLU-IPM outperforms other IPMs including the SIPM and MMD for estimation of causal effect.

\subsection{Estimation of ATT}

In this study, we focus on estimating the average treatment effect on the treated (ATT), which represents the causal effect of the treatment in terms of how much the treated units benefit from it, viewed retrospectively, in the context of a binary treatment problem.

Let $\mathbf{X} \in \mathcal{X} \subset \mathbb{R}^d$ be a random vector of covariates whose distribution is denoted by $\mathcal{P}$. Let $\pi(\cdot)$ be the propensity score. 
We assume that there exists an $\eta>0$ such that $\eta < \pi(\cdot) < 1-\eta$. Also, let $T \in \{0, 1\}$ be a binary treatment indicator which is generated from $\mathrm{Ber}(\pi(\mathbf{x}))$ conditioned on $\mathbf{X} = \mathbf{x}$, and $Y \coloneqq TY(1) + (1-T)Y(0)$ be an observed outcome where $Y(0)$ and $Y(1)$ are denoted as the potential outcomes under control and treatment, respectively.

Suppose we observe $n$ independent copies $\mathcal{D}^{(n)} \coloneqq \left\{ (\mathbf{X}_i, T_i, Y_i) \right\}_{i=1}^n$ of $(\mathbf{X}, T, Y)$. Let $n_0$ and $n_1$ be the sizes of the control $(T=0)$ and treatment $(T=1)$ group, respectively. Our goal is to estimate  $\mathrm{ATT} \coloneqq \mathbb{E} \left( Y(1) - Y(0) \mid T = 1 \right)$ using $\mathcal{D}^{(n)}.$ 
We consider the weighted estimator of ATT 
$$
\widehat{\mathrm{ATT}}^{\mathbf{w}} = \sum_{i:T_i=1} \frac{1}{n_1} Y_i - \sum_{i:T_i=0} w_i Y_i
$$
parametrized by $\mathbf{w},$
where $\mathbf{w} = (w_1, \dots, w_n)^\top$ is a nonnegative weight vector. 
We use $\mathbf{w}$ such that $\widehat{\mathcal{P}}_{1}(\cdot)=\frac{1}{n_1} \sum_{i: T_i=1} \delta_{\mathbf{X}_i}(\cdot)$
and $\widehat{\mathcal{P}}_{0}^{\mathbf{w}}(\cdot) = \sum_{i: T_i=0} w_i \delta_{\mathbf{X}_i}(\cdot)$ are as similar as possible. 
We propose to measure the similarity between $\widehat{\mathcal{P}}_{1}(\cdot)$ and $\widehat{\mathcal{P}}_{0}^{\mathbf{w}}(\cdot)$ using the ReLU-IPM.
That is, we use
\begin{align} \label{eq:weighted_est}
    \hat{\mathbf{w}} = \underset{\mathbf{w} \in \mathcal{W}^+}{\mathrm{argmin}} \, d_{\mathcal{F}_{\textup{ReLU}}}(\mathcal{P}_{0,n}^{\mathbf{w}}, \mathcal{P}_{1,n}),
\end{align}
where the set of weight vectors $\mathcal{W}^+$ is defined as
$$
\mathcal{W}^+ \coloneqq \left\{ \mathbf{w} =(w_1,\ldots,w_n)^{\top} \in [0,1]^n: \max_{i \in \{1,\ldots,n\}} w_i \leq \frac{K}{n_0}, \sum_{i: T_i=0} w_i = 1, \sum_{i: T_i=1} w_i = 0 \right\}
$$ 
for a positive sufficiently large number $K$ such that $1/\eta^2 \leq K$. 
We refer to $\widehat{\mathrm{ATT}}^{\hat{\mathbf{w}}}$ as the {\it ReLU-CB estimator}.

Let $m_t(\cdot)=\mathbb{E}(Y(t)|\mathbf{X}=\cdot)$ for $t\in \{0,1\}.$
Theorem 4.2 of \cite{pmlr-v202-kong23d} can be used to prove the consistency of the ReLU-CB estimator under regularity conditions, when $m_0$ is smooth.
However we can do more due to nice theoretical properties of the ReLU-IPM.
That is, by combining Theorems \ref{thm:holder} and \ref{thm:conv_relu}, we can derive the convergence rate of the ReLU-CB estimator whose results are presented in Theorem \ref{thm:causal} below.
Particularly, it is proved that the ATT estimator has a parametric convergence rate when the true output regression model is sufficiently smooth. The proof of Theorem \ref{thm:causal} is deferred to Appendix \ref{pf:causal}.

\begin{theorem} \label{thm:causal}
    Suppose that $m_0(\cdot)$ belong to the H\"{o}lder class $\mathcal{H}_d^{\beta}.$ Then,
    \begin{enumerate}
        \item[(i)] if $\beta < \frac{d+3}{2}$, we have
        $$\widehat{\mathrm{ATT}}^{\hat{\mathbf{w}}} - \mathrm{ATT} = \mathcal{O}_p\left(n^{-\frac{\beta}{d+3}}\right),$$

        \item[(ii)] and if $\beta > \frac{d+3}{2}$, we have    $$\widehat{\mathrm{ATT}}^{\hat{\mathbf{w}}} - \mathrm{ATT} = \mathcal{O}_p\left(n^{-\frac{1}{2}}\right).$$
    \end{enumerate}
\end{theorem}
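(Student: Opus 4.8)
The plan is to combine the usual bias--variance decomposition of a covariate-balancing estimator with the two ReLU-IPM results already established, Theorem~\ref{thm:holder} and Theorem~\ref{thm:conv_relu}. Write $m_0(\mathbf{x})=\mathbb{E}(Y(0)\mid\mathbf{X}=\mathbf{x})$ and, for the control units, $\varepsilon_i=Y_i-m_0(\mathbf{X}_i)$, so that $\mathbb{E}(\varepsilon_i\mid\mathbf{X}_i,T_i=0)=0$. Using the unconfoundedness identity $\mathbb{E}(Y(0)\mid T=1)=\mathbb{E}(m_0(\mathbf{X})\mid T=1)$ that underlies Theorem~4.2 of \citet{pmlr-v202-kong23d}, one gets
\[
\widehat{\mathrm{ATT}}^{\hat{\mathbf{w}}}-\mathrm{ATT}=A_n+B_n+C_n-D_n,
\]
where $A_n=\tfrac1{n_1}\sum_{i:T_i=1}Y_i-\mathbb{E}(Y(1)\mid T=1)$, $B_n=\mathbb{E}(m_0(\mathbf{X})\mid T=1)-\tfrac1{n_1}\sum_{i:T_i=1}m_0(\mathbf{X}_i)$, $C_n=\tfrac1{n_1}\sum_{i:T_i=1}m_0(\mathbf{X}_i)-\sum_{i:T_i=0}\hat w_i m_0(\mathbf{X}_i)$, and $D_n=\sum_{i:T_i=0}\hat w_i\varepsilon_i$. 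Here $A_n$ and $B_n$ are centered averages of bounded functions (note $m_0$ is bounded on $\mathbb{B}^d$ since it lies in $\mathcal{H}_d^\beta$), hence $\mathcal{O}_p(n^{-1/2})$ under the standard moment conditions on the outcome. Since $\hat{\mathbf{w}}$ depends on the data only through the covariates and treatment labels, it is conditionally independent of the control-group noise, and the box constraint $\max_i\hat w_i\le K/n_0$ with $\sum_{i:T_i=0}\hat w_i=1$ gives, conditionally on the covariates and labels, $\operatorname{Var}(D_n)\le\sigma^2\sum_{i:T_i=0}\hat w_i^2\le\sigma^2 K/n_0$, so $D_n=\mathcal{O}_p(n^{-1/2})$ as well.

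The whole difficulty is therefore concentrated in the bias term $C_n$, and this is where the two theorems enter. Since $m_0\in\mathcal{H}_d^\beta$ means $\|m_0\|_{\mathcal{H}_d^\beta}\le1$, the definition of the H\"older-IPM gives $|C_n|=\bigl|\int m_0\,(d\mathcal{P}_{1,n}-d\mathcal{P}_{0,n}^{\hat{\mathbf{w}}})\bigr|\le d_{\mathcal{H}_d^\beta}(\mathcal{P}_{0,n}^{\hat{\mathbf{w}}},\mathcal{P}_{1,n})$. Applying Theorem~\ref{thm:holder} converts this H\"older-IPM into a power of the ReLU-IPM: $|C_n|\le c\,d_{\mathcal{F}_{\textup{ReLU}}}(\mathcal{P}_{0,n}^{\hat{\mathbf{w}}},\mathcal{P}_{1,n})^{\gamma}$ with $\gamma=2\beta/(d+3)$ when $\beta<(d+3)/2$ and $\gamma=1$ when $\beta>(d+3)/2$.

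It remains to bound $d_{\mathcal{F}_{\textup{ReLU}}}(\mathcal{P}_{0,n}^{\hat{\mathbf{w}}},\mathcal{P}_{1,n})$. Because $\hat{\mathbf{w}}$ minimizes this quantity over $\mathcal{W}^+$, it suffices to exhibit one feasible weight vector $\mathbf{w}^\star\in\mathcal{W}^+$ for which the ReLU-IPM is small. I would take the (normalized) oracle inverse-probability weights $w_i^\star\propto\pi(\mathbf{X}_i)/(1-\pi(\mathbf{X}_i))$ on the control units; a short computation using $\eta<\pi<1-\eta$, the concentration of $n_0/n_1$, and the slack $1/\eta^2\le K$ shows that $\mathbf{w}^\star\in\mathcal{W}^+$ with probability tending to one (after rescaling so that the weights sum exactly to one). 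For such weights, $\sum_{i:T_i=0}w_i^\star f(\mathbf{X}_i)$ and $\tfrac1{n_1}\sum_{i:T_i=1}f(\mathbf{X}_i)$ are (respectively importance-weighted and ordinary) empirical averages with the common mean $\mathbb{E}(f(\mathbf{X})\mid T=1)$ for every $f\in\mathcal{F}_{\textup{ReLU}}$, so a uniform-over-$\mathcal{F}_{\textup{ReLU}}$ concentration bound --- exactly the bounded-difference plus Rademacher-complexity argument behind Theorem~\ref{thm:conv_relu}, applied to the uniformly bounded parametric class obtained by multiplying $\mathcal{F}_{\textup{ReLU}}$ by the bounded importance-weight function --- yields $d_{\mathcal{F}_{\textup{ReLU}}}(\mathcal{P}_{0,n}^{\mathbf{w}^\star},\mathcal{P}_{1,n})=\mathcal{O}_p(n^{-1/2})$, and hence also $d_{\mathcal{F}_{\textup{ReLU}}}(\mathcal{P}_{0,n}^{\hat{\mathbf{w}}},\mathcal{P}_{1,n})=\mathcal{O}_p(n^{-1/2})$. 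Plugging this into the previous paragraph gives $|C_n|=\mathcal{O}_p(n^{-\gamma/2})$: when $\beta<(d+3)/2$ this is $\mathcal{O}_p(n^{-\beta/(d+3)})$, which dominates the $\mathcal{O}_p(n^{-1/2})$ contributions of $A_n,B_n,D_n$ (since $\beta/(d+3)<1/2$), and when $\beta>(d+3)/2$ it is $\mathcal{O}_p(n^{-1/2})$ and all terms are of that order; this is exactly the dichotomy in the statement.

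I expect the main obstacle to be the last step: verifying that the oracle weights (or an explicit surrogate) genuinely lie in the constraint polytope $\mathcal{W}^+$ --- reconciling the exact equality $\sum_{i:T_i=0}w_i=1$ with the box constraint $\max_i w_i\le K/n_0$ --- and, more substantially, upgrading the i.i.d. two-sample concentration of Theorem~\ref{thm:conv_relu} to the weighted/importance-weighted two-sample setting uniformly over $\mathcal{F}_{\textup{ReLU}}$. Once that is in place, the remaining arguments are routine bias--variance bookkeeping for covariate-balancing estimators, with Theorem~\ref{thm:holder} supplying the one genuinely new ingredient that turns the parametric $n^{-1/2}$ ReLU-IPM rate into the H\"older rate $n^{-\beta/(d+3)}$.
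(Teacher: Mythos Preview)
Your proposal is correct and follows essentially the same route as the paper. Your four-term decomposition $A_n+B_n+C_n-D_n$ is just a regrouping of the paper's three terms (your $C_n$ is their bias term, your $-D_n$ is the control half of their noise term, and your $A_n+B_n$ collects their treated-noise plus $\mathrm{SATT}-\mathrm{ATT}$ pieces); the handling of each piece, the use of Theorem~\ref{thm:holder} on the bias, and the oracle-weight argument via $\hat{\mathbf{w}}$ being a minimizer are all the same. The only technical difference is in the step you flag as the ``main obstacle'': to get $d_{\mathcal{F}_{\textup{ReLU}}}(\mathcal{P}_{0,n}^{\mathbf{w}^\star},\mathcal{P}_{1,n})=\mathcal{O}_p(n^{-1/2})$ the paper conditions on $\{\mathbf{X}_i\}$ and applies an entropy-based uniform bound (Lemma~8.4 of \citet{geer2000empirical}, using the polynomial covering number of $\mathcal{F}_{\textup{ReLU}}$) to the centered variables $W_i=u(\mathbf{X}_i)\mathbb{I}(T_i=0)-\mathbb{I}(T_i=1)$, rather than extending the Rademacher argument of Theorem~\ref{thm:conv_relu} to weighted sums; either tool works here.
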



\subsection{Simulation study}
\label{sec:relu-cb}

\subsubsection{Settings}

\begin{paragraph}{Simulation model} 
    We generate simulated datasets using the \textsc{Kang-Schafer} models to analyze the finite sample performance of the ReLU-CB estimator, as is done in previous works \citep{kang2007demystifying, pmlr-v202-kong23d}. 
    The \textsc{Kang-Schafer} models consider the nonlinear outcome regression models and the nonlinear propensity score models for the binary treatment. For comparing the performances of baselines, we calculate the bias and RMSE of the corresponding ATT estimators. Details of the \textsc{Kang-Schafer} models are explained in Appendix \ref{sec:relu_cb_simulations}.
\end{paragraph}

\begin{paragraph}{Baselines}
    For baselines, we consider (i) the stabilized IPW (SIPW) with the linear logistic regression (GLM) \citep{lunceford2004stratification}, (ii) the SIPW with the boosting algorithm (Boost) used by \citet{lee2010improving}, (iii) the SIPW with the covariate balancing propensity score (CBPS) with the linear logistic regression \citep{imai2014covariate}, and (iv) the entropy balancing (EB) method of \citet{hainmueller2012entropy}.
    For CBPS and EB, we match the first moments of $\mathbf{X}$. For baselines using IPMs, we consider five IPMs, including (iv) Wasserstein distance (Wass), (v) sigmoid-IPM (SIPM), (vi) MMD with the RBF kernel (RBF), (vii) MMD with the Sobolev kernel (Sob) of \citet{Wong2017sobolev}, and (viii) H\"{o}lder-IPM using DNN discriminators with $L$-many layers for $L=1, 2$ (H\"{o}l-1, H\"{o}l-2). 
    Details of the baselines are provided in Appendix \ref{sec:impdetail_relu_cb}.
\end{paragraph}

\subsubsection{Results}
Table \ref{table1} shows the results based on 1000 simulated datasets. 
For most cases, ReLU-CB outperforms the other estimators with large margins in terms of both the bias and RMSE.
H\"{o}l-1 and H\"{o}l-2 tend to have larger RMSEs compared to the ReLU-CB, which would be partly because of overfitting. Moreover, the RMSEs of H\"{o}l-1 and H\"{o}l-2 do not decrease as the sample size increases.

GLM and Boost, which use the inverse of the estimated propensity score, are much inferior than those using the covariate balancing, which confirms again that accurate estimation of the propensity score does not guarantee the accurate estimation of its inverse.

Smaller biases of ReLU-CB is interesting since the ReLU-IPM is parametric while the true outcome regression model is highly nonlinear. This would be because the ReLU-IPM behaves similarly to the H\"{o}lder-IPM as proved in Theorem \ref{thm:holder}.

\begin{table*}[t] 
\renewcommand{\arraystretch}{1.2}
\caption{\textbf{\textsc{Kang-Schafer} models.} 
We generate 1000 simulated datasets from the two \textsc{Kang-Schafer} models with two sample sizes, and report the biases and RMSEs.
For each pair of dataset and performance measure (i.e. for each row), the best result is highlighted in bold, while the second best result is underlined.
} 
\label{table1}
\centering
\scalebox{0.75}{ 
\begin{tabular}{c|c|c|cccccccccc|c}
\hline
Model & Meas. & $n$ & GLM & Boost & CBPS & EB & Wass & SIPM & RBF & Sob & H\"{o}l-1 & H\"{o}l-2 & ReLU-CB \\
\hline \hline 
\multirow{4}{*}{1} & \multirow{2}{*}{Bias}  & 200 & -7.820  & -8.375 & -4.745 & -4.806 & -3.934 & -3.567 & \underline{-3.162} & -3.255 & -3.358 & -4.935 & \textbf{-2.448} \\
& & 1000 & -7.410 & -6.489 & -4.496 & -4.494  & -3.607 & -2.757 & -2.844 & -2.700  & \underline{-2.347} & -4.787 & \textbf{-1.819}\\
\cline{2-14}
& \multirow{2}{*}{RMSE} & 200 & 8.719 & 9.204 & 5.354 & 5.395 & 4.619 & 4.611 & 4.356 & \underline{4.285} & 7.995 & 9.255 & \textbf{3.952} \\
& & 1000 & 7.622 & 6.665 & 4.620 & 4.618 & 3.760 & 3.009 & 3.020 & \underline{2.861} & 3.905 & 9.607 & \textbf{2.080}\\
\hline \hline
\multirow{4}{*}{2} & \multirow{2}{*}{Bias}  & 200 & -7.753  & -8.306 & -4.671 & -4.732 & -3.882 & -3.505 & -3.147 & -3.205 & -3.492 & \underline{-2.633} & \textbf{-2.445} \\
& & 1000 & -7.452 & -6.525 & -4.540 & -4.538  & -3.677 & -2.824 & -2.878 & -2.722  & \underline{-2.400} & -2.690 & \textbf{-1.872}\\
\cline{2-14}
& \multirow{2}{*}{RMSE} & 200 & 8.813 & 9.289 & 5.575 & 5.615 & 4.991 & 5.177 & 4.942 & 4.841 & 8.173 & \textbf{4.576} & \underline{4.733} \\
& & 1000 & 7.707 & 6.753 & 4.730 & 4.728 & 3.926 & 3.221 & 3.207 & \underline{3.034} & 4.087 & 4.677 & \textbf{2.376}\\
\hline
\end{tabular}
}
\end{table*}

\subsubsection{More data analysis}
We analyze the ACIC dataset \citep{dorie2019automated} to evaluate the performance of the ReLU-CB estimator, and the results are provided in Appendix \ref{sec:add_experiments_relu_cb}.


\section{Application 2: Fair Representation Learning with ReLU-IPM}
\label{sec:app2}

The notion of group fairness is a fundamental principle in algorithmic fairness, which aims to eliminate biases between sensitive groups (e.g. man vs woman, white vs black, etc.) divided by a given sensitive attribute (e.g. gender, race, etc.).
The most popular criterion for group fairness is Demographic Parity (DP) which focuses on statistical disparities in decisions/outcomes across sensitive groups \citep{calders2009building, feldman2015certifying, ADW19, NEURIPS2020_51cdbd26, pmlr-v115-jiang20a}.
Subsequent fairness metrics, including Equal Opportunity (EqOpp) and Equalized Odds (EO), suggest that fair decisions need to be conditioned on both the label and the sensitive attribute \citep{NIPS2016_9d268236}.
Various algorithms have been proposed to achieve specified levels of group fairness with respect to one of these criteria \citep{zafar2017fairness, donini2018empirical, pmlr-v80-agarwal18a, Madras2018LearningAF, zafar2019fairness, chuang2021fair, pmlr-v162-kim22b}.

Fair Representation Learning (FRL) aims at searching a fair representation space \citep{pmlr-v28-zemel13} in the sense that the distributions of the encoded representation vector for each sensitive group are similar.
Since any prediction model constructed on a fair representation space is automatically group-fair and thus
FRL is considered as a tool to transform given unfair data to be fair.
Numerous algorithms for FRL have been proposed, including \citep{https://doi.org/10.48550/arxiv.1511.00830, 9aa5ba8a091248d597ff7cf0173da151, xie2017controllable, Madras2018LearningAF, Quadrianto_2019_CVPR, gupta2021controllable, zeng2021fair, pmlr-v162-kim22b, guo2022learning, pmlr-v206-deka23a, kong2025fair}, to name just a few.

A popular learning strategy for FRL is the adversarial learning \citep{9aa5ba8a091248d597ff7cf0173da151,Madras2018LearningAF}, where an adversary tries to predict a sensitive attribute on the representation space while the representation space is trained to prevent it. 
\citet{pmlr-v162-kim22b} apply the SIPM to the adversarial learning where the discriminator plays a role of an adversary and prove that any prediction model constructed on the learned fair representation space is also fair as long as the prediction head is infinitely differentiable.
In addition, it is empirically demonstrated that FRL with the SIPM attains better fairness-accuracy trade-offs compared to existing baselines including LAFTR \citep{Madras2018LearningAF}. 

In this section, we consider an FRL algorithm based on the ReLU-IPM. We prove that any prediction model on the learned fair representation space by the ReLU-IPM is guaranteed to be group-fair as long as the prediction model is H\"{o}lder continuous.

\subsection{Preliminaries for Fair Representation Learning}
We consider a binary sensitive attribute.
Let $\mathbf{X} \in \mathcal{X} \subset \mathbb{R}^{d}, Y \in \mathcal{Y},$ and $S \in \{0, 1\}$ represent the $d$-dimensional input vector, the output variable, and the binary sensitive attribute.
We write $\mathcal{P}$ and $\mathcal{P}_{s}$ as the joint distribution of $(\mathbf{X}, Y, S)$ and the conditional distribution of $\mathbf{X} | S = s$, respectively.
Similarly, let $\mathbb{E}$ and $\mathbb{E}_{s}$ be the corresponding expectation operators.

A mapping $h(\mathbf{X}, S)$ from $\mathcal{X} \times \{0, 1\}$ to a certain representation space $\mathcal{Z}$ is called a fair representation if $h(\mathbf{X},S)|S=0 \stackrel{d}\approx h(\mathbf{X},S)|S=1.$
For any prediction head $g:\mathcal{Z}\rightarrow \mathcal{Y},$ we expect that $g\circ h(\mathbf{X},S)|S=0 \stackrel{d}\approx g\circ h(\mathbf{X},S)|S=1$ as long as $h$ is fair.
Thus, a fair representation can be considered as a transformation of data such that the transformed data are considered to be group-fair. 

For a given representation $h,$ let $\mathbb{P}_s^h$ be the conditional distribution of $h(\mathbf{X},S)|S=s.$ 
Most of fair representation learning algorithms consist of two steps. The first step is to choose a deviance measure for the dissimilarity of
$\mathcal{P}_0^h$ and $\mathcal{P}_1^h,$ and the second step is to find a representation that minimizes the deviance measure.
For the deviance measure, \citet{Madras2018LearningAF} uses the KL divergence, \citet{pmlr-v206-deka23a} uses the Maximum Mean Discrepancy (MMD), and \citet{pmlr-v162-kim22b} uses the SIPM.


\subsection{Learning a fair representation by use of the ReLU-IPM}

Let $\mathcal{H},\mathcal{G}$ and $\ell$ be a given set of representations, a given set of prediction heads on $\mathcal{Z}$ and a given loss function on the output space, respectively. We consider the algorithm that learns a fair representation $h$ by minimizing $\mathbb{E}\ell(g\circ h(\mathbf{X},S), Y)$ subject to $d_{\mathcal{F}}(\mathcal{P}_0^h,\mathcal{P}_1^h) \le \delta$ with respect to $h\in \mathcal{H}$ and $g\in \mathcal{G}$ for a prespecified discriminator class $\mathcal{F}$ of functions from $\mathcal{Z}$ to $\mathbb{R}$ and a positive constant $\delta.$ 
For given data, we replace the probability measures and expectations by their empirical counterparts.

A fair representation can be considered as fairly transformed data since it would be expected that $g\circ h$ is fair for a reasonable prediction head $g.$
We say that $f$ is groupwisely $\eta$-fair with respect to a fair function $\phi: \mathbb{R} \rightarrow \mathbb{R}$ \citep{Madras2018LearningAF, chuang2021fair, pmlr-v162-kim22b} if $\Delta \textup{DP}_{\phi}(f) \leq \eta$ where $$\Delta \textup{DP}_{\phi}(f)=\big|\mathbb{E}( \phi \circ f(\mathbf{X},S)|S=0 )-\mathbb{E}( \phi \circ f(\mathbf{X},S)|S=1 )\big|.$$

The existing well-known measures for group fairness can be expressed by choosing a corresponding fair function $\phi.$
For example, the conventional measure $\Delta \textup{DP}$ \citep{calders2009building, feldman2015certifying, donini2018empirical, ADW19, zafar2019fairness} is $\Delta \textup{DP}_{\phi}$ with $\phi(t) = \mathbb{I}(t \ge 1/2).$
Another popularly used measure $\Delta \overline{\textup{DP}}$, which is the disparity between prediction scores \citep{Madras2018LearningAF, chuang2021fair, pmlr-v162-kim22b}, is defined with $\phi(t) = t.$
$\Delta \textup{SDP},$ which is a distributional gap between prediction scores, is defined as $\Delta \textup{SDP} = \mathbb{E}_{\tau \in \textup{Unif}(0, 1)} \Delta \textup{DP}_{\phi_{\tau}},$
where $\phi_{\tau}(t) = \mathbb{I}(t \ge \tau)$ \citep{NEURIPS2020_51cdbd26, pmlr-v115-jiang20a, Silvia_Ray_Tom_Aldo_Heinrich_John_2020, barata2021fair}.
Moreover, as a surrogate of $\Delta \textup{DP},$ $\Delta \textup{DP}_\phi$
with $\phi(t) = (1 - t)_{+}$ and $\phi(t) = \textup{sig}(t)$ have been used \citep{wu2019convexity, kim2022slide, kim2022learning, yao2023understanding}.

A given fair representation $h$ is said to be (groupwisely) $\eta$-fair on a given set $\mathcal{G}$ of prediction heads if $g\circ h$ is $\eta$-fair for all $g\in \mathcal{G}.$
If the IPM with a discriminator class $\mathcal{F}$ is used for the deviance measure of FRL, it is easy to see that the learned fair representation $\hat{h}$ is $\eta$-fair as long as $\mathcal{G} \subset \mathcal{F}.$
However, for the ReLU-IPM, the learned fair representation can be $\eta$-fair on a much larger class of prediction heads than $\mathcal{F}_{\textup{ReLU}}$ thanks to Theorem \ref{thm:holder}, whose results are rigorously stated in the following theorem.

\begin{theorem}[Level of group fairness] \label{thm:relu_frl}
        For any Lipschitz function $\phi$, there exists a constant $c > 0$ such that 
        \begin{itemize}
            \item[(i)]if $\beta > \frac{d+3}{2}$, for every $g \in \mathcal{H}_d^\beta$, we have
            $$
            \Delta \textup{DP}_{\phi}(g \circ h) \le c \cdot d_{\mathcal{F}_{\textup{ReLU}}} ( \mathcal{P}_{0}^{h}, \mathcal{P}_{1}^{h} ),
            $$
            \item[(ii)] if $\beta < \frac{d+3}{2}$, for every $g \in \mathcal{H}_d^\beta$, we have
            $$
            \Delta \textup{DP}_{\phi}(g \circ h) \le c \cdot d_{\mathcal{F}_{\textup{ReLU}}} ( \mathcal{P}_{0}^{h}, \mathcal{P}_{1}^{h} )^{\frac{2\beta}{d+3}}.
            $$
        \end{itemize}
\end{theorem}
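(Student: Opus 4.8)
The plan is to derive Theorem~\ref{thm:relu_frl} from Theorem~\ref{thm:holder}: I would show that $\phi\circ g$ is, up to a multiplicative constant, an admissible H\"older discriminator on the representation space, so that the demographic-parity gap is dominated by the H\"older-IPM between $\mathcal{P}_0^h$ and $\mathcal{P}_1^h$, and then invoke the ReLU-versus-H\"older comparison already proved in Theorem~\ref{thm:holder}.

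First I would rewrite the fairness gap purely in terms of $\mathcal{P}_0^h$ and $\mathcal{P}_1^h$. Since $\mathcal{P}_s^h$ is the conditional law of $h(\mathbf{X},S)$ given $S=s$, conditioning on $S$ gives
\[
\Delta\textup{DP}_\phi(g\circ h)=\Big|\,\mathbb{E}_{\mathbf{Z}\sim\mathcal{P}_0^h}(\phi\circ g)(\mathbf{Z})-\mathbb{E}_{\mathbf{Z}\sim\mathcal{P}_1^h}(\phi\circ g)(\mathbf{Z})\,\Big|=\Big|\int(\phi\circ g)(d\mathcal{P}_0^h-d\mathcal{P}_1^h)\Big|.
\]
The representation space lies in $\mathbb{B}^d$ and $g\in\mathcal{H}_d^\beta$, so only the restriction of $\phi$ to a bounded interval is relevant and $\phi\circ g$ has a finite H\"older norm; directly from the definition of $d_{\mathcal{H}_d^\beta}$ this yields
\[
\Delta\textup{DP}_\phi(g\circ h)\le\|\phi\circ g\|_{\mathcal{H}_d^\beta}\cdot d_{\mathcal{H}_d^\beta}(\mathcal{P}_0^h,\mathcal{P}_1^h).
\]

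The key step is a composition estimate, which I would isolate as a separate lemma: $\|\phi\circ g\|_{\mathcal{H}_d^\beta}\le c_1$ for a constant $c_1$ depending only on the Lipschitz constant of $\phi$, on $d$, and on $\beta$, uniformly over $g$ in the unit ball of $\mathcal{H}_d^\beta$. For $\beta\le 1$ this is immediate, since $|(\phi\circ g)(\mathbf{x})-(\phi\circ g)(\mathbf{y})|\le\mathrm{Lip}(\phi)\,|g(\mathbf{x})-g(\mathbf{y})|\le\mathrm{Lip}(\phi)\,\|\mathbf{x}-\mathbf{y}\|_2^{\beta}$ and $\|\phi\circ g\|_\infty\le\mathrm{Lip}(\phi)\,\|g\|_\infty+|\phi(0)|$. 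For $\beta>1$ one has to differentiate $\phi\circ g$ repeatedly and, via the higher-order chain rule (Fa\`a di Bruno), write each partial derivative of order up to $\lfloor\beta\rfloor$ as a finite sum of products of derivatives of $g$ (bounded, with H\"older-continuous top-order term because $g$ lies in the H\"older unit ball) and derivatives of $\phi$ evaluated along $g$; controlling the latter is delicate, and I expect it to require either matching the smoothness of $\phi$ to the order $\beta$ or exploiting that the fair functions of interest (e.g.\ $t$, $(1-t)_+$, $\textup{sig}(t)$) are actually smooth on the compact range of $g$.

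Granting the composition estimate, the remainder is mechanical: I would apply Theorem~\ref{thm:holder} with $\mathcal{P}=\mathcal{P}_0^h$ and $\mathcal{Q}=\mathcal{P}_1^h$. For $\beta>\frac{d+3}{2}$ this gives $d_{\mathcal{H}_d^\beta}(\mathcal{P}_0^h,\mathcal{P}_1^h)\le c\,d_{\mathcal{F}_{\textup{ReLU}}}(\mathcal{P}_0^h,\mathcal{P}_1^h)$, and for $\beta<\frac{d+3}{2}$ it gives $d_{\mathcal{H}_d^\beta}(\mathcal{P}_0^h,\mathcal{P}_1^h)\le c\,d_{\mathcal{F}_{\textup{ReLU}}}(\mathcal{P}_0^h,\mathcal{P}_1^h)^{2\beta/(d+3)}$; combining with $\Delta\textup{DP}_\phi(g\circ h)\le c_1\,d_{\mathcal{H}_d^\beta}(\mathcal{P}_0^h,\mathcal{P}_1^h)$ and absorbing $c_1c$ into a single constant yields the two displayed bounds. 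The main obstacle is the composition estimate in the regime $\beta>1$, i.e.\ reconciling the merely Lipschitz regularity of $\phi$ with a higher-order H\"older bound on $\phi\circ g$; once that is in place everything else follows routinely from Theorem~\ref{thm:holder}.
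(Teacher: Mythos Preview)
Your route differs from the paper's. The paper does not attempt a composition estimate on $\phi\circ g$; instead it writes
\[
\Delta\textup{DP}_\phi(g\circ h)=\big|\mathbb{E}_0\,\phi(g(h(\mathbf{X},0)))-\mathbb{E}_1\,\phi(g(h(\mathbf{X},1)))\big|\le C\,\big|\mathbb{E}_0\, g(h(\mathbf{X},0))-\mathbb{E}_1\, g(h(\mathbf{X},1))\big|,
\]
citing only the Lipschitz property of $\phi$, then bounds the right-hand side by $C\cdot d_{\mathcal{H}_d^\beta}(\mathcal{P}_0^h,\mathcal{P}_1^h)$ because $g$ itself is an admissible H\"older discriminator, and finishes with Theorem~\ref{thm:holder} exactly as you do. This sidesteps the composition question entirely.

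That shortcut, however, is not a valid use of Lipschitz continuity: with $\phi(t)=|t|$ and $Z_0$ symmetric about $0$ under $\mathcal{P}_0^h$, $Z_1\equiv 0$ under $\mathcal{P}_1^h$, one has $\mathbb{E}_0 Z_0=\mathbb{E}_1 Z_1$ but $\mathbb{E}_0\phi(Z_0)\ne\mathbb{E}_1\phi(Z_1)$, so the displayed inequality can fail. Your approach is the sounder one, and the obstacle you isolate for $\beta>1$ is real rather than a technicality: a merely Lipschitz $\phi$ composed with a $\beta$-H\"older $g$ need not be $\beta$-H\"older (same $\phi$, any smooth $g$ with a simple zero). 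Your proposed fix---assume $\phi$ is $C^{\lceil\beta\rceil}$ on the compact range of $g$, which covers $\phi(t)=t$ and $\phi=\textup{sig}$ among the fair functions the paper actually lists---makes the Fa\`a di Bruno bound go through and yields a correct proof; without such a strengthening the statement, as written for arbitrary Lipschitz $\phi$, does not appear to follow by either argument.
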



\begin{figure}[p]
    \centering
    \includegraphics[width=0.81\textwidth]{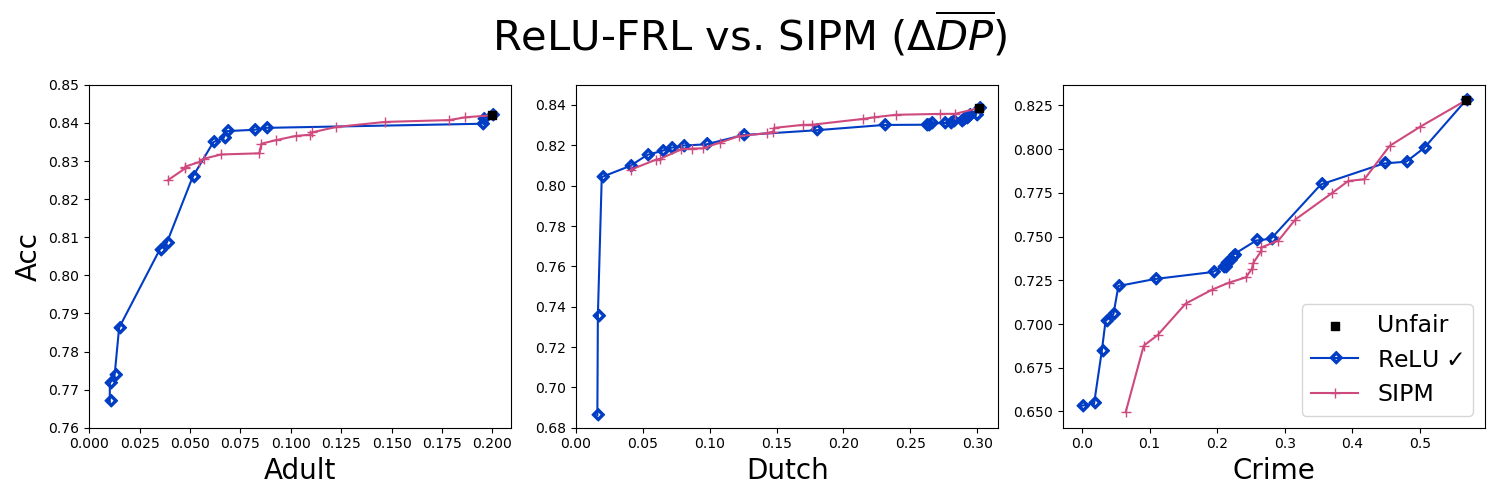}
    \includegraphics[width=0.81\textwidth]{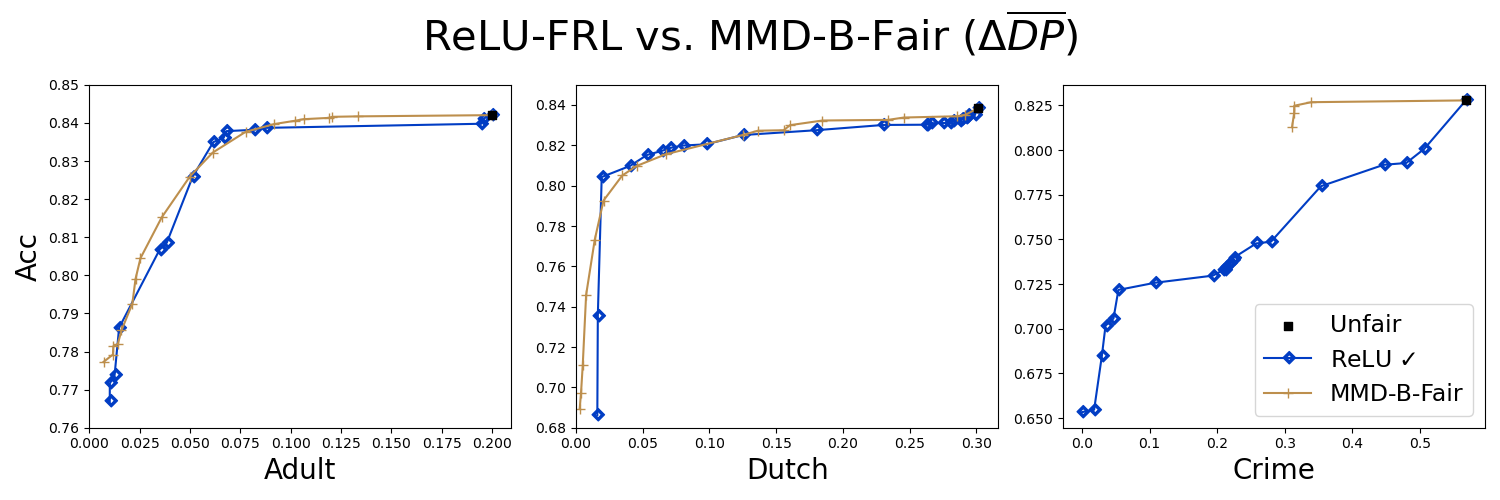}
    \includegraphics[width=0.81\textwidth]{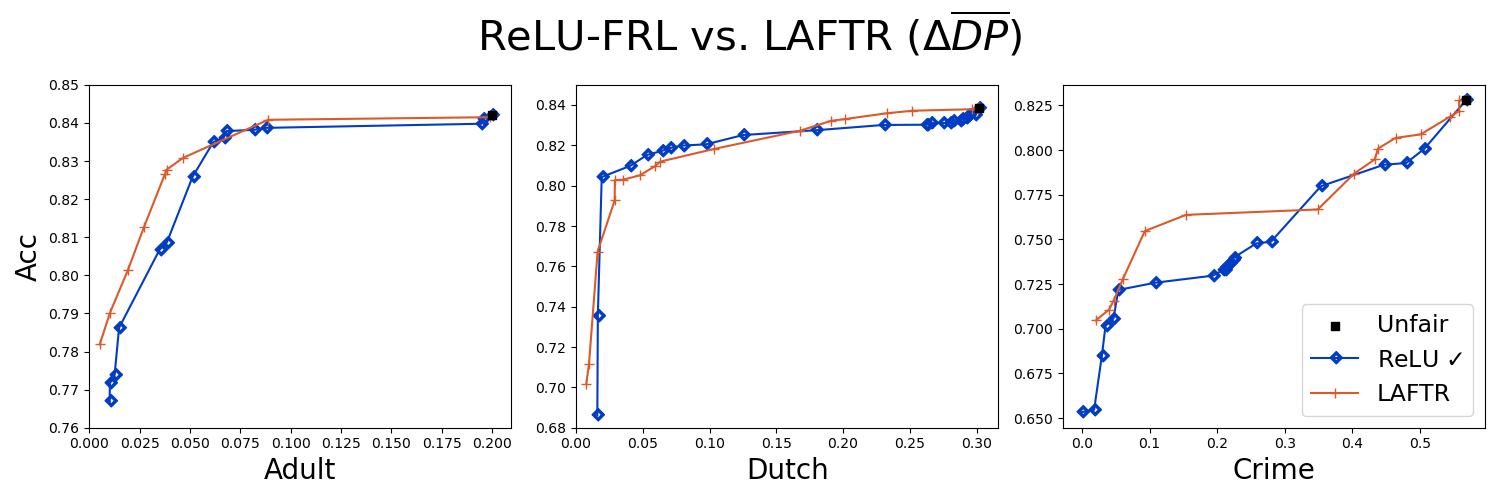}
    \includegraphics[width=0.81\textwidth]{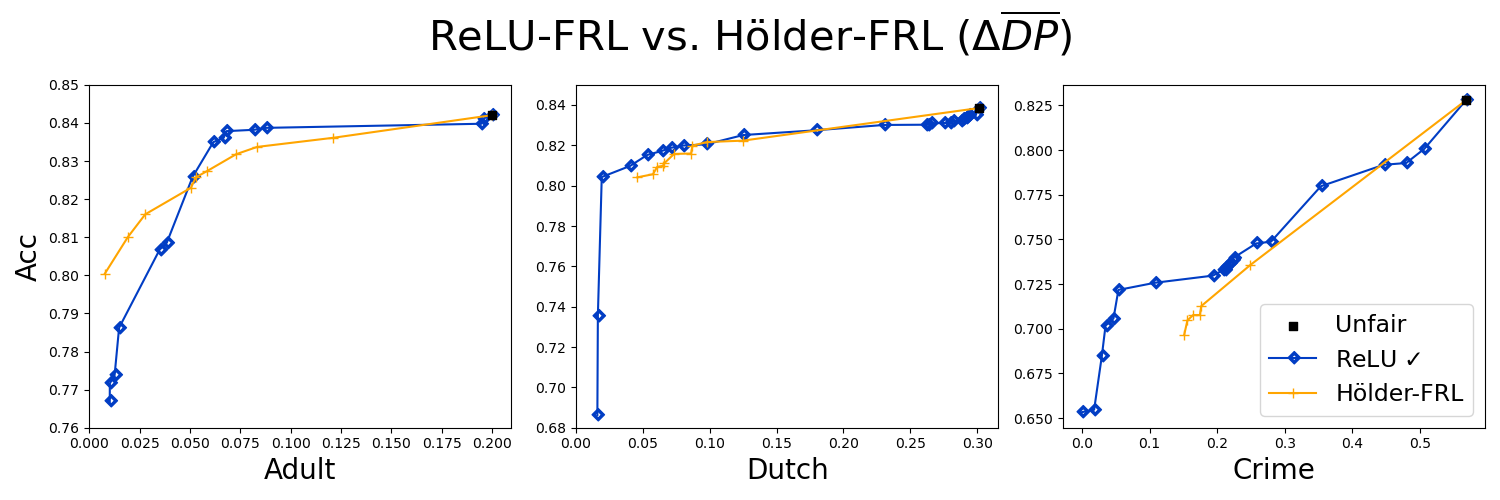}
    \caption{
    \textbf{Single-layered NN (ReLU activation) prediction head}: Pareto-front lines of fairness level $\Delta \overline{\textup{DP}}$ and \texttt{Acc}.
    (Left) \textsc{Adult}, (Center) \textsc{Dutch}, (Right) \textsc{Crime}.
    }
    \label{fig:1_ReLU_MLP_meandp}
\end{figure}

\subsection{Numerical Experiments}
\label{sec:relu-frl}

\subsubsection{Settings}

\begin{paragraph}{Datasets}
    We analyze three real benchmark tabular datasets: \textsc{Adult}, \textsc{Dutch}, and \textsc{Crime}, which are known to have unfair bias with respect to specific sensitive attributes.
    Table \ref{table:datasets} in Appendix \ref{sec:datasets-appendix} provides a summary of key characteristics for these three datasets.
    We randomly split each dataset five times into training and test sets with a ratio of 8:2, and report the average performances on the five test datasets.
\end{paragraph}

\begin{paragraph}{Model architectures}
    For the representation encoder $h,$ we use a two-layered MLP with the ReLU activation.
    For the prediction head $g,$ we consider three models: (i) linear logistic regression model, (ii) a single-layer neural network with the ReLU activation function and 100 nodes, and (ii) a single-layer neural network with the sigmoid activation function and 100 nodes. Considering multiple models for the head $g$ aims to investigate the fairness level of learned fair representations.
\end{paragraph}

\begin{paragraph}{Baseline algorithms}
    For baselines, we consider the following four FRL algorithms:
    (i) SIPM-LFR \citep{pmlr-v162-kim22b} and
    (ii) MMD-B-Fair \citep{pmlr-v206-deka23a} which use the SIPM and MMD to measure the deviance of the distributions of the representation corresponding to the two sensitive groups, respectively,
    (iii) LAFTR \citep{Madras2018LearningAF} which employs an adversarial learning on the representation space,
   and (iv) H\"{o}lder-FRL  which uses the H\"{o}lder-IPM whose discriminators are approximated by DNNs with one  hidden layer. In Appendix \ref{sec:impdetail_relu_frl}, we provide implementation details. 
\end{paragraph}


\subsubsection{Results}

For the comparison of empirical performances, we consider the trade-off between accuracy (\textsc{Acc}) and fairness levels ($\Delta \overline{\textup{DP}}, \Delta \textup{DP}, \Delta \textup{SDP}$), following the approach used in previous works \citep{Madras2018LearningAF, pmlr-v162-kim22b, pmlr-v206-deka23a}. Figure \ref{fig:1_ReLU_MLP_meandp} compares ReLU-FRL with the other baselines through the Pareto-front trade-off graphs between fairness level $\Delta \overline{\textup{DP}}$  and  \textsc{Acc} across three datasets: \textsc{Adult}, \textsc{Dutch}, and \textsc{Crime}.
The Pareto-front lines for the other fairness measures including $\Delta \textup{DP}, \Delta \textup{SDP}$ vs \textsc{Acc} along with different network architectures of the prediction head $g$ are presented in Appendix \ref{sec:add_experiments_relu_frl}.
The results clearly show that ReLU-FRL compares favorably to the other baselines. In particular, ReLU-FRL outperforms the other baselines for the \textsc{Crime} data, even though the margins are not large.
In addition, it is observed that MMD-B-fair has a difficulty in achieving the fairness level below a certain level.


\section{Conclusion}
\label{sec:conc}

In this study, we proposed a new parametric IPM which has desirable theoretical guarantees. In addition, we conducted numerical studies for illustrating the usefulness of the ReLU-IPM across various domains including the covariate balancing in causal estimation and fair representation learning.
Notably, compared to the H\"{o}lder-IPM, the ReLU-IPM has several advantages: (i) it has similar theoretical guarantees, in particular, when the discriminator class is sufficiently smooth, (ii) it is simpler to implement, (iii) it requires fewer hyperparameters, and (iv) it demonstrates comparable or superior performance in numerical experiments. And, in comparison to the SIPM, we have established better theoretical properties of the ReLU-IPM including
(i) parametric convergence rate of the ATT estimator and (ii)  a larger class of prediction heads on the learned fair representation is guaranteed to be group-fair.

There are several possible future works.
(1) For the ATT estimation, we only succeeded in deriving the parametric convergence rate. 
We believe that the proposed estimator is semi-parametric efficient which needs to be proved.
(2) Replacing the ReLU activation in the ReLU-IPM with other activation functions would be useful.
The leaky ReLU activation function would be a promising alternative since there is one more parameter in the activation function and hence we can select the activation function data-adaptively by choosing the activation parameter data-adaptively. (3) The ReLU-IPM could provide a way of explaining how two given distributions differ. By looking at the distributions of $\hat\theta^\top \mathbf{X}$ and $\hat\theta^\top \mathbf{Y},$ where $\hat\theta$ is the parameter at which the ReLU-IPM is calculated,
we could grab some idea of the difference between $\mathcal{P}$ and $\mathcal{Q}.$

\acks{
This work was supported by the Global-LAMP Program of the National Research Foundation of Korea (NRF) grant funded by the Ministry of Education (No.RS-2023-00301976) and the National Research Foundation of Korea(NRF) grant funded by the Korea government(MSIT) (RS-2025-00556079).
}

\clearpage


\appendix
\section{Definitions and Auxiliary lemmas} \label{sec:math_def}

In this section, we provide mathematical tools used in this paper.

\begin{definition}[Bounded differences property]
    A function $f: \mathcal{X}_1 \times \mathcal{X}_2 \times \cdots \times \mathcal{X}_n \rightarrow \mathbb{R}$ satisfies the bounded differences property if substituting the value of the $i$-th coordinate $\mathbf{x}_i$ changes the value of $f$ by at most $c_i$. More formally, if there are constants $c_1, c_2, \ldots, c_n$ such that for all $i \in \{1,\ldots,n\}$, and all $\mathbf{x}_1 \in \mathcal{X}_1, \mathbf{x}_2 \in \mathcal{X}_2, \ldots, \mathbf{x}_{n} \in \mathcal{X}_n$,
    \begin{align*}
    \sup_{\mathbf{x}_i^{\prime} \in \mathcal{X}_i^{\prime}} \Big|f \big( \mathbf{x}_1, & \ldots, \mathbf{x}_{i-1}, \mathbf{x}_i, \mathbf{x}_{i+1}, \ldots, \mathbf{x}_{n} \big) - f\left(\mathbf{x}_1, \ldots, \mathbf{x}_{i-1}, \mathbf{x}_i^{\prime}, \mathbf{x}_{i+1}, \ldots, \mathbf{x}_{n}\right) \Big| \leq c_i .
    \end{align*}
\end{definition}

\begin{definition}[Rademacher random variable] A random variable $X \in \{-1,1\}$ is called a Rademacher random variable if  $\mathbb{P}(X=1) = \mathbb{P}(X=-1)=\frac{1}{2}$.
\end{definition}

\begin{definition}[$L_p(\mathcal{Q})$-norm]
    Let $\mathcal{Q}$ be a measure on a measurable space $(\mathcal{X}, \mathcal{A})$ and $L_p(\mathcal{Q})=\{f: \mathcal{X} \rightarrow \mathbb{R}: \int|f|^p d\mathcal{Q} < \infty \}$. For $f \in L_p(\mathcal{Q})$,  the $L_p(\mathcal{Q})$-norm is defined as
    $\|f\|_{p, \mathcal{Q}} \coloneqq \left(\int |f|^p d\mathcal{Q}\right)^{1/p}$. When $p=2$, we omit the subscript 2. i.e., $\|\cdot\|_{\mathcal{Q}} = \|\cdot\|_{2, \mathcal{Q}}$.
\end{definition}

\begin{definition}[$L_p(\mathcal{Q}_n)$-norm]
    Let $\mathcal{Q}_n$ be the empirical measure of $z_1, \ldots, z_n$. We define the $L_p(\mathcal{Q}_n)$-norm by $\|f\|_{p,\mathcal{Q}_n} \coloneqq \left( \frac{1}{n} \sum_{i=1}^n |f(z_i)|^p \right)^{1/p}$. For $p=2$, we omit the subscript 2, writing $\|\cdot\|_{\mathcal{Q}_n}$ instead of $\|\cdot\|_{2, \mathcal{Q}_n}$.
\end{definition}

\begin{definition}[$\epsilon$-covering number and $\epsilon$-entropy] 
Let $N_p(\mathscr{F}, \|\cdot\|_{\mathcal{Q}}, \epsilon)$ be the $\epsilon$-covering number, defined as the smallest $N$ such that for every $\epsilon > 0$ and a class of functions $\mathscr{F} \subset L_2(\mathcal{Q})$, there exist functions $f_1, \ldots, f_N \in L_2(\mathcal{Q})$ such that $\mathscr{F}$ can be covered by $N$ balls of radius $\epsilon$ centered at  $f_1, \ldots, f_N$ in $L_2(\mathcal{Q})$.
For $\epsilon > 0$ and a class of functions $\mathscr{F} \subset L_2(\mathcal{Q})$, let
$H(\mathscr{F}, \|\cdot\|_{\mathcal{Q}}, \epsilon) = \log N_p(\mathscr{F}, \|\cdot\|_{\mathcal{Q}}, \epsilon)$ be the $\epsilon$-entropy of $\mathscr{F}$ with respect to the $L_2(\mathcal{Q})$-metric.
\end{definition}

\begin{lemma}[McDiarmid's inequality \citep{McDiarmid_1989}] 
\label{lem:McD}    
    Let $f: \mathcal{X}_1 \times \mathcal{X}_2 \times \cdots \times \mathcal{X}_n \rightarrow \mathbb{R}$ satisfy the bounded differences property with bounds $c_1, c_2, \ldots, c_n$. Consider independent random vectors $\mathbf{X}_1, \ldots, \mathbf{X}_{n}$ where $\mathbf{X}_i \in \mathcal{X}_i$ for all $i$. Then, for any $\epsilon>0$,
    \begin{align*}
    & \mathbb{P}\left\{ f\left(\mathbf{X}_1, \ldots, \mathbf{X}_n\right)-\mathbb{E}\left[f\left(\mathbf{X}_1, \ldots, \mathbf{X}_n\right)\right] \geq \epsilon \right\} \leq \exp \left(-\frac{2 \epsilon^2}{\sum_{i=1}^n c_i^2}\right), \\
    & \mathbb{P}\left\{ f\left(\mathbf{X}_1, \ldots, \mathbf{X}_n\right)-\mathbb{E}\left[f\left(\mathbf{X}_1, \ldots, \mathbf{X}_n\right)\right] \leq-\epsilon \right\} \leq \exp \left(-\frac{2 \epsilon^2}{\sum_{i=1}^n c_i^2}\right),
    \end{align*}
    and as an immediate consequence,    
    \begin{align*}
    \mathbb{P}\left\{ \left|f\left(\mathbf{X}_1, \ldots, \mathbf{X}_{n}\right)-\mathbb{E}\left[f\left(\mathbf{X}_1, \ldots, \mathbf{X}_{n}\right)\right]\right| \geq \epsilon \right\} \leq 2 \exp \left(-\frac{2 \epsilon^2}{\sum_{i=1}^n c_i^2}\right) . \qquad \qquad 
    \end{align*}
\end{lemma}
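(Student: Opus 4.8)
The plan is to prove McDiarmid's inequality via the Doob martingale together with the Azuma--Hoeffding machinery, and then recover the two-sided statement by symmetry and a union bound. First I would fix the filtration generated by the coordinates and define the Doob martingale $Z_i := \mathbb{E}[f(\mathbf{X}_1,\ldots,\mathbf{X}_n) \mid \mathbf{X}_1,\ldots,\mathbf{X}_i]$ for $i=0,1,\ldots,n$, so that $Z_0 = \mathbb{E}[f]$ and $Z_n = f(\mathbf{X}_1,\ldots,\mathbf{X}_n)$. Writing $D_i := Z_i - Z_{i-1}$ for the martingale differences, the target deviation telescopes as $f - \mathbb{E}[f] = \sum_{i=1}^n D_i$, and by construction $\mathbb{E}[D_i \mid \mathbf{X}_1,\ldots,\mathbf{X}_{i-1}] = 0$.

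The key step is to show that, conditionally on $\mathbf{X}_1,\ldots,\mathbf{X}_{i-1}$, each $D_i$ ranges over an interval of length at most $c_i$. Here I would use independence to express the conditional expectation as integration over the remaining coordinates: define $g_i(\mathbf{x}_1,\ldots,\mathbf{x}_i) := \mathbb{E}[f \mid \mathbf{X}_1 = \mathbf{x}_1, \ldots, \mathbf{X}_i = \mathbf{x}_i]$, and set $U_i := \sup_{x} g_i(\mathbf{X}_1,\ldots,\mathbf{X}_{i-1},x) - Z_{i-1}$ and $L_i := \inf_{x} g_i(\mathbf{X}_1,\ldots,\mathbf{X}_{i-1},x) - Z_{i-1}$. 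Because $\mathbf{X}_i$ is independent of the others, the difference $g_i(\ldots,x) - g_i(\ldots,x')$ equals the expectation (over the free coordinates $\mathbf{X}_{i+1},\ldots,\mathbf{X}_n$) of $f(\ldots,x,\ldots) - f(\ldots,x',\ldots)$, which the bounded differences property bounds by $c_i$ pointwise; hence $U_i - L_i \le c_i$ and $D_i \in [L_i, U_i]$ almost surely.

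With the conditional range controlled, I would invoke Hoeffding's lemma: for any mean-zero random variable supported in an interval of length $L$, the conditional moment generating function is bounded by $\exp(\lambda^2 L^2/8)$. Applying this to $D_i$ conditionally gives $\mathbb{E}[e^{\lambda D_i} \mid \mathbf{X}_1,\ldots,\mathbf{X}_{i-1}] \le \exp(\lambda^2 c_i^2/8)$. Iterating the tower property from $i=n$ down to $i=1$ then yields $\mathbb{E}[e^{\lambda(f - \mathbb{E}[f])}] \le \exp\big(\lambda^2 \sum_{i=1}^n c_i^2 / 8\big)$. A Chernoff bound $\mathbb{P}\{f - \mathbb{E}[f] \ge \epsilon\} \le e^{-\lambda\epsilon}\,\mathbb{E}[e^{\lambda(f - \mathbb{E}[f])}]$ followed by the optimal choice $\lambda = 4\epsilon/\sum_i c_i^2$ gives the upper-tail bound $\exp(-2\epsilon^2/\sum_i c_i^2)$. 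The lower-tail bound follows verbatim after replacing $f$ with $-f$ (which satisfies the same bounded differences bounds), and the two-sided inequality follows by summing the two tail probabilities.

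The main obstacle I expect is the conditional-range step: carefully justifying, using independence, that taking conditional expectation does not inflate the bounded-differences constant, so that the range of $D_i$ given the past is still at most $c_i$ rather than something larger. Everything downstream (Hoeffding's lemma, the tower-property iteration, and the Chernoff optimization) is routine once that bound is in hand.
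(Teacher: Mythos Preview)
Your proof is correct and follows the classical Doob martingale plus Azuma--Hoeffding argument, which is precisely the standard derivation. Note, however, that the paper does not supply its own proof of this lemma: it is stated in the appendix as an auxiliary result with a citation to \citet{McDiarmid_1989}, so there is no in-paper proof to compare against. Your write-up is essentially the textbook proof and would be an appropriate justification; the conditional-range step you flag as the main obstacle is handled exactly as you describe, via independence and averaging the pointwise bounded-differences inequality over the remaining coordinates.
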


\clearpage

\begin{lemma}[Ledoux-Talagrand contraction inequality \citep{LedouxTalagrand1991}] \label{Ledoux}
    For a compact set $\mathcal{T}$, let $\mathbf{X}_1, \cdots, \mathbf{X}_m$ be i.i.d random vectors whose real-valued components are indexed by $\mathcal{T}$, i.e., $\mathbf{X}_i = \left(\mathbf{X}_{i, s}\right)_{s \in \mathcal{T}}$. Let $\phi: \mathbb{R} \rightarrow \mathbb{R}$ be a 1-Lipschitz function such that $\phi(0)=0$. Let $\epsilon_1, \cdots, \epsilon_m$ be independent Rademacher random variables. Then
    $$
    \mathbb{E}\left[\sup _{s \in \mathcal{T}}\left|\sum_{i=1}^m \epsilon_i \phi\left(\mathbf{X}_{i, s}\right)\right|\right] \leq 2 \mathbb{E}\left[\sup _{s \in \mathcal{T}}\left|\sum_{i=1}^m \epsilon_i \mathbf{X}_{i, s}\right|\right].
    $$
\end{lemma}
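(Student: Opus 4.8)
The plan is to condition on the random vectors, reduce to a purely Rademacher inequality over a fixed bounded set, prove a one-sided (absolute-value-free) contraction by peeling off one coordinate at a time, and then recover the stated bound --- including its factor of $2$ --- by a symmetrization argument. First I would invoke the tower property to condition on $\mathbf{X}_1,\dots,\mathbf{X}_m$: it then suffices to prove the inequality with $\mathbb{E}$ replaced by the conditional expectation $\mathbb{E}_{\boldsymbol{\epsilon}}$ over the Rademacher variables alone, for the fixed bounded set $A=\{(\mathbf{X}_{1,s},\dots,\mathbf{X}_{m,s})^{\top}:s\in\mathcal{T}\}\subset\mathbb{R}^m$ (compactness of $\mathcal{T}$ ensures the suprema are measurable). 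Writing $a=(a_1,\dots,a_m)$ for a generic element of $A$, the goal becomes $\mathbb{E}_{\boldsymbol{\epsilon}}\sup_{a\in A}|\sum_{i}\epsilon_i\phi(a_i)|\le 2\,\mathbb{E}_{\boldsymbol{\epsilon}}\sup_{a\in A}|\sum_{i}\epsilon_i a_i|$.

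The core step is the signed contraction $\mathbb{E}_{\boldsymbol{\epsilon}}\sup_{a\in A}\sum_i\epsilon_i\phi(a_i)\le \mathbb{E}_{\boldsymbol{\epsilon}}\sup_{a\in A}\sum_i\epsilon_i a_i$, valid for any bounded $A$. I would prove this by induction, interpolating through quantities $R_k$ in which the first $k$ summands use $a_i$ and the remaining ones use $\phi(a_i)$, so that $R_0$ and $R_m$ are exactly the two sides; it then suffices to show $R_k\le R_{k+1}$, i.e. that replacing a single $\phi(a_j)$ by $a_j$ cannot decrease the expectation. Fixing the other signs and letting $g(a)$ collect the terms free of coordinate $j$, I would use the identity $\mathbb{E}_{\epsilon_j}\sup_a[g(a)+\epsilon_j\phi(a_j)]=\tfrac12\sup_{a,a'}\{g(a)+g(a')+\phi(a_j)-\phi(a'_j)\}$, exploit the symmetry in the pair $(a,a')$ to replace $\phi(a_j)-\phi(a'_j)$ by $|\phi(a_j)-\phi(a'_j)|$, bound this by $|a_j-a'_j|$ using that $\phi$ is $1$-Lipschitz, and then undo the symmetrization to recognize the result as $\mathbb{E}_{\epsilon_j}\sup_a[g(a)+\epsilon_j a_j]$. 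This one-coordinate comparison is the heart of the argument and the step I expect to be the main obstacle, since the symmetrization-and-relabeling manipulation must be carried out carefully for the Lipschitz bound to be applied with the correct orientation.

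To obtain the absolute-value bound with its factor $2$, I would write $\sup_a|\sum_i\epsilon_i\phi(a_i)|=\max(U,V)$ with $U=\sup_a\sum_i\epsilon_i\phi(a_i)$ and $V=\sup_a\sum_i\epsilon_i(-\phi(a_i))$, and use $\max(U,V)\le U_{+}+V_{+}$. Since $-\phi$ is also $1$-Lipschitz with $(-\phi)(0)=0$, and since $\boldsymbol{\epsilon}\stackrel{d}{=}-\boldsymbol{\epsilon}$, symmetry gives $\mathbb{E}_{\boldsymbol{\epsilon}}U_{+}=\mathbb{E}_{\boldsymbol{\epsilon}}V_{+}$, so it remains to bound $\mathbb{E}_{\boldsymbol{\epsilon}}U_{+}$. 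Augmenting the index set to $A_0=A\cup\{\mathbf{0}\}$ and using $\phi(0)=0$ gives $\sup_{a\in A_0}\sum_i\epsilon_i\phi(a_i)=U_{+}$; applying the signed contraction to the bounded set $A_0$ then yields $\mathbb{E}_{\boldsymbol{\epsilon}}U_{+}\le\mathbb{E}_{\boldsymbol{\epsilon}}\max(\sup_{a\in A}\sum_i\epsilon_i a_i,0)\le\mathbb{E}_{\boldsymbol{\epsilon}}\sup_a|\sum_i\epsilon_i a_i|$. Combining these bounds produces the factor $2$, and taking expectation over $\mathbf{X}_1,\dots,\mathbf{X}_m$ recovers the full statement.
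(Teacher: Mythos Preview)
The paper does not give its own proof of this lemma: it is stated in the appendix as an auxiliary result with a citation to \citet{LedouxTalagrand1991} and is then simply invoked in the proof of Theorem~\ref{thm:conv_relu}. Your proposal is correct and is precisely the classical argument from that reference: condition on the data to reduce to a deterministic bounded set $A\subset\mathbb{R}^m$, establish the signed contraction $\mathbb{E}_{\boldsymbol{\epsilon}}\sup_{a\in A}\sum_i\epsilon_i\phi(a_i)\le\mathbb{E}_{\boldsymbol{\epsilon}}\sup_{a\in A}\sum_i\epsilon_i a_i$ by the one-coordinate peeling and the pairwise symmetrization $|\phi(a_j)-\phi(a'_j)|\le|a_j-a'_j|$, and then pass to the absolute-value version via the augmentation $A_0=A\cup\{\mathbf{0}\}$ together with $\phi(0)=0$, picking up the factor $2$ from $\max(U,V)\le U_{+}+V_{+}$ and the distributional symmetry $\boldsymbol{\epsilon}\stackrel{d}{=}-\boldsymbol{\epsilon}$. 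There is nothing to compare here beyond noting that your write-up supplies exactly the proof the paper omits.
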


\begin{lemma}[Lemma 8.4 of \cite{geer2000empirical}] \label{lem:vandeGeer}
    Let $W_1, \ldots, W_n$ be independent random variables with mean zero and let $z_1, \ldots, z_n$ be a set of points.
    We denote the empirical measure of $z_1,...,z_n$ by $Q_n$.
    Suppose that for some $\alpha \in (0,2)$, $A>0$, R, K, and $\sigma_0$, there exists some function class $\mathscr{F}$ such that (i) $H(\mathscr{F}, \|\cdot\|_{\mathcal{Q}_n}, \epsilon) \leq A \epsilon^{-\alpha}$ holds for all $\epsilon > 0$, (ii) $\sup_{f \in \mathscr{F}} \| f \|_{Q_n} \leq R$, (iii) $\max_{i=1,\ldots,n} K^2 \left( \mathbb{E} e^{|W_i|^2/K^2} -1 \right) \leq \sigma_0^2.$
    Then there exists a constant $c$ depending on $A, \alpha, R, K$, and $\sigma_0$ such that we have 
    $$
    \mathbb{P} \left( \sup_{f \in \mathscr{F}} \frac{\left| \frac{1}{\sqrt{n}} \sum_{i=1}^n W_i f(z_i) \right|}{\|f\|_{\mathcal{Q}_n}^{1-\alpha/2}} \geq T \right)
    \leq c \exp{\left( -\frac{T^2}{c^2} \right)}
    $$
    for all $T \geq c.$
\end{lemma}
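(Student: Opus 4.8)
The plan is to treat this as a classical maximal inequality for a weighted empirical process and prove it by combining a chaining (Dudley) bound with the \emph{peeling device}. Write $S(f) = \frac{1}{\sqrt n}\sum_{i=1}^n W_i f(z_i)$ for the process indexed by $f \in \mathscr{F}$, adopting the convention that the ratio $|S(f)|/\|f\|_{\mathcal{Q}_n}^{1-\alpha/2}$ equals $0$ whenever $\|f\|_{\mathcal{Q}_n} = 0$ (in which case $f(z_i)=0$ for every $i$ and $S(f)=0$ as well, so this causes no loss). The first step is to record that condition (iii) is exactly the sub-Gaussian Orlicz bound on the $W_i$: for any two functions $f,g$, the increment $S(f)-S(g)=\frac{1}{\sqrt n}\sum_i W_i\big(f(z_i)-g(z_i)\big)$ is a weighted sum of independent mean-zero variables, hence itself sub-Gaussian with variance proxy of order $\sigma_0^2\|f-g\|_{\mathcal{Q}_n}^2$. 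Thus $f\mapsto S(f)$ is a sub-Gaussian process with respect to the metric $\|\cdot\|_{\mathcal{Q}_n}$.

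Next I would establish a single-ball bound. For $r>0$ set $B_r = \{f\in\mathscr{F}: \|f\|_{\mathcal{Q}_n}\le r\}$. By Dudley's entropy integral applied to this sub-Gaussian process on $B_r$, together with condition (i),
\begin{equation*}
\mathbb{E}\sup_{f\in B_r}|S(f)| \;\lesssim\; \sigma_0\int_0^{r}\sqrt{H(\mathscr{F},\|\cdot\|_{\mathcal{Q}_n},u)}\,du \;\lesssim\; \sigma_0\sqrt A\int_0^{r}u^{-\alpha/2}\,du \;\asymp\; \sigma_0\sqrt A\, r^{\,1-\alpha/2},
\end{equation*}
where convergence of the integral at the origin is exactly where the hypothesis $\alpha\in(0,2)$ is used. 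Pairing this with a tail bound for the supremum of a sub-Gaussian process (Borell--TIS, or the tail form of the chaining inequality), whose fluctuation scale is controlled by the $\|\cdot\|_{\mathcal{Q}_n}$-diameter $\le 2r$ of $B_r$, yields a constant $C_1$ depending on $A,\alpha,\sigma_0$ with
\begin{equation*}
\mathbb{P}\Big(\sup_{f\in B_r}|S(f)|\ge C_1 r^{\,1-\alpha/2}+t\Big)\le \exp\Big(-\frac{c\,t^2}{\sigma_0^2 r^2}\Big),\qquad t>0.
\end{equation*}

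The heart of the argument is then the peeling device. Partition $\mathscr{F}$ into dyadic shells $\mathscr{F}_j=\{f\in\mathscr{F}: 2^{-j}R<\|f\|_{\mathcal{Q}_n}\le 2^{-j+1}R\}$ for $j\ge 1$, using condition (ii) so that $\mathscr{F}=\bigcup_{j\ge1}\mathscr{F}_j$ up to the null-norm functions. On $\mathscr{F}_j$ the denominator satisfies $\|f\|_{\mathcal{Q}_n}^{1-\alpha/2}\ge r_j^{\,1-\alpha/2}$ with $r_j=2^{-j}R$, while $\mathscr{F}_j\subset B_{2r_j}$. Applying the single-ball bound on $B_{2r_j}$ with $t=\tfrac T2 r_j^{\,1-\alpha/2}$ and taking $T$ large enough that $\tfrac T2\ge 2C_1$ (which absorbs the mean term into the deviation term), I get
\begin{equation*}
\mathbb{P}\Big(\sup_{f\in\mathscr{F}_j}\frac{|S(f)|}{\|f\|_{\mathcal{Q}_n}^{1-\alpha/2}}\ge T\Big)\le \exp\Big(-\frac{c'\,T^2 r_j^{-\alpha}}{\sigma_0^2}\Big)=\exp\Big(-\frac{c'\,T^2\,2^{j\alpha}}{\sigma_0^2 R^{\alpha}}\Big).
\end{equation*}
A union bound over $j\ge1$ then gives $\sum_{j\ge1}\exp\big(-c'T^2 2^{j\alpha}/(\sigma_0^2R^\alpha)\big)$, which for $T\ge c$ is dominated by its first term (the exponent grows geometrically in $j$) and is therefore $\le c\exp(-T^2/c^2)$, as claimed.

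The step I expect to be the main obstacle is the single-ball bound: one must extract, from the sub-Gaussian hypothesis (iii) alone rather than from boundedness, both the correct $r^{\,1-\alpha/2}$ scaling of the expected supremum via the entropy integral \emph{and} a sub-Gaussian tail around that mean whose variance proxy is $\sigma_0^2 r^2$. It is precisely this pairing---mean of order $r^{\,1-\alpha/2}$, fluctuations of order $r$---that, after dividing by $r^{\,1-\alpha/2}$ on the $j$-th shell, produces the factor $r_j^{-\alpha}=R^{-\alpha}2^{j\alpha}$ in the exponent, making the peeling series summable and the final constant independent of $n$. Getting the Orlicz-norm chaining to yield this clean dependence, rather than losing logarithmic factors or the sharp power of $r$, is the delicate part; everything downstream is bookkeeping.
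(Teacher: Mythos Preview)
The paper does not supply its own proof of this lemma: it is stated in Appendix~\ref{sec:math_def} purely as an auxiliary result, with attribution to \citet{geer2000empirical}, and is invoked as a black box in the proof of Lemma~\ref{lemma_causal_true_conv}. So there is no in-paper argument to compare against.

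That said, your outline is correct and is essentially the route taken in van de Geer's book: one first establishes a chaining bound on balls $B_r$ (this is the content of her Lemma~8.2, which gives exactly the $r^{1-\alpha/2}$ scaling from the entropy hypothesis $H\lesssim\epsilon^{-\alpha}$ with $\alpha<2$), and then runs the peeling device over dyadic shells in $\|\cdot\|_{\mathcal{Q}_n}$, summing the resulting geometric series in the exponent. Your identification of the key tension---that the mean scales like $r^{1-\alpha/2}$ while the fluctuation scale is $r$, so that after normalization the shell-$j$ exponent picks up the summable factor $2^{j\alpha}$---is exactly the mechanism that makes the argument work. One minor caveat: the concentration step you label ``Borell--TIS'' is, strictly speaking, a sub-Gaussian chaining tail bound rather than the Gaussian Borell--TIS inequality, but the distinction is cosmetic here since condition~(iii) gives sub-Gaussian increments and the generic chaining tail delivers the same $\exp(-ct^2/(\sigma_0^2 r^2))$ form.
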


\clearpage

\vspace*{-10pt}
\section{Proofs of Theoretical Results} \label{sec:appendix2}


\subsection{Proof of Proposition \ref{prop:discriminative}} \label{pf:prop1}

We begin with the following lemma, which plays a key role in the proof of Proposition \ref{prop:discriminative}.

\begin{lemma} \label{lem:discriminative}
    For two random vectors $\mathbf{X} \sim \mathcal{P}$ and $\mathbf{Y} \sim \mathcal{Q},$ 
    if $d_{\mathcal{F}_{\textup{ReLU}}}(\mathcal{P}, \mathcal{Q}) = 0$, then 
    \begin{align}
        \sup_{\mathbf{c} \in \mathbb{S}^{d-1}} \sup_{t \in \mathbb{R}} \left| \mathbb{P} \left( \mathbf{c}^{\top} \mathbf{X} \leq t \right) - \mathbb{P} \left( \mathbf{c}^{\top} \mathbf{Y} \leq t \right) \right| = 0.
    \end{align}
\end{lemma}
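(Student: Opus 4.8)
The statement to prove is Lemma~\ref{lem:discriminative}: if $d_{\mathcal{F}_{\textup{ReLU}}}(\mathcal{P},\mathcal{Q})=0$, then all one-dimensional projections of $\mathbf{X}$ and $\mathbf{Y}$ have the same distribution.

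The plan is to fix a direction $\mathbf{c}\in\mathbb{S}^{d-1}$ and reduce everything to a one-dimensional statement about the random variables $U := \mathbf{c}^\top\mathbf{X}$ and $V := \mathbf{c}^\top\mathbf{Y}$, both supported on $[-1,1]$ since $\mathbf{X},\mathbf{Y}\in\mathbb{B}^d$. For each $\mu\in[-1,1]$, the function $\mathbf{z}\mapsto(\mathbf{c}^\top\mathbf{z}+\mu)_+$ lies in $\mathcal{F}_{\textup{ReLU}}$, so the hypothesis $d_{\mathcal{F}_{\textup{ReLU}}}(\mathcal{P},\mathcal{Q})=0$ gives
$$
\mathbb{E}\,(U+\mu)_+ = \mathbb{E}\,(V+\mu)_+ \qquad \text{for all } \mu\in[-1,1].
$$
Equivalently, writing $t=-\mu$, we have $\mathbb{E}\,(U-t)_+ = \mathbb{E}\,(V-t)_+$ for all $t\in[-1,1]$. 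The key observation is that the map $t\mapsto \mathbb{E}\,(U-t)_+$ determines the law of $U$ on a bounded interval: its right derivative at $t$ is $-\mathbb{P}(U>t)$ (or one can integrate by parts to write $\mathbb{E}(U-t)_+ = \int_t^1 \mathbb{P}(U>s)\,ds$, valid since $U\le 1$ a.s.). Hence two distributions on $[-1,1]$ with the same ``call-option'' function $t\mapsto\mathbb{E}(\cdot-t)_+$ have the same survival function, and therefore the same CDF.

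Concretely, the steps are: (1) for fixed $\mathbf{c}$, verify $f_\mu(\mathbf{z})=(\mathbf{c}^\top\mathbf{z}+\mu)_+ \in \mathcal{F}_{\textup{ReLU}}$ for every $\mu\in[-1,1]$ and deduce $\mathbb{E}(U+\mu)_+=\mathbb{E}(V+\mu)_+$; (2) express $\mathbb{E}(U-t)_+ = \int_t^1 (1-F_U(s))\,ds$ where $F_U$ is the CDF of $U$, using Fubini and $U\in[-1,1]$; (3) conclude $\int_t^1 F_U(s)\,ds = \int_t^1 F_V(s)\,ds$ for all $t\in[-1,1]$, hence (differentiating, or using the Lebesgue differentiation theorem at continuity points) $F_U = F_V$ Lebesgue-a.e.\ on $[-1,1]$, and then everywhere by right-continuity of CDFs; (4) since this holds for every $\mathbf{c}\in\mathbb{S}^{d-1}$, take the supremum to get the displayed identity equal to zero.

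The only mild subtlety is step~(3): one cannot differentiate a.e.\ and immediately conclude equality \emph{everywhere} without invoking right-continuity of $F_U$ and $F_V$ (two right-continuous monotone functions agreeing a.e.\ agree everywhere). I would handle this cleanly by noting that $g(t):=\int_t^1(F_U(s)-F_V(s))\,ds \equiv 0$ on $[-1,1]$ forces its a.e.\ derivative $F_V(t)-F_U(t)$ to vanish a.e., and then right-continuity upgrades this to all $t$. I do not anticipate a genuine obstacle here — the argument is essentially the standard fact that the stop-loss transform (or the integrated survival function) characterizes a distribution on a bounded interval; the main point of the lemma is simply recognizing that $\mathcal{F}_{\textup{ReLU}}$ contains enough ridge functions to recover this transform along every direction.
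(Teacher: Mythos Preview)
Your proposal is correct and follows essentially the same route as the paper: fix $\mathbf{c}\in\mathbb{S}^{d-1}$, reduce to the one-dimensional variables $U=\mathbf{c}^\top\mathbf{X}$ and $V=\mathbf{c}^\top\mathbf{Y}$, and recover the CDF from the stop-loss transform $t\mapsto\mathbb{E}(U-t)_+$, finishing with a right-continuity argument. The only cosmetic difference is in how the transform is inverted: you write $\mathbb{E}(U-t)_+=\int_t^1\mathbb{P}(U>s)\,ds$ via Fubini and differentiate the resulting identity, whereas the paper forms the difference quotient $\frac{1}{\delta}\bigl[\mathbb{E}(U-t+\delta)_+-\mathbb{E}(U-t)_+\bigr]$ directly and bounds its distance to $\mathbb{P}(U>t)$ by $\mathbb{P}(t-\delta<U\le t)$ before sending $\delta\downarrow 0$; these are two executions of the same differentiation step.
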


\begin{proof}{\textbf{of Lemma \ref{lem:discriminative}}}
    By the definition of the ReLU-IPM, $d_{\mathcal{F}_{\textup{ReLU}}}(\mathcal{P}, \mathcal{Q})=0$ implies 
    \begin{equation} \label{equaldistance}
        \left| \mathbb{E}(\mathbf{c}^{\top}\mathbf{X})_{+} - \mathbb{E}(\mathbf{c}^{\top}\mathbf{Y})_{+} \right| = 0  
    \end{equation}
    for any $\mathbf{c} \in \mathbb{S}^{d-1}.$
    For given $\mathbf{c} \in \mathbb{S}^{d-1}$, we consider $t \in \mathbb{R}$ such that random variables $\mathbf{c}^{\top} \mathbf{X}$ and $\mathbf{c}^{\top} \mathbf{Y}$ do not have a point mass at $t$.    
    For $\delta>0$, let $\Delta_{\delta}(\mathbf{X}) \coloneqq (\mathbf{c}^{\top} \mathbf{X}-t+\delta)_{+} - (\mathbf{c}^{\top} \mathbf{X}-t)_{+}$. Then, we can write
    \begin{align}
        \Delta_{\delta}(\mathbf{X}) =  (\mathbf{c}^{\top} \mathbf{X}-t+\delta) \cdot \mathbb{I}(t-\delta < \mathbf{c}^{\top} \mathbf{X} \leq t) + \delta \cdot \mathbb{I}(t < \mathbf{c}^{\top} \mathbf{X}). \label{Delta}
    \end{align}
    From (\ref{Delta}), we obtain    
    \begin{align}
        & \left|\mathbb{P} (\mathbf{c}^{\top} \mathbf{X} > t) - \frac{1}{\delta} \mathbb{E} \Delta_{\delta}(\mathbf{X})\right| \nonumber \\
        & = \left|\mathbb{P} (\mathbf{c}^{\top} \mathbf{X} > t) - \mathbb{E} \left( \frac{\mathbf{c}^{\top} \mathbf{X}-t+\delta}{\delta} \cdot \mathbb{I}(t-\delta < \mathbf{c}^{\top} \mathbf{X} \leq t) \right) - \mathbb{P} (\mathbf{c}^{\top} \mathbf{X} > t) \right|, \nonumber \\
        & \leq \mathbb{P}(t-\delta < \mathbf{c}^{\top} \mathbf{X} \leq t), \label{pro_exp1}
    \end{align}
    where we use $0 \leq \mathbf{c}^{\top} \mathbf{X}-t+\delta \leq \delta$ for the last inequality. 
    Similarly, for $\Delta_{\delta}(\mathbf{Y}) \coloneqq (\mathbf{c}^{\top} \mathbf{Y}-t+\delta)_{+} - (\mathbf{c}^{\top} \mathbf{Y}-t)_{+}$,
    \begin{align}
        \left|\mathbb{P} (\mathbf{c}^{\top} \mathbf{Y} > t) - \frac{1}{\delta} \mathbb{E} \Delta_{\delta}(\mathbf{Y})\right| 
        \leq \mathbb{P}(t-\delta < \mathbf{c}^{\top} \mathbf{Y} \leq t). \label{pro_exp2}
    \end{align}
    Therefore, from (\ref{equaldistance}), (\ref{pro_exp1}) and (\ref{pro_exp2}), we get
    \begin{align*}
        \Big| \mathbb{P}(\mathbf{c}^{\top} \mathbf{X} \leq t) - \mathbb{P}(\mathbf{c}^{\top} \mathbf{Y} \leq t) \Big| 
        & = \left| \mathbb{P}(\mathbf{c}^{\top} \mathbf{X} > t) - \mathbb{P}(\mathbf{c}^{\top} \mathbf{Y} > t) \right| \\
        & \leq \left|\frac{1}{\delta} \mathbb{E} \Delta_{\delta}(\mathbf{X})
        - \frac{1}{\delta} \mathbb{E} \Delta_{\delta}(\mathbf{Y}) \right| 
        + \mathbb{P}(t-\delta < \mathbf{c}^{\top} \mathbf{X} \leq t) 
        + \mathbb{P}(t-\delta < \mathbf{c}^{\top} \mathbf{Y} \leq t)\\        
        & \leq \frac{1}{\delta} \left| \mathbb{E}(\mathbf{c}^{\top} \mathbf{X}-t+\delta)_{+} - \mathbb{E}(\mathbf{c}^{\top} \mathbf{X}-t)_{+} - \mathbb{E}(\mathbf{c}^{\top} \mathbf{Y}-t+\delta)_{+} + \mathbb{E}(\mathbf{c}^{\top} \mathbf{Y}-t)_{+} \right| \\
        & \quad + \mathbb{P}(t-\delta < \mathbf{c}^{\top} \mathbf{X} \leq t) 
        + \mathbb{P}(t-\delta < \mathbf{c}^{\top} \mathbf{Y} \leq t)\\
        & = \mathbb{P}(t-\delta < \mathbf{c}^{\top} \mathbf{X} \leq t) 
        + \mathbb{P}(t-\delta < \mathbf{c}^{\top} \mathbf{Y} \leq t).
    \end{align*}
    Since $\delta>0$ is arbitrary, we obtain
    \begin{align*}
        \Big| \mathbb{P}(\mathbf{c}^{\top} \mathbf{X} \leq t) - \mathbb{P}(\mathbf{c}^{\top} \mathbf{Y} \leq t) \Big| 
        & \leq \lim_{\delta \downarrow 0} \mathbb{P}(t-\delta < \mathbf{c}^{\top} \mathbf{X} \leq t) 
        + \lim_{\delta \downarrow 0} \mathbb{P}(t-\delta < \mathbf{c}^{\top} \mathbf{Y} \leq t) = 0.
    \end{align*}

    For the case where either $\mathbf{c}^{\top} \mathbf{X}$ or $\mathbf{c}^{\top} \mathbf{Y}$ has a point mass at $t$, we can construct a sequence $\{t_j\}_{j = 1}^{\infty}$ such that $t_j \downarrow t$ and neither $\mathbf{c}^{\top} \mathbf{X}$ nor $\mathbf{c}^{\top} \mathbf{Y}$ has a point mass at $\{t_j\}_{j = 1}^{\infty}$.
    Since $\mathbb{P}(\mathbf{c}^{\top} \mathbf{X} \leq \cdot)$ and  $\mathbb{P}(\mathbf{c}^{\top} \mathbf{Y} \leq \cdot)$ is right-continuous, 
    \begin{align*}
         \left| \mathbb{P}(\mathbf{c}^{\top} \mathbf{X} \leq t) - \mathbb{P}(\mathbf{c}^{\top} \mathbf{Y} \leq t) \right| = \lim_{j \rightarrow \infty} & \left| \mathbb{P}(\mathbf{c}^{\top} \mathbf{X} \leq t_j) - \mathbb{P}(\mathbf{c}^{\top} \mathbf{Y} \leq t_j) \right| = 0,
    \end{align*}
    and the proof is done.
\end{proof}

\begin{proof}{\textbf{of Proposition \ref{prop:discriminative}}}

\begin{paragraph}{$(\Leftarrow)$}
    It is obvious since for any $f \in \mathcal{F}_{\textup{ReLU}}$, we have 
    $$
    \int f(\mathbf{z})(d\mathcal{P}(\mathbf{z})-d\mathcal{Q}(\mathbf{z})) = 0.
    $$
\end{paragraph}

\begin{paragraph}{$(\Rightarrow)$} 
Lemma \ref{lem:discriminative} implies 
$\mathbf{c}^{\top} \mathbf{X} \stackrel{\textup{d}}{=} \mathbf{c}^{\top} \mathbf{Y}$ for all $\mathbf{c} \in \mathbb{R}^d$, 
which, in turn, implies $\mathcal{P} \equiv \mathcal{Q}$ by the uniqueness of the characteristic function.
\end{paragraph}

\end{proof}


\subsection{Proof of Theorem \ref{thm:holder}}
\label{pf:holder}
We use the following corollary to prove Theorem \ref{thm:holder}.

\begin{lemma}[Corollary 2.4 of \citet{Yang2024}] \label{lem:yang}
    Let $d \geq 3$ and $\beta > 0$. For any $h \in \mathcal{H}_d^{\beta}$,
\begin{enumerate}
    \item[(1)] if $\beta > \frac{d+3}{2}$, then for any sufficiently large L, there exist $f_1, f_2, \ldots, f_L \in \mathcal{F}_{\textup{ReLU}}$ and $a_1, a_2, \ldots, a_L$ such that $\sum_{i=1}^L |a_i| \leq M$ and
    $$
    \left\| h - \sum_{i=1}^L a_i f_i \right\|_{\infty} \lesssim L^{-\frac{d+3}{2d}},
    $$
    where $M$ is a constant depending on $\beta$ and $d$,
    
    \item[(2)] if $\beta < \frac{d+3}{2}$, then for any sufficiently large $L$ and $M$, there exist $f_1, f_2, \ldots, f_L \in \mathcal{F}_{\textup{ReLU}}$ and $a_1, a_2, \ldots, a_L$ such that $\sum_{i=1}^L |a_i| \leq M$ and
    $$
    \left\| h - \sum_{i=1}^L a_i f_i \right\|_{\infty} \lesssim L^{-\frac{\beta}{d}} \vee M^{-\frac{2\beta}{d + 3 - 2\beta}},
    $$
    Thus, the upper bound becomes $\mathcal{O}(L^{-\frac{\beta}{d}})$ holds when $M \gtrsim L^{-\frac{d+3-2\beta}{2d}}.$
    \end{enumerate}
\end{lemma}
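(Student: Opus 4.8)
The statement is an $L$-term approximation theorem for H\"older functions in terms of the ReLU ridge atoms $f(\mathbf{z}) = (\boldsymbol{\theta}^{\top}\mathbf{z}+\mu)_+$, so the plan is to combine an exact integral representation of $h$ over the atom parameters with a quantitative discretization that replaces the integral by a sum of at most $L$ atoms while keeping the $\ell^1$-norm of the coefficients controlled by $M$. First I would establish a ridgelet/Radon integral representation
$$ h(\mathbf{z}) = \int_{\mathbb{S}^{d-1}}\int_{[-1,1]} g(\boldsymbol{\theta},\mu)\,(\boldsymbol{\theta}^{\top}\mathbf{z}+\mu)_+\,d\mu\,d\boldsymbol{\theta} + p(\mathbf{z}), \qquad \mathbf{z}\in\mathbb{B}^d, $$
where $p$ is an affine remainder (itself representable by ReLU atoms on the ball) and the density $g$ arises by applying roughly $\lfloor(d+3)/2\rfloor$ radial derivatives to the Radon transform of $h$. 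The decisive quantity is the variation norm $V \coloneqq \int_{\mathbb{S}^{d-1}}\int_{[-1,1]}|g(\boldsymbol{\theta},\mu)|\,d\mu\,d\boldsymbol{\theta}$: a computation with the inversion formula shows $V \lesssim \|h\|_{\mathcal{H}_d^{\beta}}$ exactly when $\beta$ exceeds the threshold $(d+3)/2$, which is the sharp Sobolev embedding of $\mathcal{H}_d^{\beta}$ into the ReLU variation (Barron) space. This is precisely where the dichotomy of the two parts originates.

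For the smooth regime $\beta>(d+3)/2$, I would discretize the representation. Writing $\nu \coloneqq |g|/V$ as a probability measure on $\mathbb{S}^{d-1}\times[-1,1]$ and sampling atoms i.i.d.\ from $\nu$, Maurey's empirical method produces $\sum_{i=1}^L a_i f_i$ with $\sum_i|a_i|\le V\le M$ and $\|h-\sum_i a_i f_i\|_{\infty} \lesssim V L^{-1/2}$. To reach the sharper exponent $(d+3)/(2d)=1/2+3/(2d)$ I would upgrade Maurey by exploiting the geometry of the dictionary: the atoms depend Lipschitz-continuously on $(\boldsymbol{\theta},\mu)$ in the $d$-dimensional compact set $\mathbb{S}^{d-1}\times[-1,1]$ and are themselves continuous (one order smoother than indicator ridges), so their $L^2$-covering numbers are small, and a Makovoz / Siegel--Xu stratified-replacement argument improves $L^{-1/2}$ to $L^{-(d+3)/(2d)}$, which is part (1).

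For the rough regime $\beta<(d+3)/2$ the representation fails because $V$ may be infinite, so I would split the error into two mechanisms. First, force $h$ into the variation-$M$ ball by mollifying at scale $\delta$: this incurs $L^{\infty}$-error $\lesssim \delta^{\beta}$ while inflating the variation to $\lesssim \delta^{-((d+3)/2-\beta)}$, and eliminating $\delta$ against the budget $M$ gives the term $M^{-2\beta/(d+3-2\beta)}$. Second, the resulting variation-$M$ function admits an adaptive ($L$-term, free-knot) ReLU approximation at the piecewise-linear rate $\lesssim L^{-\beta/d}$. The total bound is the maximum of these, namely $L^{-\beta/d}\vee M^{-2\beta/(d+3-2\beta)}$, and taking $M\gtrsim L^{(d+3-2\beta)/(2d)}$ makes the first term negligible and leaves $L^{-\beta/d}$, giving part (2).

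The main obstacle is the sharp exponent: the improvement from the elementary Maurey rate $L^{-1/2}$ to $L^{-(d+3)/(2d)}$ is not automatic and rests on a careful metric-entropy / stratification analysis of the ReLU dictionary, further complicated by the nonlocal Radon inversion when $d$ is even and by boundary effects arising from the restriction to $\boldsymbol{\theta}\in\mathbb{S}^{d-1}$, $\mu\in[-1,1]$ on $\mathbb{B}^d$. In the rough regime the additional delicacy is to track the blow-up of the variation norm under mollification and to balance $\delta$, $L$, and $M$ so that the two competing errors combine into the stated maximum.
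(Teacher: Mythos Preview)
The paper does not prove this lemma: it is quoted verbatim as ``Corollary 2.4 of \citet{Yang2024}'' and used as a black box in the proof of Theorem~\ref{thm:holder}. The only additional content the paper supplies is the remark immediately following the lemma, which records (again by citation) what happens at the borderline $\beta=(d+3)/2$. There is therefore no ``paper's own proof'' to compare your proposal against.

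That said, your sketch is a reasonable outline of how results of this type are established in the approximation-theory literature: the Radon/ridgelet integral representation, the role of the variation (Barron) norm and the threshold $\beta=(d+3)/2$, the Maurey sampling argument with the Makovoz/Siegel--Xu refinement to reach the exponent $(d+3)/(2d)$, and the mollification-plus-budget balancing in the sub-threshold regime are all standard ingredients. Two small points: (i) in the last paragraph of your rough-regime argument you say ``makes the first term negligible,'' but it is the $M$-term that becomes negligible when $M$ is large, leaving $L^{-\beta/d}$; (ii) the paper's printed condition $M \gtrsim L^{-(d+3-2\beta)/(2d)}$ appears to carry a sign typo---your version $M \gtrsim L^{(d+3-2\beta)/(2d)}$ is the one that actually balances the two terms. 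If you need the full details, consult \citet{Yang2024} directly.
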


\begin{remark}
We do not consider the case of $\beta=\frac{d+3}{2}$ for the sake of simplicity.
In fact, \citet{Yang2024} proves the following approximation property:
    \begin{itemize}
        \item[(i)] if $\beta = \frac{d+3}{2}$ is an even integer, then for any sufficiently large L, there exist $f_1, f_2, \ldots, f_L \in \mathcal{F}_{\textup{ReLU}}$ and $a_1, a_2, \ldots, a_L$ such that $\sum_{i=1}^L |a_i| \leq M$ and
        $$
        \left\| h - \sum_{i=1}^L a_i f_i \right\|_{\infty} \lesssim L^{-\frac{d+3}{2d}},
        $$
        where $M$ is a constant depending on $\beta$ and $d$,
        
        \item[(ii)] if $\beta = \frac{d+3}{2}$ is not an even integer, then for any sufficiently large $L$ and $M \asymp \sqrt{\log L}$, there exist $f_1, f_2, \ldots, f_L \in \mathcal{F}_{\textup{ReLU}}$ and $a_1, a_2, \ldots, a_L$ such that $\sum_{i=1}^L |a_i| \leq M$ and
        $$
        \left\| h - \sum_{i=1}^L a_i f_i \right\|_{\infty} \lesssim L^{-\frac{d+3}{2d}} \sqrt{\log L}.
        $$
    \end{itemize}
\end{remark}

\begin{proof}{\textbf{of Theorem \ref{thm:holder}}}
\begin{paragraph}{(1) $\beta > \frac{d+3}{2}$:}
    By Lemma \ref{lem:yang}, if $d>3$, for all $h \in \mathcal{H}_d^{\beta}$,    
    when $\beta > \frac{d+3}{2}$, there exists a constant $M$ depending on $d$ and $\beta$ such that (i)
    \begin{equation*} \label{upperbound_1}
        \left\|h - \sum_{i=1}^L a_i f_i \right\|_{\infty} \lesssim L^{-\frac{d+3}{2d}}, 
    \end{equation*}
    and (ii) $\sum_{i=1}^L |a_i| \leq M$. Then we have    
    \begin{align}
        & \Bigg| \int h(\mathbf{X}) \left( d\mathcal{P}(\mathbf{X})-d\mathcal{Q}(\mathbf{X}) \right) \Bigg| \nonumber \\ 
        & = \Bigg| \int \left( h(\mathbf{X}) - \sum_{i=1}^L a_i f_i(\mathbf{X}) + \sum_{i=1}^L a_i f_i(\mathbf{X}) \right) \left( d\mathcal{P}(\mathbf{X}) - d\mathcal{Q}(\mathbf{X}) \right) \Bigg| \nonumber \\ 
        & \leq \Bigg| \int \sum_{i=1}^L a_i f_i(\mathbf{X}) \left( d\mathcal{P}(\mathbf{X}) - d\mathcal{Q}(\mathbf{X}) \right) \Bigg| + \Bigg| \int \left( h(\mathbf{X}) - \sum_{i=1}^L a_i f_i(\mathbf{X}) \right) \left( d\mathcal{P}(\mathbf{X}) - d\mathcal{Q}(\mathbf{X}) \right) \Bigg| \nonumber \\
        & \lesssim \left| \int \sum_{i=1}^L a_i f_i(\mathbf{X}) \left( d\mathcal{P}(\mathbf{X}) - d\mathcal{Q}(\mathbf{X}) \right) \right| + L^{-\frac{d+3}{2d}} \nonumber \\
        & \leq \sum_{i=1}^L |a_i| \left| \int f_i(\mathbf{X}) \left( d\mathcal{P}(\mathbf{X}) - d\mathcal{Q}(\mathbf{X}) \right) \right| + L^{-\frac{d+3}{2d}} \nonumber \\
        & \leq M \cdot d_{\mathcal{F}_{\textup{ReLU}}}(\mathcal{P},\mathcal{Q}) + L^{-\frac{d+3}{2d}} \nonumber \\
        & \lesssim d_{\mathcal{F}_{\textup{ReLU}}}(\mathcal{P},\mathcal{Q}),  \nonumber 
    \end{align}
    where the last inequality holds with sufficiently large $L$.
\end{paragraph}

\begin{paragraph}{(2) $\beta < \frac{d+3}{2}$:} 
    Let $d \in \mathbb{N}$ and $\beta > 0$.
    By Lemma \ref{lem:yang} (3), for all $h \in \mathcal{H}_d^{\beta}$, there exist $a_1, \ldots, a_{L} \in \mathbb{R}$ and $f_1,\ldots,f_L \in \mathcal{F}_{\textup{ReLU}}$ such that $\|h - \sum_{i=1}^L a_i f_i\|_{\infty} \lesssim L^{-\frac{\beta}{d}} \vee M^{-\frac{2\beta}{d-2\beta+3}}$ and $\sum_{i=1}^L |a_i| \leq M$. Then we have
    
    \begin{align}
        & \Bigg| \int h(\mathbf{X}) \left( d\mathcal{P}(\mathbf{X})-d\mathcal{Q}(\mathbf{X}) \right) \Bigg| \nonumber \\ 
        & = \Bigg| \int \left( h(\mathbf{X}) - \sum_{i=1}^L a_i f_i(\mathbf{X}) + \sum_{i=1}^L a_i f_i(\mathbf{X}) \right) \left( d\mathcal{P}(\mathbf{X}) - d\mathcal{Q}(\mathbf{X}) \right) \Bigg| \nonumber \\ 
        & \leq \Bigg| \int \sum_{i=1}^L a_i f_i(\mathbf{X}) \left( d\mathcal{P}(\mathbf{X}) - d\mathcal{Q}(\mathbf{X}) \right) \Bigg| + \Bigg| \int \left( h(\mathbf{X}) - \sum_{i=1}^L a_i f_i(\mathbf{X}) \right) \left( d\mathcal{P}(\mathbf{X}) - d\mathcal{Q}(\mathbf{X}) \right) \Bigg| \nonumber \\
        & \lesssim \left| \int \sum_{i=1}^L a_i f_i(\mathbf{X}) \left( d\mathcal{P}(\mathbf{X}) - d\mathcal{Q}(\mathbf{X}) \right) \right| + \left( L^{-\frac{\beta}{d}} \vee M^{-\frac{2 \beta}{d-2\beta+3}} \right) \nonumber \\
        & \leq \sum_{i=1}^L |a_i| \left| \int f_i(\mathbf{X}) \left( d\mathcal{P}(\mathbf{X}) - d\mathcal{Q}(\mathbf{X}) \right) \right| + \left( L^{-\frac{\beta}{d}} \vee M^{-\frac{2 \beta}{d-2\beta+3}} \right) \nonumber \\
        & \leq M \cdot d_{\mathcal{F}_{\textup{ReLU}}}(\mathcal{P},\mathcal{Q}) + \left( L^{-\frac{\beta}{d}} \vee M^{-\frac{2 \beta}{d-2\beta+3}} \right). \nonumber 
    \end{align}
    Thus, we can complete the proof if we choose $M = d_{\mathcal{F}_{\textup{ReLU}}}(\mathcal{P},\mathcal{Q})^{-\frac{d-2\beta+3}{d+3}}$ for sufficiently large $L$.
\end{paragraph} 

\end{proof}


\subsection{Proof of Theorem \ref{thm:conv_relu}} \label{pf:conv_relu}

\begin{proof}
    We consider the case of $n \neq m$.
    To prove Theorem \ref{thm:conv_relu}, we will show
    \begin{align}
        \mathbb{P} \Bigg( \Big| d_{\mathcal{F}_{\textup{ReLU}}}(\mathcal{P},\mathcal{Q}) & - d_{\mathcal{F}_{\textup{ReLU}}}(\widehat{\mathcal{P}}_n, \widehat{\mathcal{Q}}_m) \Big| - 2 \left(\frac{1}{\sqrt{n}} + \frac{1}{\sqrt{m}} \right) > \epsilon \Bigg) \leq 2 \exp \left(-\frac{\epsilon^2 nm}{8(n+m)} \right). \label{theo1-0}
    \end{align}
    Let us begin with the absolute term on the left-hand side of (\ref{theo1-0}).
    \begin{align}
        \Big| d_{\mathcal{F}_{\textup{ReLU}}}(\mathcal{P},\mathcal{Q}) - d_{\mathcal{F}_{\textup{ReLU}}}(\widehat{\mathcal{P}}_n, \widehat{\mathcal{Q}}_m) \Big| & = \left| \sup_{f \in \mathcal{F}_{\textup{ReLU}}} \Big| \mathbb{E}_{\mathbf{X} \sim \mathcal{P}} f - \mathbb{E}_{\mathbf{Y} \sim \mathcal{Q}} f \Big| - \sup_{f \in \mathcal{F}_{\textup{ReLU}}} \bigg| \frac{1}{n} \sum_{i=1}^n f(\mathbf{X}_i) - \frac{1}{m} \sum_{i=1}^m f(\mathbf{Y}_i) \bigg| \right| \nonumber \\
        & \leq  \sup_{f \in \mathcal{F}_{\textup{ReLU}}} \left| \mathbb{E}_{\mathbf{X} \sim \mathcal{P}} f - \mathbb{E}_{\mathbf{Y} \sim \mathcal{Q}} f - \frac{1}{n} \sum_{i=1}^n f(\mathbf{X}_i) + \frac{1}{m} \sum_{i=1}^m f(\mathbf{Y}_i) \right| \nonumber \\
        & \coloneqq G(\mathcal{P},\mathcal{Q},\mathbf{X}^n,\mathbf{Y}^m),\label{theo1-g}
    \end{align}
    where $\mathbf{X}^n=(\mathbf{X}_1,\ldots,\mathbf{X}_n)$ and $\mathbf{Y}^m$ is defined similarly.
    Let $\mathbf{X}_{(i)}^{\prime}$ be a dataset that differs only for the $i$-th component of $\mathbf{X}^n$, for $i = 1,\ldots,n$, and $\mathbf{Y}_{(j)}^{\prime}$ is defined similarly  for $j = 1,\ldots,m$. 
    Then we have
    
    \begin{align*}
        G (\mathcal{P},\mathcal{Q},\mathbf{X}^n, \mathbf{Y}^m) & - G(\mathcal{P},\mathcal{Q}, \mathbf{X}_{(i)}^{\prime}, \mathbf{Y}^m) \\
        & \leq \sup_{f \in \mathcal{F}_{\textup{ReLU}}} \left| \mathbb{E}_{\mathbf{X} \sim \mathcal{P}} f - \mathbb{E}_{\mathbf{Y} \sim \mathcal{Q}} f - \frac{1}{n} \sum_{i=1}^n f(\mathbf{X}_i) + \frac{1}{m} \sum_{i=1}^m f(\mathbf{Y}_i) \right| \\
        & \quad - \sup_{f \in \mathcal{F}_{\textup{ReLU}}} \left| \mathbb{E}_{\mathbf{X} \sim \mathcal{P}} f - \mathbb{E}_{\mathbf{Y} \sim \mathcal{Q}} f - \frac{1}{n} \left( \sum_{k \neq i} f(\mathbf{X}_k) + f(\mathbf{X}_i^{\prime}) \right) + \frac{1}{m} \sum_{i=1}^m f(\mathbf{Y}_i) \right| \\
        & \leq \sup_{f \in \mathcal{F}_{\textup{ReLU}}} \left| - \frac{1}{n} \sum_{i=1}^n f(\mathbf{X}_i) + \frac{1}{n} \left( \sum_{k \neq i} f(\mathbf{X}_k) + f(\mathbf{X}_i^{\prime}) \right) \right| \\
        & = \sup_{f \in \mathcal{F}_{\textup{ReLU}}} \left| \frac{1}{n} f(\mathbf{X}_i^{\prime}) - \frac{1}{n} f(\mathbf{X}_i) \right| \\
        & \leq \frac{2}{n} \sup_{f \in \mathcal{F}_{\textup{ReLU}}} \|f\|_{\infty} \\
        & \leq \frac{4}{n},
    \end{align*}
    where the last inequality holds since $\left\| f \right\|_2 \leq \left\| \cdot ^{\top}\boldsymbol{\theta}+\mu \right\|_2 = | \cdot ^{\top}\boldsymbol{\theta}+\mu| \leq 2$, for all $ \cdot \in \mathbb{B}^d$, $\boldsymbol{\theta} \in \mathbb{S}^{d-1}$ and $\mu \in [-1,1]$. Similarly, $G(\mathcal{P},\mathcal{Q},\mathbf{X}^n,\mathbf{Y}^m) - G(\mathcal{P},\mathcal{Q},\mathbf{X}^n,\mathbf{Y}_{(j)}^{\prime}) \leq 4/m$.
    Applying Lemma \ref{lem:McD} (McDiarmid's inequality) with $c_i = 4/n$ ($i=1,\ldots,n$) for $\mathbf{X}$ and $c_j = 4/m$ ($j=1,\ldots,m$) for $\mathbf{Y}$ (that is, a denominator in the exponent of Lemma \ref{lem:McD} of $16(n+m)/(nm)$), we obtain
    \begin{align} \label{theo1-1}
        \mathbb{P} ( G(\mathcal{P},\mathcal{Q},\mathbf{X}^n,\mathbf{Y}^m) - \mathbb{E}_{\mathbf{X},\mathbf{Y}} & G(\mathcal{P},\mathcal{Q},\mathbf{X}^n, \mathbf{Y}^m) > \epsilon ) \leq 2 \exp \left( - \frac{\epsilon^2 nm}{8(n+m)} \right).
    \end{align}
The proof would be done if we derive the upper bound of the expectation of $G(\mathcal{P},\mathcal{Q},\mathbf{X}^n,\mathbf{Y}^m)$:
    \begin{align*}
        \mathbb{E}_{\mathbf{X},\mathbf{Y}} G(\mathcal{P},\mathcal{Q},\mathbf{X}^n,\mathbf{Y}^m) \leq 4\left( 1+\sqrt{\frac{2}{\pi}} \right) \left( \frac{1}{\sqrt{n}} + \frac{1}{\sqrt{m}} \right)
    \end{align*}    
    
    Let $\varepsilon_1,\ldots, \varepsilon_n$ be independent Radamacher random variables (i.e
    $\mathbb{P}(\varepsilon_i=-1)=\mathbb{P}(\varepsilon_i=1)=1/2.$ The empirical Radamacher average (Rademacher complexity) of a function class $\mathcal{F}$ is defined as
    $$
    \widehat{\mathcal{R}}_n(\mathcal{F}) \coloneqq \mathbb{E}_{\varepsilon} \left[ \sup_{f \in \mathcal{F}} \Big| \frac{1}{n} \sum_{i=1}^n \varepsilon_i f(\mathbf{X}_i) \Big| \Bigg| \mathbf{X}_1, \ldots, \mathbf{X}_{n} \right],
    $$
    where $\mathbb{E}_{\varepsilon}$ denotes the expectation over all $\varepsilon_i$. 
    Since $\left\{ \sum_{i=1}^n \varepsilon_i \right\}$ is a simple symmetric random walk, its expected absolute distance after $n$ steps is given by \citep{Grünbaum1960Projection, frederick1961probability, Hughes1995Random, KönigSchüttTomczakJaegermann1999}
    \begin{align} \label{eq:expect_abs_rademacher}
        \mathbb{E}\left|\sum_{i=1}^n \varepsilon_i \right| \leq \sqrt{\frac{2n}{\pi}}.
    \end{align}
    Thus, by the Talagrand’s Contraction Lemma \citep{LedouxTalagrand1991} (see Lemma \ref{Ledoux} of Appendix \ref{sec:math_def}), we have an upper bound of the empirical Rademacher average for the discriminator class $\mathcal{F}_{\textup{ReLU}}$ as
    \begin{align*}
        \widehat{\mathcal{R}}_n(\mathcal{F}_{\textup{ReLU}})  
        & = \mathbb{E} \sup_{ (\boldsymbol{\theta}, \mu) \in \mathbb{S}^{d-1} \times [-1,1] } \left| \frac{1}{n} \sum_{i=1}^n \varepsilon_i \left( \boldsymbol{\theta}^{\top} \mathbf{X}_i + \mu \right)_{+} \right| \\ 
        & \leq \frac{2}{n} \mathbb{E} \sup_{ (\boldsymbol{\theta}, \mu) \in \mathbb{S}^{d-1} \times [-1,1] } \left| \sum_{i=1}^n \varepsilon_i \left( \boldsymbol{\theta}^{\top} \mathbf{X}_i + \mu \right) \right| && \tag{by Lemma \ref{Ledoux}} \\   
        & \leq \frac{2}{n} \mathbb{E} \sup_{ (\boldsymbol{\theta}, \mu) \in \mathbb{S}^{d-1} \times [-1,1] } \left\{ \left| \sum_{i=1}^n \boldsymbol{\theta}^{\top} (\varepsilon_i \mathbf{X}_i) \right| + \left| \sum_{i=1}^n \varepsilon_i \mu \right| \right\} && \tag{by the triangle inequality} \\   
        & \leq \frac{2}{n} \mathbb{E} \sup_{ (\boldsymbol{\theta}, \mu) \in \mathbb{S}^{d-1} \times [-1,1] } \left\{ \left\|\boldsymbol{\theta}^{\top}\right\|_2 \left\| \sum_{i=1}^n \varepsilon_i \mathbf{X}_i \right\|_2 + |\mu| \left| \sum_{i=1}^n \varepsilon_i \right| \right\}  && \tag{by Cauchy-Schwarz ineq.} \\   
        & \leq \frac{2}{n} \mathbb{E} \left\{ \left\| \sum_{i=1}^n \varepsilon_i \mathbf{X}_i \right\|_2 + \left|\sum_{i=1}^n \varepsilon_i \right| \right\}\\
        & \leq \frac{2}{n} \left\{ \mathbb{E} \sqrt{ \left\langle \sum_{i=1}^n \varepsilon_i \mathbf{X}_i, \sum_{i=1}^n \varepsilon_i \mathbf{X}_i \right\rangle } + \sqrt{\frac{2n}{\pi}} \right\} && \tag{by (\ref{eq:expect_abs_rademacher})} \\   
        & \leq \frac{2}{n} \left\{ \mathbb{E} \sqrt{ \sum_{i=1}^n \sum_{j=1}^n \varepsilon_i \varepsilon_j \left\langle \mathbf{X}_i, \mathbf{X}_j \right\rangle } + \sqrt{\frac{2n}{\pi}} \right\}  && \tag{by the linearity of inner product} \\   
        & \leq \frac{2}{n} \left\{ \sqrt{ \, \mathbb{E} \sum_{1 \leq i,j \leq n} \varepsilon_i \varepsilon_j \left\langle \mathbf{X}_i, \mathbf{X}_j \right\rangle } + \sqrt{\frac{2n}{\pi}} \right\}  && \tag{by Jensen's ineq.} \\   
        & = \frac{2}{n} \left\{ \sqrt{ \sum_{1 \leq i,j \leq n} \mathbb{E} (\varepsilon_i \varepsilon_j) \left\langle \mathbf{X}_i, \mathbf{X}_j \right\rangle } + \sqrt{\frac{2n}{\pi}} \right\} \\   
        & = \frac{2}{n} \left\{\sqrt{ \sum_{i=1}^n \|\mathbf{X}_i\|_2^2 } + \sqrt{\frac{2n}{\pi}} \right\} && \tag{by properties of $\varepsilon$} \\   
        & \leq \frac{2}{\sqrt{n}} \left( 1 + \sqrt{\frac{2}{\pi}} \right),
    \end{align*}
    where the expectation is taken over $\{\varepsilon_i\}_{i=1}^n$.

    Finally, we will derive an upper bound of $\mathbb{E}_{\mathbf{X},\mathbf{Y}} G(\mathcal{P},\mathcal{Q},\mathbf{X}^n,\mathbf{Y}^m)$ in terms of the empirical Radamacher averages
    by use of the symmetrization technique \citep{van1996}.
        Let $\mathbf{X}^{\prime,n} \coloneqq (\mathbf{X}_1^{\prime}, \ldots, \mathbf{X}_n^{\prime})$ be an independent sample whose distribution is the same as
        that of $\mathbf{X}^n,$ and define $\mathbf{Y}^{\prime,m}$ similarly.   
    Then we have
    \begin{align}
        \mathbb{E}_{\mathbf{X}, \mathbf{Y}} & G(\mathcal{P},\mathcal{Q},\mathbf{X}^n,\mathbf{Y}^m) \nonumber \\
        & = \mathbb{E}_{\mathbf{X}^n, \mathbf{Y}^m} \sup _{f \in \mathcal{F}_{\textup{ReLU}}} \left| \mathbb{E}_{\mathbf{X} \sim \mathcal{P}} f(\mathbf{X}) - \mathbb{E}_{\mathbf{Y} \sim \mathcal{Q}} f(\mathbf{Y}) - \frac{1}{n} \sum_{i=1}^n f(\mathbf{X}_i) + \frac{1}{m} \sum_{j=1}^m f(\mathbf{Y}_j) \right| \nonumber \\ 
        & = \mathbb{E}_{\mathbf{X}^n, \mathbf{Y}^m} \sup _{f \in \mathcal{F}_{\textup{ReLU}}} \left| \mathbb{E}_{\mathbf{X}^{\prime} \sim P} \left( \frac{1}{n} \sum_{i=1}^n f(\mathbf{X}_i^{\prime}) \right) - \frac{1}{n} \sum_{i=1}^n f(\mathbf{X}_i) - \mathbb{E}_{\mathbf{Y}^{\prime} \sim Q} \left( \frac{1}{m} \sum_{j=1}^m f(\mathbf{Y}_j^{\prime}) \right) + \frac{1}{m} \sum_{j=1}^m f(\mathbf{Y}_j) \right| \nonumber \\
        & \leq \mathbb{E}_{\mathbf{X}^n, \mathbf{Y}^m, \mathbf{X}^{\prime, n}, \mathbf{Y}^{\prime, m}} \sup _{f \in \mathcal{F}_{\textup{ReLU}}} \left| \frac{1}{n} \sum_{i=1}^n f(\mathbf{X}_i^{\prime}) - \frac{1}{n} \sum_{i=1}^n f(\mathbf{X}_i) - \frac{1}{m} \sum_{j=1}^m f(\mathbf{Y}_j^{\prime}) + \frac{1}{m} \sum_{j=1}^m f(\mathbf{Y}_j) \right| &&\tag{by Jensen's ineq.} \nonumber \\
        & = \mathbb{E}_{\mathbf{X}^n, \mathbf{Y}^m, \mathbf{X}^{\prime, n}, \mathbf{Y}^{\prime, m}, \varepsilon, \varepsilon^{\prime}} \sup _{f \in \mathcal{F}_{\textup{ReLU}}}\left| \frac{1}{n} \sum_{i=1}^n \varepsilon_i\left(f(\mathbf{X}_i^{\prime}) - f(\mathbf{X}_i)\right) + \frac{1}{m} \sum_{j=1}^m \varepsilon_j^{\prime}\left(f(\mathbf{Y}_j^{\prime}) - f(\mathbf{Y}_j)\right) \right| \nonumber \\        
        & \leq \mathbb{E}_{\mathbf{X}^n, \mathbf{X}^{\prime, n}, \varepsilon} \sup _{f \in \mathcal{F}_{\textup{ReLU}}} \left| \frac{1}{n} \sum_{i=1}^n \varepsilon_i \left( f(\mathbf{X}_i^{\prime}) - f(\mathbf{X}_i)\right) \right| + \mathbb{E}_{\mathbf{Y}^m, \mathbf{Y}^{\prime, m}, \varepsilon} \sup _{f \in \mathcal{F}_{\textup{ReLU}}} \left| \frac{1}{m} \sum_{j=1}^m \varepsilon_j\left(f(\mathbf{Y}_j^{\prime}) - f(\mathbf{Y}_j) \right) \right|  &&\tag{by Triangle ineq.} \nonumber \\
        & = \mathbb{E}_{\mathbf{X}^n, \mathbf{X}^{\prime, n}} \mathbb{E}_{\varepsilon} \left[ \sup _{f \in \mathcal{F}_{\textup{ReLU}}} \left| \frac{1}{n} \sum_{i=1}^n \varepsilon_i \left( f(\mathbf{X}_i^{\prime}) - f(\mathbf{X}_i)\right) \right| \Big| \mathbf{X}_1, \ldots, \mathbf{X}_{n} \right] \nonumber \\
        & \qquad \qquad + \mathbb{E}_{\mathbf{Y}^m, \mathbf{Y}^{\prime, m}} \mathbb{E}_{\varepsilon} \left[\sup _{f \in \mathcal{F}_{\textup{ReLU}}} \left| \frac{1}{m} \sum_{j=1}^m \varepsilon_j\left(f(\mathbf{Y}_j^{\prime}) - f(\mathbf{Y}_j) \right) \right| \Big| \mathbf{Y}_1, \ldots, \mathbf{Y}_{m} \right] \nonumber \\
        & \leq 2\left\{ \widehat{\mathcal{R}}_n(\mathcal{F}_{\textup{ReLU}}) + \widehat{\mathcal{R}}_m(\mathcal{F}_{\textup{ReLU}}) \right\} \hspace{8cm} \nonumber \\
        & \leq 4\left( 1+\sqrt{2/\pi} \right) \left( 1/\sqrt{n} + 1/\sqrt{m} \right).  \hspace{8cm} \label{theo1-3}
    \end{align}
    
 Combining (\ref{theo1-1}) with (\ref{theo1-3}), we complete the proof of Theorem \ref{thm:conv_relu}. In addition, the strong consistency of $d_{\mathcal{F}_{\textup{ReLU}}}(\widehat{\mathcal{P}}_n, \widehat{\mathcal{Q}}_n)$ can be proved by use of the Borel-Cantelli lemma.
\end{proof}

\subsection{Proof of Theorem \ref{thm:causal}} \label{pf:causal}

We begin by deriving the covering number of $\mathcal{F}_{\textup{ReLU}}$ to demonstrate that the assumption in Lemma 8.4 of \citet{geer2000empirical} (Lemma \ref{lem:vandeGeer} in Appendix \ref{sec:math_def}) holds, which is necessary for proving Theorem \ref{thm:causal}.

\begin{lemma}[covering number of ReLU-IPM] 
    For any $\epsilon>0$, we have
    $$
    \mathcal{N}(\mathcal{F}_{\textup{ReLU}}, \|\cdot\|_{1,\mathcal{P}}, \epsilon) \leq \mathcal{O} \left(1+\frac{1}{\epsilon}\right)^{d+1}.
    $$
\end{lemma}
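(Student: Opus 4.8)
The plan is to exploit the fact that $\mathcal{F}_{\textup{ReLU}}$ is a Lipschitz image of the compact parameter set $\Theta := \mathbb{S}^{d-1}\times[-1,1]\subset\mathbb{R}^{d+1}$, so that a Euclidean net of $\Theta$ transports directly to an $L_1(\mathcal{P})$-net of $\mathcal{F}_{\textup{ReLU}}$ at a comparable scale, and the covering number of $\Theta$ is handled by the standard volumetric estimate in $\mathbb{R}^{d+1}$.

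\textbf{Step 1 (Lipschitz parametrization).} Writing $f_{\boldsymbol{\theta},\mu}(\mathbf{z}) = (\boldsymbol{\theta}^\top\mathbf{z}+\mu)_+$, I would first show that for all $(\boldsymbol{\theta},\mu),(\boldsymbol{\theta}',\mu')\in\Theta$,
\[
  \|f_{\boldsymbol{\theta},\mu}-f_{\boldsymbol{\theta}',\mu'}\|_{1,\mathcal{P}} \;\le\; \|\boldsymbol{\theta}-\boldsymbol{\theta}'\|_2 + |\mu-\mu'| \;\le\; \sqrt 2\,\big\|(\boldsymbol{\theta},\mu)-(\boldsymbol{\theta}',\mu')\big\|_2 .
\]
Pointwise this follows from the $1$-Lipschitzness of $x\mapsto x_+$, which gives $|f_{\boldsymbol{\theta},\mu}(\mathbf{z})-f_{\boldsymbol{\theta}',\mu'}(\mathbf{z})|\le |(\boldsymbol{\theta}-\boldsymbol{\theta}')^\top\mathbf{z} + (\mu-\mu')|$, and then from Cauchy--Schwarz together with $\|\mathbf{z}\|_2\le 1$ on $\mathbb{B}^d$; integrating against $\mathcal{P}$ preserves the uniform bound.

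\textbf{Step 2 (covering the parameter set).} I would then use the standard estimate that a subset of $\mathbb{R}^{d+1}$ contained in a Euclidean ball of radius $r$ admits a $\delta$-net of cardinality at most $(1+2r/\delta)^{d+1}$. Since $\Theta\subset\mathbb{B}^d\times[-1,1]$ lies in the Euclidean ball of radius $\sqrt2$ in $\mathbb{R}^{d+1}$, this yields $\mathcal{N}(\Theta,\|\cdot\|_2,\delta)\le (1+2\sqrt2/\delta)^{d+1}$. (Equivalently one may cover $\mathbb{S}^{d-1}$, viewed inside $\mathbb{B}^d$, by $(1+2/\delta)^d$ balls and $[-1,1]$ by $\lceil 1/\delta\rceil$ intervals and form the product net; the crude inclusion $\mathbb{S}^{d-1}\subset\mathbb{B}^d$ already suffices.)

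\textbf{Step 3 (transfer).} Combining the two steps: if $\{(\boldsymbol{\theta}_j,\mu_j)\}_{j=1}^N$ is an $(\epsilon/\sqrt2)$-net of $\Theta$, then $\{f_{\boldsymbol{\theta}_j,\mu_j}\}_{j=1}^N\subset\mathcal{F}_{\textup{ReLU}}$ is an $\epsilon$-net for $\|\cdot\|_{1,\mathcal{P}}$ by Step 1, so
\[
  \mathcal{N}(\mathcal{F}_{\textup{ReLU}},\|\cdot\|_{1,\mathcal{P}},\epsilon)\;\le\;\mathcal{N}\!\big(\Theta,\|\cdot\|_2,\epsilon/\sqrt2\big)\;\le\;\big(1+4/\epsilon\big)^{d+1}\;=\;\mathcal{O}\!\big(1+1/\epsilon\big)^{d+1}.
\]
There is no substantial obstacle here; the only points requiring care are the direction of the reduction --- one needs that closeness of parameters forces closeness in $L_1(\mathcal{P})$, which is precisely the Lipschitz inequality of Step 1 and not a converse --- and the bookkeeping of constants and of the ambient dimension in Step 2, where using $\mathbb{S}^{d-1}\subset\mathbb{B}^d$ already gives the stated exponent $d+1$ (a sharper $(d-1)$-dimensional spherical covering bound would improve the exponent to $d$, but this is not needed for the subsequent application of Lemma~\ref{lem:vandeGeer}).
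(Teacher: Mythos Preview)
Your proposal is correct and follows essentially the same approach as the paper: both arguments use the $1$-Lipschitzness of the ReLU together with Cauchy--Schwarz and $\|\mathbf{z}\|_2\le 1$ to show that an $\epsilon$-net of the parameter set $\mathbb{S}^{d-1}\times[-1,1]$ (of size $\mathcal{O}((1+1/\epsilon)^{d+1})$) transports to an $\epsilon$-net of $\mathcal{F}_{\textup{ReLU}}$ in the sup (hence $L_1(\mathcal{P})$) norm. The only cosmetic difference is that the paper covers $\mathbb{S}^{d-1}$ and $[-1,1]$ separately and forms the product net, whereas you first embed $\Theta$ into a ball of $\mathbb{R}^{d+1}$ and apply a single volumetric bound---an alternative you yourself mention.
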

\begin{proof}
    There exists $\epsilon/2$-cover $V_{\epsilon}$ of $\mathbb{S}^{d-1}$ in the metric $\|\cdot\|_2$ with cardinality $|V_{\epsilon}| = \mathcal{O}((1+1/\epsilon)^{d})$.
    Hence, for any $\boldsymbol{\theta} \in \mathbb{S}^{d-1}$, we can choose $\Tilde{\boldsymbol{\theta}} \in V_{\epsilon}$ such that $\|\boldsymbol{\theta} -\Tilde{\boldsymbol{\theta}}\|_2 \leq \epsilon/2$. In addition, there exists $\epsilon/2$-cover $U_{\epsilon}$ of $[-1,1]$ in the metric $\|\cdot\|_2$ with cardinality $|U_{\epsilon}| = \mathcal{O}(1+1/\epsilon)$.
    Hence, for any $\mu \in [-1,1]$, we can choose $\Tilde{\mu} \in U_{\epsilon}$ such that $|\mu-\Tilde{\mu}| \leq \epsilon/2$.
    Therefore, letting $\Tilde{f}(x) = (\tilde{\boldsymbol{\theta}}^{\top}\mathbf{x} + \tilde{\mu})_{+}$, we have, for all $\mathbf{x} \in \mathbb{B}^d$,
    \begin{align*}
        \left| f(\mathbf{x}) - \Tilde{f}(\mathbf{x}) \right| & = \left| (\boldsymbol{\theta}^{\top}\mathbf{x} + \mu)_{+} - (\tilde{\boldsymbol{\theta}}^{\top}\mathbf{x} + \tilde{\mu})_{+} \right| \\
        & \leq \left| \boldsymbol{\theta}^{\top}\mathbf{x} + \mu - (\tilde{\boldsymbol{\theta}}^{\top}\mathbf{x} + \tilde{\mu}) \right| && \text{(by Lips. conti. of ReLU)} \\
        & \leq \big\| \boldsymbol{\theta} - \tilde{\boldsymbol{\theta}} \big\|_2 \cdot \big\| \mathbf{x} \big\|_2 + \left| \mu - \tilde{\mu} \right| && \text{(by Cauchy-Schwarz ineq.)} \\
        & \leq \epsilon.
    \end{align*}
\end{proof}

The following lemma is also needed to prove Theorem \ref{thm:causal}.

\begin{lemma} \label{lemma_causal_true_conv}
    Let $u(\cdot) \coloneqq \frac{\pi(\cdot)}{1-\pi(\cdot)}$.
    For $\{(\mathbf{X}_i, T_i)\}_{i=1}^n$, 
    we define
    $\tilde{\mathbf{w}} \coloneqq (\tilde{w}_{1}, \dots, \tilde{w}_{n})^{\top} \in \mathcal{W}^{+}$ as 
    \begin{flalign*}
         && \tilde{w}_{i} \coloneqq \frac{\mathbb{I}(T_i = 0) u(\mathbf{X}_i)}{ \sum_{i : T_i = 0} {u(\mathbf{X}_i)}}, && i \in [n]
    \end{flalign*}
    Then, we have
    $$d_{\mathcal{F}_{\textup{ReLU}}}\left( \mathbb{P}_{0,n}^{\tilde{\mathbf{w}}}, \mathbb{P}_{1,n} \right) = \mathcal{O}_p(n^{-1/2}).$$
\end{lemma}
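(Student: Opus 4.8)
\textbf{Proof proposal for Lemma \ref{lemma_causal_true_conv}.}

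The plan is to anchor both weighted empirical measures to the common population law $\mathcal{P}_1$ of $\mathbf{X}\mid T=1$. The key identity is the usual propensity-score reweighting fact: since the density of $\mathbf{X}\mid T=1$ is proportional to $u(\cdot)=\pi(\cdot)/(1-\pi(\cdot))$ times the density of $\mathbf{X}\mid T=0$, we have, for every bounded $f$, $\mathbb{E}[f(\mathbf{X})\mid T=1]=\mathbb{E}[u(\mathbf{X})f(\mathbf{X})\mid T=0]/\mathbb{E}[u(\mathbf{X})\mid T=0]$, and the assumption $\eta<\pi(\cdot)<1-\eta$ makes $u$ bounded above and below by positive constants. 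Write $\mu_U:=\mathbb{E}[u(\mathbf{X})\mid T=0]$, $\mu_V(f):=\mathbb{E}[u(\mathbf{X})f(\mathbf{X})\mid T=0]$, and their control-group empirical counterparts $\bar U_n:=n_0^{-1}\sum_{i:T_i=0}u(\mathbf{X}_i)$, $\bar V_n(f):=n_0^{-1}\sum_{i:T_i=0}u(\mathbf{X}_i)f(\mathbf{X}_i)$, so that $\sum_{i:T_i=0}\tilde w_i f(\mathbf{X}_i)=\bar V_n(f)/\bar U_n$. Then the triangle inequality gives
$$
d_{\mathcal{F}_{\textup{ReLU}}}\big(\mathbb{P}_{0,n}^{\tilde{\mathbf{w}}},\mathbb{P}_{1,n}\big)\;\le\;\underbrace{\sup_{f\in\mathcal{F}_{\textup{ReLU}}}\Big|\frac{\bar V_n(f)}{\bar U_n}-\frac{\mu_V(f)}{\mu_U}\Big|}_{(\mathrm{A})}\;+\;\underbrace{\sup_{f\in\mathcal{F}_{\textup{ReLU}}}\Big|\frac1{n_1}\sum_{i:T_i=1}f(\mathbf{X}_i)-\mathbb{E}[f(\mathbf{X})\mid T=1]\Big|}_{(\mathrm{B})}.
$$
Since $\mathbb{E}[\pi(\mathbf{X})]\in(\eta,1-\eta)$, a Hoeffding/Chernoff bound shows $n_0\wedge n_1\gtrsim n$ with probability tending to one; we work conditionally on $(T_1,\dots,T_n)$, under which the control (resp.\ treated) covariates are i.i.d.\ from $\mathcal{P}_0$ (resp.\ $\mathcal{P}_1$).

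Term $(\mathrm{B})$ is exactly $d_{\mathcal{F}_{\textup{ReLU}}}(\widehat{\mathcal{P}}_{1,n_1},\mathcal{P}_1)$, the one-sample deviation of the empirical measure of $n_1$ i.i.d.\ draws from $\mathcal{P}_1$. The symmetrization-plus-McDiarmid argument already carried out in the proof of Theorem \ref{thm:conv_relu} (taking one of the two measures to be the population law) bounds this by $2\widehat{\mathcal{R}}_{n_1}(\mathcal{F}_{\textup{ReLU}})+\epsilon$ on an event of probability $1-2\exp(-\Omega(\epsilon^2 n_1))$, and since $\widehat{\mathcal{R}}_{n_1}(\mathcal{F}_{\textup{ReLU}})\le \tfrac{2}{\sqrt{n_1}}(1+\sqrt{2/\pi})$ was shown there, $(\mathrm{B})=\mathcal{O}_p(n_1^{-1/2})=\mathcal{O}_p(n^{-1/2})$.

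For term $(\mathrm{A})$ I would linearize the ratio: $\frac{\bar V_n(f)}{\bar U_n}-\frac{\mu_V(f)}{\mu_U}=\frac{\bar V_n(f)-\mu_V(f)}{\bar U_n}-\frac{\mu_V(f)}{\mu_U}\cdot\frac{\bar U_n-\mu_U}{\bar U_n}$, so that $(\mathrm{A})\le \bar U_n^{-1}\sup_f|\bar V_n(f)-\mu_V(f)|+\mu_U^{-1}\bar U_n^{-1}\big(\sup_f|\mu_V(f)|\big)\,|\bar U_n-\mu_U|$. Here $\bar U_n\to\mu_U>0$ a.s.\ by the law of large numbers (so $\bar U_n^{-1}=\mathcal{O}_p(1)$); $|\bar U_n-\mu_U|=\mathcal{O}_p(n_0^{-1/2})=\mathcal{O}_p(n^{-1/2})$ since $u$ is bounded; and $\sup_f|\mu_V(f)|\le\|u\|_\infty\sup_{f\in\mathcal{F}_{\textup{ReLU}}}\|f\|_\infty\le 2\|u\|_\infty<\infty$. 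It remains to establish the uniform law $\sup_{f\in\mathcal{F}_{\textup{ReLU}}}|\bar V_n(f)-\mu_V(f)|=\mathcal{O}_p(n^{-1/2})$. This is a centered empirical process indexed by $\mathcal{G}:=\{u\cdot f:f\in\mathcal{F}_{\textup{ReLU}}\}$; because $u$ is a fixed bounded function, $\|uf-u\tilde f\|_{p,\mathcal{Q}_{n_0}}\le\|u\|_\infty\|f-\tilde f\|_{p,\mathcal{Q}_{n_0}}$, so the covering-number bound $\mathcal{N}(\mathcal{F}_{\textup{ReLU}},\|\cdot\|_{1,\mathcal{P}},\epsilon)=\mathcal{O}((1+1/\epsilon)^{d+1})$ proved above transfers to $\mathcal{G}$ up to rescaling $\epsilon$, giving $H(\mathcal{G},\|\cdot\|_{\mathcal{Q}_{n_0}},\epsilon)\lesssim (d+1)\log(C/\epsilon)\le A\epsilon^{-\alpha}$ for any $\alpha\in(0,2)$. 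By symmetrization, $\mathbb{E}\sup_{g\in\mathcal{G}}|n_0^{-1}\sum_{i:T_i=0}(g(\mathbf{X}_i)-\mathbb{E}[g(\mathbf{X})\mid T=0])|\le 2\,\mathbb{E}\sup_{g\in\mathcal{G}}|n_0^{-1}\sum_{i:T_i=0}\varepsilon_i g(\mathbf{X}_i)|$, which has exactly the form treated in Lemma \ref{lem:vandeGeer} (with $W_i=\varepsilon_i$ Rademacher, and $\sup_{g\in\mathcal{G}}\|g\|_{\mathcal{Q}_{n_0}}\le\|u\|_\infty\cdot 2<\infty$ providing the required radius $R$); hence this expectation is $\lesssim n_0^{-1/2}$, and McDiarmid (bounded differences $\lesssim n_0^{-1}$) upgrades it to $\mathcal{O}_p(n_0^{-1/2})=\mathcal{O}_p(n^{-1/2})$. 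Collecting the pieces gives $(\mathrm{A})=\mathcal{O}_p(n^{-1/2})$, and combined with the bound on $(\mathrm{B})$ this proves the lemma.

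The only genuinely nonroutine step is the uniform control of $\sup_{f}|\bar V_n(f)-\mu_V(f)|$: everything else is an algebraic identity, boundedness of $u$, or a law of large numbers. That step is itself standard given the covering-number lemma, but it requires noting that multiplying $\mathcal{F}_{\textup{ReLU}}$ by the fixed bounded $u$ preserves the entropy bound, and that the whole argument must be run conditionally on $(T_1,\dots,T_n)$ with the random sizes $n_0,n_1$ then absorbed via $n_0,n_1\asymp n$.
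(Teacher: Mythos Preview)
Your argument is correct, but it proceeds along a genuinely different axis from the paper's. You condition on $(T_1,\dots,T_n)$ and treat the covariates as random, anchoring both weighted empirical measures to the population law $\mathcal{P}_1$ via the reweighting identity $\mathbb{E}[f(\mathbf{X})\mid T=1]=\mu_V(f)/\mu_U$; each of your two terms is then a one-sample empirical-process deviation (over $\mathcal{F}_{\textup{ReLU}}$ for (B) and over $u\cdot\mathcal{F}_{\textup{ReLU}}$ for (A)). The paper instead conditions on $(\mathbf{X}_1,\dots,\mathbf{X}_n)$ and treats the $T_i$'s as random: it never introduces $\mathcal{P}_1$ at all, but splits $\tilde w_i - \mathbb{I}(T_i=1)/n_1$ through the intermediate weights $u(\mathbf{x}_i)\mathbb{I}(T_i=0)/n_1$, reducing everything to the single centered variables $W_i=u(\mathbf{x}_i)\mathbb{I}(T_i=0)-\mathbb{I}(T_i=1)$ (mean zero given $\mathbf{X}_i$), and applies Lemma~\ref{lem:vandeGeer} directly to $\sup_{f\in\mathcal{F}_{\textup{ReLU}}}|n^{-1/2}\sum_i W_i f(\mathbf{x}_i)|$. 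The paper's route is slightly slicker in that it avoids the population anchor, the ratio linearization, and the auxiliary class $u\cdot\mathcal{F}_{\textup{ReLU}}$; your route is more modular, reuses the one-sample bound from Theorem~\ref{thm:conv_relu} verbatim for (B), and makes the role of the reweighting identity explicit. One small wrinkle: your appeal to Lemma~\ref{lem:vandeGeer} for the symmetrized process is a bit indirect (that lemma gives tail bounds, not directly the expectation bound you state); a cleaner justification is Dudley's entropy integral, which with the logarithmic entropy of $\mathcal{G}$ immediately yields $\mathbb{E}\sup_{g\in\mathcal{G}}|n_0^{-1}\sum_i\varepsilon_i g(\mathbf{X}_i)|\lesssim n_0^{-1/2}$.
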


\begin{proof}{\textbf{of Lemma \ref{lemma_causal_true_conv}}} 
For given $(\mathbf{X}_1,\ldots, \mathbf{X}_n)=(\mathbf{x}_1,\ldots,\mathbf{x}_n),$ 
    we can write 
    \begin{align}
        d_{\mathcal{F}_{\textup{ReLU}}} \left( \mathbbm{P}_{0,n}^{\tilde{\mathbf{w}}_{i}}, \mathbbm{P}_{1,n} \right) 
        & = \sup_{f\in \mathcal{F}_{\textup{ReLU}}} 
        \left| \sum_{i=1}^n \left( \tilde{w}_{i} - \frac{\mathbb{I}(T_i = 1)}{n_1} \right) f(\mathbf{x}_i) \right| \nonumber \\
        & \leq \sup_{f\in \mathcal{F}_{\textup{ReLU}}} 
        \left| \sum_{i=1}^n \left( \tilde{w}_{i} - \frac{u(\mathbf{x}_i)\mathbb{I}(T_i = 0)}{n_1}  \right) f(\mathbf{x}_i) \right| \label{w0_ipm_eq_1} \\
        & \quad + \sup_{f\in \mathcal{F}_{\textup{ReLU}}} 
        \left| \sum_{i=1}^n \left( \frac{u(\mathbf{x}_i)\mathbb{I}(T_i = 0)}{n_1} - \frac{\mathbb{I}(T_i = 1)}{n_1} \right) f(\mathbf{x}_i) \right|. \label{w0_ipm_eq_2}
    \end{align}    
        
    For (\ref{w0_ipm_eq_1}), we have
    \begin{align*}
        \frac{n_1}{n} \times (\ref{w0_ipm_eq_1})  
        & = \frac{\sum_{i=1}^n \mathbb{I}(T_i = 1)}{n} \cdot \sup_{f\in \mathcal{F}_{\textup{ReLU}}} 
        \left| \sum_{i=1}^n \left(  \frac{\mathbb{I}(T_i = 0) u(\mathbf{x}_i)}{ \sum_{j=1}^n \mathbb{I}(T_j = 0) u(\mathbf{x}_j)} - \frac{u(\mathbf{x}_i)\mathbb{I}(T_i = 0)}{\sum_{j=1}^n \mathbb{I}(T_j = 1)}  \right) f(\mathbf{x}_i) \right| \\
        & = \left| \frac{\sum_{i=1}^n \mathbb{I}(T_i = 1)}{n} - \frac{\sum_{j=1}^n \mathbb{I}(T_j = 0) u(\mathbf{x}_j)}{n} \right| \cdot
        \sup_{f\in \mathcal{F}_{\textup{ReLU}}} 
        \left| \frac{\sum_{i=1}^n \mathbb{I}(T_i = 0) u(\mathbf{x}_i)  f(\mathbf{x}_i)}{\sum_{j=1}^n   \mathbb{I}(T_j = 0) u(\mathbf{x}_j) } \right| &&\tag{by Hoeffding's ineq.} \\
        & \leq 2 \cdot \left| \frac{\sum_{i=1}^n \mathbb{I}(T_i = 1)}{n} - \frac{\sum_{i=1}^n \mathbb{I}(T_i = 0) u(\mathbf{x}_i)}{n}   \right| \\
        & = \mathcal{O}_p \left( n^{-1/2} \right),
    \end{align*}
    where $\mathcal{O}_p$ is defined in probability with respect to $\{T_i\}_{i=1}^n$
    conditional on $(\mathbf{X}_1,\ldots, \mathbf{X}_n)=(\mathbf{x}_1,\ldots,\mathbf{x}_n).$
    Here, the last inequality holds by the inequality $\left\| f(z) \right\|_2 \leq | z^{\top}\boldsymbol{\theta}+\mu| \leq 2$.
    
    For (\ref{w0_ipm_eq_2}), let $W_i \coloneqq u(\mathbf{x}_i)\mathbb{I}(T_i = 0) - \mathbb{I}(T_i = 1).$ Then, we have $\mathbb{E}(W_i) = 0$ and $\max_{i \in [n]} K^2 (\mathbb{E} e^{|W_i|^2 / K^2 } -1) \leq \sigma_0^2$, for $K=\eta^{-1}$ and $\sigma_0^2 = K^2(e-1)$. Hence, by Theorem \ref{thm:conv_relu} and Lemma 8.4 of \cite{geer2000empirical} (see Lemma \ref{lem:vandeGeer} in Appendix \ref{sec:math_def}), there exists a constant $c>0$ such that 
    \begin{align*}
        \mathbb{P} \left( \sup_{f\in \mathcal{F}_{\textup{ReLU}}} \frac{\frac{1}{\sqrt{n}} \sum_{i=1}^n W_i f(\mathbf{x}_i) }{\sqrt{\frac{1}{n}\sum_{i=1}^n f(\mathbf{x}_i)^2}} \geq c^{\frac{3}{2}} \, \middle| \, (\mathbf{X}_1,\ldots, \mathbf{X}_n)=(\mathbf{x}_1,\ldots,\mathbf{x}_n)  
        \right) \leq c e^{-c},
    \end{align*}
    and hence
    \begin{align*}
        \mathbb{P} \left( \sup_{f\in \mathcal{F}_{\textup{ReLU}}} \frac{1}{\sqrt{n}} \sum_{i=1}^n W_i f(\mathbf{x}_i) \geq 2 \, c^{\frac{3}{2}} \, \middle| \, (\mathbf{X}_1,\ldots, \mathbf{X}_n)=(\mathbf{x}_1,\ldots,\mathbf{x}_n)
        \right) \leq c e^{-c}.
    \end{align*}
    Then, we have
    \begin{align*}
        \frac{n_1}{n} \times (\ref{w0_ipm_eq_2}) 
        & = \sup_{f\in \mathcal{F}_{\textup{ReLU}}}
        \left| \sum_{i=1}^n \left( \frac{u(\mathbf{x}_i)\mathbb{I}(T_i = 0)}{n} - \frac{\mathbb{I}(T_i = 1)}{n} \right) f(\mathbf{x}_i) \right| \\
        & = \frac{1}{n}\sup_{f\in \mathcal{F}_{\textup{ReLU}}}
        \left| \sum_{i=1}^n W_i f(\mathbf{x}_i) \right| \\
        & = \mathcal{O}_p \left( n^{-1/2} \right),
    \end{align*}
    where $\mathcal{O}_p$ is defined in probability with respect to $\{T_i\}_{i=1}^n$
    conditional on $(\mathbf{X}_1,\ldots, \mathbf{X}_n)=(\mathbf{x}_1,\ldots,\mathbf{x}_n).$
    
    To sum up, we get
    \begin{align*}
        d_{\mathcal{F}_{\textup{ReLU}}}\left( \mathbb{P}_{0,n}^{\tilde{\mathbf{w}}}, \mathbb{P}_{1,n} \right) = \mathcal{O}_p \left( n^{-1/2} \right)
    \end{align*}
    conditional on $(\mathbf{X}_1,\ldots, \mathbf{X}_n)=(\mathbf{x}_1,\ldots,\mathbf{x}_n).$
    Since it holds for every $\{\mathbf{x}_i\}_{i=1}^n \in \mathcal{X}^n$, we obtain the assertion and complete the proof.
\end{proof}


\begin{proof}{\textbf{of Theorem \ref{thm:causal}}}
    We can decompose the error of $\widehat{\mathrm{ATT}}^{\hat{\mathbf{w}}}$ as 
    \begin{align}
        \widehat{\operatorname{ATT}}^{\hat{\mathbf{w}}} - \operatorname{ATT}
        & = \sum_{i : T_i = 1} \frac{Y_i}{n_1} - \sum_{i : T_i = 1} \frac{\widehat{w}_i Y_i}{n_1} - SATT + (SATT-ATT) \nonumber \\
        & = \sum_{i : T_i = 1} \frac{Y_i}{n_1} - \sum_{i : T_i = 1} \frac{\widehat{w}_i Y_i}{n_1} - \frac{1}{n_1} \sum_{i : T_i = 1} \left( m_1(\mathbf{X}_i) - m_0(\mathbf{X}_i)\right) + (SATT-ATT) \nonumber \\
        & = \frac{1}{n_1} \sum_{i : T_i = 1} m_0 (\mathbf{X}_i) - \sum_{i : T_i = 0} \widehat{w}_{i} m_0 (\mathbf{X}_i) \label{decomp1} \\
        & \quad + \sum_{i : T_i = 1} \frac{Y_i - m_1(\mathbf{X}_i)}{n_1} - \sum_{i : T_i = 0} \widehat{w}_{i} (Y_i - m_0 (\mathbf{X}_i)) \label{decomp2} \\
        & \quad + (\operatorname{SATT} - \operatorname{ATT}). \label{decomp3}
    \end{align}
    
    By the definition of $\hat{\mathbf{w}}$ and Lemma \ref{lemma_causal_true_conv},
    \begin{align}
        d_{\mathcal{F}_{\textup{ReLU}}} \left( \mathbb{P}_{0,n}^{\hat{\mathbf{w}}}, \mathbb{P}_{1,n} \right) \leq d_{\mathcal{F}_{\textup{ReLU}}} \left( \mathbb{P}_{0,n}^{\tilde{\mathbf{w}}}, \mathbb{P}_{1,n} \right) = \mathcal{O}_p \left( n^{-1/2} \right).
    \end{align}    
    By Theorem \ref{thm:holder}, from the assumption $m_0 \in \mathcal{H}_d^{\beta}$, we have (i) if $\beta \geq \frac{d+3}{2}$, 
    $$
    (\ref{decomp1}) \leq d_{\mathcal{H}_d^{\beta}} \left( \mathbb{P}_{0,n}^{\hat{\mathbf{w}}}, \mathbb{P}_{1,n} \right) \lesssim d_{\mathcal{F}_{\textup{ReLU}}} \left( \mathbb{P}_{0,n}^{\hat{\mathbf{w}}}, \mathbb{P}_{1,n} \right) = \mathcal{O}_p \left( n^{-1/2} \right),
    $$ 
    and (ii) if $\beta < \frac{d+3}{2}$, 
    $$
    (\ref{decomp1}) \leq d_{\mathcal{H}_d^{\beta}} \left( \mathbb{P}_{0,n}^{\hat{\mathbf{w}}}, \mathbb{P}_{1,n} \right) \lesssim d_{\mathcal{F}_{\textup{ReLU}}} \left( \mathbb{P}_{0,n}^{\hat{\mathbf{w}}}, \mathbb{P}_{1,n} \right)^{\frac{2\beta}{d+3}} = \mathcal{O}_p \left( n^{-\frac{\beta}{d+3}} \right).
    $$
    
    Since $\{\widehat{w}_{i}\}_{i=1}^n$ is a measurable function of $\{(\mathbf{X}_i, T_i)\}_{i=1}^n$, we have
    \begin{align*}
    \mathbb{E} \left(\sum_{i : T_i = 1} \frac{Y_i - m_1(\mathbf{X}_i)}{n_1} - \sum_{i : T_i = 0} \widehat{w}_{i} (Y_i - m_0(\mathbf{X}_i)) \Bigg| \{(\mathbf{X}_i, T_i)\}_{i=1}^n  \right)=0.
    \end{align*}
    and hence $$\mathbb{E}\left[ (\ref{decomp2})\right] = 0.$$
    Also, 
    \begin{align*}
        \operatorname{Var}\left[ (\ref{decomp2})\right] 
        & = \mathbb{E} \left( \operatorname{Var} \left(\sum_{i : T_i = 1} \frac{Y_i - m_1(\mathbf{X}_i)}{n_1} 
        -  \sum_{i : T_i = 0} \widehat{w}_{i} (Y_i - m_0(\mathbf{X}_i)) \Big| \{(\mathbf{X}_i, T_i)\}_{i=1}^n  \right) \right)\\
        +&  \operatorname{Var} \left( \mathbb{E} \left(\sum_{i : T_i = 1} \frac{Y_i - m_1(\mathbf{X}_i)}{n_1} 
        -  \sum_{i : T_i = 0} \widehat{w}_{i} (Y_i - m_0(\mathbf{X}_i)) \Big| \{(\mathbf{X}_i, T_i)\}_{i=1}^n \right) \right)\\
        = & \mathbb{E} \left( \sum_{i : T_i = 1} \operatorname{Var} \left( \frac{Y_i(1)}{n_1} \Big| \{(\mathbf{X}_i, T_i)\}_{i=1}^n \right)
          + \sum_{i : T_i = 0} \operatorname{Var} \left( \widehat{w}_{i} Y_i(0) \big| \{(\mathbf{X}_i, T_i)\}_{i=1}^n \right) \right)\\
        = & \mathbb{E} \left( \sum_{i : T_i = 1} \frac{\operatorname{Var} \left( Y_i(1) | \{(\mathbf{X}_i, T_i)\}_{i=1}^n \right)}{n_1^2} 
          + \sum_{i : T_i = 0} \widehat{w}_{i}^2 \operatorname{Var} \left( Y_i(0) \big| \{(\mathbf{X}_i, T_i)\}_{i=1}^n \right) \right)\\
        = & O(n^{-1}),
    \end{align*}
    where we use the fact that 
    $\max_{i\in[n]} \widehat{w}_{i} < (n_0 \eta^2)^{-1}.$ 
    Hence, by the Chebyshev's inequality, we get 
    $$(\ref{decomp2}) = O_p(n^{-1/2}).$$
    Since
    $$(\ref{decomp3}) = O_p(n^{-1/2})$$
    holds by the Central Limit Theorem, we obtain the assertion of Theorem \ref{thm:causal}.
\end{proof}


\subsection{Proof of Theorem \ref{thm:relu_frl}} \label{pf:relu_frl}

\begin{proof}
For every $g \in \mathcal{H}_d^\beta$,
\begin{align*}
    \Delta \textup{DP}_{\phi}(g \circ h)
    & = \big| \mathbb{E}_{0} \left( \phi ( g \circ h(\mathbf{X}, 0)) \right) - \mathbb{E}_{1} \left(\phi ( g \circ h (\mathbf{X}, 1)) \right)\big|\\
    & \leq C \cdot \big| \mathbb{E}_{0} \left( g( h(\mathbf{X}, 0)) \right) - \mathbb{E}_{1} \left( g(h( (\mathbf{X}, 1)) \right)\big| &&\tag{by def. of Lipchitz function}\\
    & \leq C \cdot \sup_{g \in \mathcal{H}_d^\beta} \left| \int g(\mathbf{z}) (d\mathcal{P}_0^h(\mathbf{z})-d\mathcal{P}_1^h(\mathbf{z})) \right|,
\end{align*}
where $C$ is the Lipschitz constant of $\phi$.
By Theorem \ref{thm:holder}, we have
\begin{align*}
    \sup_{g \in \mathcal{H}_d^\beta} \left| \int g(\mathbf{z}) (d\mathcal{P}_0^h(\mathbf{z})-d\mathcal{P}_1^h(\mathbf{z})) \right| \leq c \cdot d_{\mathcal{F}_{\textup{ReLU}}}(\mathcal{P}_0^h, \mathcal{P}_1^h)^{\frac{2\beta}{d+3}},
\end{align*}
for $\beta < \frac{d+3}{2}$, and
\begin{align*}
    \sup_{g \in \mathcal{H}_d^\beta} \left| \int g(\mathbf{z}) (d\mathcal{P}_0^h(\mathbf{z})-d\mathcal{P}_1^h(\mathbf{z})) \right| \leq c \cdot d_{\mathcal{F}_{\textup{ReLU}}}(\mathcal{P}_0^h, \mathcal{P}_1^h)
\end{align*}
for $\beta > \frac{d+3}{2}$,
where $c$ is some positive constant that only depends on $d$ and $\beta$. 
\end{proof}

\clearpage
\vspace*{-10pt}
\section{Experimental Details for ReLU-CB} \label{sec:ex_settings_relu_cb}

\subsection{Simulation models} \label{sec:relu_cb_simulations}
    For each unit $i=1,\dots,n$, we consider transformations $\mathbf{X}_i = (X_{i1}, X_{i2},X_{i3},X_{i4})^{\top}$ of latent variables $\mathbf{Z}_i = (Z_{i1}, Z_{i2},Z_{i3},Z_{i4})^{\top}$ independently generated from $\mathcal{N}(0,I_4)$, which are given as
    \begin{align*}
    \begin{split}
    X_{i1} &= \exp(Z_{i1}/2), \\
    X_{i2} &= Z_{i1} / (1+\exp(Z_{i1})) + 10, \\
    X_{i3} &= (Z_{i1}Z_{i3}/25 + 0.6)^3, \\
    X_{i4} &= (Z_{i2}+Z_{i4}+20)^2,
    \end{split}
    \end{align*}    
    and outcomes $Y_i$ are generated from the following regression model,
    \begin{align*}
    Y_i = 210 + 27.4 Z_{i1} + 13.7 Z_{i2} +13.7 Z_{i3} + 13.7 Z_{i4} + \epsilon_i,
    \end{align*}
    where $\epsilon_i \sim N(0,1)$.
    Note that $\operatorname{ATT}=0$ since $m_0(\cdot) = m_1(\cdot).$    
    For the propensity score, we generate the binary treatment indicators $T_i \in \{ 0,1 \}$ from 
    \begin{align*}
    \mathbb{P}(T=1|\mathbf{Z}_i) = \tau \cdot (-Z_{i1} + 0.5 Z_{i2} -0.25 Z_{i3} -0.1 Z_{i4}).
    \end{align*}
    We consider two models (Model 1 and Model 2), each selecting $\tau = \{1, 10\}$ respectively.

\subsection{Baselines and implementation details}\label{sec:impdetail_relu_cb}

We use R (ver. 4.0.2) or Python (ver. 3.9) wherever necessary, and use NVIDIA TITAN Xp GPUs. 

\begin{itemize}     
    \item GLM and Boosting: 
    The SIPW estimator is a refined IPW estimator for the ATT using normalized weights such as
    $$
    \sum_{i:T_i=1} \frac{1}{n_1} Y_i - \frac{\sum_{i:T_i=0} \hat{\pi}(\mathbf{X}_i)(1- \hat{\pi}(\mathbf{X}_i))^{-1} Y_i}{\sum_{i:T_i=0} \hat{\pi}(\mathbf{X}_i)(1- \hat{\pi}(\mathbf{X}_i))^{-1}},
    $$
    where $\hat{\pi}(\cdot)$ is an estimated propensity score. That is, the weights for the control groups are given by the quantities $\hat{\pi}(\mathbf{X}_i)(1- \hat{\pi}(\mathbf{X}_i))^{-1}/\{ \sum_{i:T_i=0} \hat{\pi}(\mathbf{X}_i)(1- \hat{\pi}(\mathbf{X}_i))^{-1} \}$.
For GLM, we use the linear logistic regression $\hat{\pi}_{\hat{\mathbf{\beta}}}(\mathbf{x}) = \{1 + \exp(-\mathbf{x}^{\top} \hat{\mathbf{\beta}} )\}^{-1}$, where the regression coefficient $\hat{\mathbf{\beta}}$ is estimated by the maximum likelihood estimator using the \texttt{GLM} package in R.
For Boosting, we use the the \texttt{twang} package in R \citep{ridgeway2017toolkit}  with 20,000 iterations and the shrinkage parameter of 0.0005.

    \item CBPS \citep{imai2014covariate}:
    Let $\pi_{\beta}(\mathbf{X}_i)$ be a parametric propensity score model, i.e., $\pi_{\beta}(\mathbf{X}_i) \coloneqq \mathbb{P}(\mathbf{T}_i = 1 | \mathbf{X}_i)$.
    \citet{imai2014covariate} proposes to estimate
    $\mathbf{\beta} \in \mathbb{R}^d$ 
    by balancing the moments of the treated and control groups, 
    \begin{align}
    \frac{1}{n} \sum_{i : T_i = 0} \frac{\pi_{\mathbf{\beta}} (\mathbf{X}_i)}{1-\pi_{\mathbf{\beta}} (\mathbf{X}_i) } \mathbf{\psi} \left(\mathbf{X}_i\right) = \frac{1}{n} \sum_{i : T_i = 1} \mathbf{\psi}\left(\mathbf{X}_i\right),
    \label{eq:CBPS-eq}
    \end{align}
    where $\mathbf{\psi}\left(\cdot\right) : \mathcal{X} \to \mathbb{R}^p$ is a pre-specified transformation.
    That is, they estimate the ATT by
        \begin{equation*}
        \sum_{i : T_i = 1} \frac{1}{n_1} Y_i- \frac{\sum_{i : T_i = 0} \pi_{\hat{\mathbf{\beta}}} (\mathbf{X}_i)\left(1-\pi_{\hat{\mathbf{\beta}}} (\mathbf{X}_i)\right)^{-1} Y_i}{\sum_{i : T_i = 0} \pi_{\hat{\mathbf{\beta}}} (\mathbf{X}_i)\left(1-\pi_{\hat{\mathbf{\beta}}} (\mathbf{X}_i)\right)^{-1}},
    \end{equation*}
    where $\hat{\mathbf{\beta}}$ is the solution of the equation (\ref{eq:CBPS-eq}).
    In this study, we use the linear logistic regression model for $\pi_{\beta}(\mathbf{X}_i),$ set $p=d$ and $\mathbf{\psi}(\mathbf{x})=\mathbf{x},$
    and implement CBPS using \texttt{CBPS} package in R \citep{fong2022package} with the default parameters.
    
    \item EB \citep{hainmueller2012entropy}:
    \citet{hainmueller2012entropy} solves
    \begin{align*}
    \underset{\bm{w}}{\operatorname{minimize}} & \sum_{i : T_i = 0} w_i \log w_i\\
    \text { subject to } & \sum_{i : T_i = 0} w_i \bm{\psi}\left(\bm{X}_i\right)=\frac{1}{n_1} \sum_{i : T_i = 1} \bm{\psi}\left(\bm{X}_i\right), \quad \sum_{i : T_i = 0} w_i=1, \quad
    w_i>0,
    \end{align*}
    where $\bm{w} = (w_1 , \dots, w_n)^{\top}$ and $\bm{\psi}\left(\cdot\right) : \mathcal{X} \to \mathbb{R}^p$ is a pre-specified transformation.
    That is, they estimate the ATT by
    \begin{equation*}
        \sum_{i : T_i = 1} \frac{Y_i}{n_1}-\sum_{i : T_i = 0} w_i Y_i. 
    \end{equation*}
    We set $p=d$ and $\mathbf{\psi}(\mathbf{x})=\mathbf{x},$ and
    implement EB using \texttt{EB} package in R \citep{hainmueller2022package} with the default parameters.
    
    \item IPMs: We find $\hat{\mathbf{w}}$ by applying the gradient descent and gradient ascent algorithms iteratively until convergence.
    To be more specific, suppose that the discriminator class $\mathcal{F}$ is parameterized by $\boldsymbol{\lambda} \in \Lambda.$ 
    That is, $\mathcal{F}=\{f(\cdot:\boldsymbol{\lambda}):\boldsymbol{\lambda} \in \Lambda\}.$
    Then, we estimate $\hat{\mathbf{w}}$ by minimizing $\ell_n(\boldsymbol{\lambda}_{\mathbf{w}}, \mathbf{w})$  with respect to $\mathbf{w},$ where 
    $$
	\ell_n(\boldsymbol{\lambda}, \mathbf{w}) \coloneqq \left\{ \sum_{i:T_i=1}\frac{f(\mathbf{x}_i; \boldsymbol{\lambda})}{n_1} - \sum_{i:T_i=0}  w_i f(\mathbf{x}_i; \boldsymbol{\lambda}) \right\}^2
	$$
and $\boldsymbol{\lambda}_{\mathbf{w}}={\rm argmax}_{\boldsymbol{\lambda}} \ell_n(\boldsymbol{\lambda}, \mathbf{w}).$
To get $\hat{\mathbf{w}},$
we update $\mathbf{w}$ by applying the gradient descent algorithm to $\ell_n(\boldsymbol{\lambda}, \mathbf{w})$ with $\boldsymbol{\lambda}$ being fixed and then update  $\boldsymbol{\lambda}$ by applying the gradient ascent algorithm with $\mathbf{w}$ being fixed.
At each gradient descent update, we repeat $N_{\textup{epo}}^{\textup{adv}}$ gradient ascent updates, and
we repeat this gradient descent-ascent updates $N_{\textup{epo}}$ many times. Let us denote the learning rates in the gradient descent and ascent steps as $\textup{lr}$ and $\textup{lr}^{\textup{adv}}$, respectively.
In addition, we use the ensemble technique explained in Section \ref{sec:com_relu}  with $K=100$ in the gradient ascent step.

    \begin{enumerate}
        \item[(1)] Wasserstein distance \citep{pmlr-v202-kong23d}: 
        We compute the Wasserstein distance using techniques suggested by \citet{arjovsky2017wasserstein} and \citet{gulrajani2017improved}. 
        Specifically, given points $\mathbf{x}_t$ and $\mathbf{x}_c$ from the treated and control groups respectively, we generate 100 samples of $\mathbf{x}(\lambda) \coloneqq \lambda \mathbf{x}_c + (1-\lambda) \mathbf{x}_t$ with $\lambda \sim \textup{Unif}(0,1)$. We then compute the gradient of the discriminator with respect to the interpolated samples and penalize its norm to encourage it to approach 1. We set the regularization parameter to 0.3.
        For the discriminators, we use a 2-layer neural network with 100 hidden nodes at each layer and Leaky ReLU activation.
        We set $N_{\textup{epo}}=1000$ and $N_{\textup{epo}}^{\textup{adv}}=5$ and
        use the Adam optimizer with the learning rates $\textup{lr} = 0.03$ and $\textup{lr}^{\textup{adv}} = 0.3$ in the gradient descent and ascent steps, respectively.
        For additional stability, we clip the weights and biases of $f(\cdot ; \boldsymbol{\lambda})$ to $0.1$ after each gradient ascent step.

        \item[(2)] MMD: 
        For $\mathbf{w} = (w_1, \dots, w_n)^\top \in \mathcal{W}^{+}$, the square of the MMD with a kernel $k: \mathbb{R}^d \times \mathbb{R}^d \to \mathbb{R}$ is given as 
        \begin{align}
        d_{\mathcal{H}_{k}}(\widehat{\mathcal{P}}_{0}^{\mathbf{w}},  \widehat{\mathcal{P}}_{1} )^2 & = 
        \sum_{i : T_i = 0 } \sum_{j : T_j = 0 } w_i w_j k(\mathbf{X}_i , \mathbf{Y}_j) 
        - 2 \sum_{i : T_i = 0 } \sum_{j : T_j = 1 } \frac{w_i}{n_1} k(\mathbf{X}_i , \mathbf{Y}_j) \nonumber \\
        & + \sum_{i : T_i = 1 } \sum_{j : T_j = 1 } \frac{1}{n_1^2}  k(\mathbf{X}_i , \mathbf{Y}_j).
        \end{align}
        
        We consider two kernels in the experiments as follows: (i) RBF kernel \citep{pmlr-v202-kong23d}, (ii) Sobolev kernel \citep{Wong2017sobolev}.
        For (i) the RBF kernel $k_{\sigma}(\mathbf{x},\mathbf{y}) = \exp\left(-\|\mathbf{x} - \mathbf{y}\|_2^2 / \sigma^2\right)$, we use Adam optimizer with the learning rate 0.03 and $N_{\textup{epo}} = 1000$ for gradient descent steps, and set $\sigma = 10$.
        For (ii) the Sobolev kernel
        $$
        k_s(\mathbf{x}, \mathbf{y}) = \prod_{j=1}^d \left\{ 1 + k_1(\mathbf{x}_j)k_1(\mathbf{y}_j) + k_2(\mathbf{x}_j)k_2(\mathbf{y}_j) - k_4(|\mathbf{x}_j - \mathbf{y}_j|) \right\},
        $$
        where $k_1(t) = t - 0.5$, $k_2(t) = \frac{k_1(t)^2 - \frac{1}{12}}{2}$, and $k_4(t) = \frac{k_1(t)^4 - \frac{k_1(t)^2}{2} + \frac{7}{240}}{24}$,
        we use Adam optimizer with $\textup{lr} = 0.03$ and $N_{\textup{epo}} = 1000$ for gradient descent steps.

        \item[(3)] SIPM \citep{pmlr-v202-kong23d}:
        We set $N_{\textup{epo}} = 1000$ and $N_{\textup{epo}}^{\textup{adv}} = 3$ and 
        use the Adam optimizer with $\textup{lr} = 0.1$ for the gradient descent step and the SGD optimizer with $\textup{lr}^{\textup{adv}} = 1.0$ for the gradient ascent step.

        \item[(4)] H\"{o}lder-IPM:
        We use a fully connected neural network with $L=\{1,2\}$ hidden layers and $d$ many nodes at each layer, where $d$ is the input dimension. By clipping the output value and the weights, we control 
        the sup-norm of neural networks being less than or equal to 1 and the maximum absolute values of the weights bounded by $\sqrt{d}$, respectively.
        We set $N_{\textup{epo}} = 1000$ and $N_{\textup{epo}}^{\textup{adv}} = 1$ and 
        use the Adam optimizer with $\textup{lr} = 0.01$ for the gradient descent step and the SGD optimizer with $\textup{lr}^{\textup{adv}} = 0.01$ for the gradient ascent step.

        \item[(5)] ReLU-IPM: We set $N_{\textup{epo}} = 1000$ and $N_{\textup{epo}}^{\textup{adv}} = 1$ and 
        use the Adam optimizer with $\textup{lr} = 0.05$ for the gradient descent step and the SGD optimizer with $\textup{lr}^{\textup{adv}} = 1.0$ for the gradient ascent step.
    \end{enumerate}
\end{itemize}


\subsection{Additional Experiments} \label{sec:add_experiments_relu_cb}
We present the results of performance comparison based on additional real analysis in this section.

\begin{table*}[t] 
\renewcommand{\arraystretch}{1.0}
\caption{
\textbf{\textsc{ACIC 2016} datasets.}
For each pair of dataset and performance measure (i.e. for each row), the best result is highlighted in bold, while the next best result is underlined.
We omit the columns of Boost and EB because its solution often does not converge
(Wass: Wasserstein distance, RBF: MMD with RBF kernel, Sob: MMD with Sobolev kernel, Hölder IPM with L=1 and L=2:  H\"{o}l-1 and H\"{o}l-2, respectively).
}
\label{table_ACIC}
\centering
\small
\scalebox{1}{ 
\begin{tabular}{c|c|cccccccc|c}
\hline
Dataset & Measure & GLM & CBPS & Wass & SIPM & RBF & Sob & H\"{o}l-1 & H\"{o}l-2 & ReLU-CB \\
\hline \hline 
\multirow{2}{*}{1} & Bias & \textbf{0.022} & 0.091 & 0.052 & 0.071 & 0.093 & 0.153 & -0.071 & -0.489 & \underline{0.036} \\
& RMSE & 0.314 & 0.237 & \underline{0.159} & 0.212 & 0.160 & 0.249 & 0.941 & 0.956 & \textbf{0.122}\\
\cline{1-11}
\multirow{2}{*}{2} & Bias & -0.318 & 0.029 & \underline{0.024} & 0.045 & 0.028 & 0.048 & \textbf{-0.020} & -0.139 & 0.028 \\
& RMSE & 2.687 & 0.452 & 0.337 & 0.364 & \textbf{0.256} & 0.314 & 0.832 & 0.631 & \underline{0.269} \\
\cline{1-11}
\multirow{2}{*}{3} & Bias & \textbf{-0.017} & 0.134 & 0.076 & 0.100 & 0.135 & 0.210 & 0.099 & -0.619 & \underline{0.050} \\
& RMSE & 0.588 & 0.287 & \underline{0.164} & 0.237 & 0.188 & 0.288 & 1.068 & 0.936 & \textbf{0.127} \\
\cline{1-11}
\multirow{2}{*}{4} & Bias & 0.356 & 0.350 & 0.260 & 0.333 & 0.208 & 0.308 & 0.225 & \textbf{-0.123} & \underline{0.177} \\
& RMSE & 0.789 & 0.766 & 0.533 & 0.748 & \underline{0.424} & 0.599 & 1.012 & 0.714 & \textbf{0.375} \\
\cline{1-11}
\multirow{2}{*}{5} & Bias & \textbf{-0.076} & 0.128 & 0.085 & 0.137 & 0.115 & 0.163 & 0.094 & -0.183 & \underline{0.077} \\
& RMSE & 2.210 & 0.840 & 0.543 & 0.404 & \underline{0.369} & 0.440 & 1.224 & 0.946 & \textbf{0.299} \\
\hline
\multirow{2}{*}{6} & Bias & 0.432 & 0.467 & 0.372 & 0.403 & 0.406 & 0.449 & \underline{0.277} & \textbf{-0.249} & 0.337 \\
& RMSE & 0.712 & 0.739 & \underline{0.608} & 0.700 & 0.700 & 0.698 & 1.006 & 0.739 & \textbf{0.537}\\
\cline{1-11}
\multirow{2}{*}{7} & Bias & 0.100 & 0.102 & 0.094 & 0.106 & 0.123 & 0.111 & \textbf{0.009} & -0.196 & \underline{0.088} \\
& RMSE & 0.368 & 0.354 & \underline{0.317} & 0.365 & 0.383 & 0.327 & 1.042 & 0.679 & \textbf{0.269} \\
\cline{1-11}
\multirow{2}{*}{8} & Bias & 0.248 & 0.273 & 0.224 & 0.249 & 0.245 & 0.249 & 0.137 & \textbf{-0.188} & \underline{0.200} \\
& RMSE & 0.625 & 0.648 & \underline{0.545} & 0.599 & 0.606 & 0.609 & 1.080 & 0.664 & \textbf{0.489} \\
\cline{1-11}
\multirow{2}{*}{9} & Bias & 0.294 & 0.304 & 0.243 & 0.302 & 0.319 & 0.300 & 0.246 & \textbf{-0.203} & \underline{0.211} \\
& RMSE & 0.534 & 0.525 & \underline{0.432} & 0.505 & 0.516 & 0.490 & 0.866 & 0.636 & \textbf{0.383} \\
\cline{1-11}
\multirow{2}{*}{10} & Bias & 0.420 & 0.421 & 0.341 & 0.390 & 0.410 & 0.392 & 0.405 & \textbf{-0.179} & \underline{0.293} \\
& RMSE & 0.783 & 0.812 & \underline{0.641} & 0.736 & 0.742 & 0.723 & 1.044 & 0.746 & \textbf{0.550} \\
\hline
\end{tabular}
}
\end{table*}

\subsubsection{Semi-synthetic experiments}
The \textsc{ACIC 2016} datasets include covariates, simulated treatment, and simulated response variables for the causal inference challenge held at the 2016 Atlantic Causal Inference Conference \citep{dorie2019automated}. For each of the 77 conditions (simulation settings), treatment and response data were simulated 100 times from real-world data corresponding to 4,802 individuals and 58 covariates. Among 77 simulation settings, we selected the first five and the last five ones (total 10 simulation settings), and analyzed 100 simulated datasets for each simulation setting.

The results in Table \ref{table_ACIC} indicate that ReLU demonstrates superior performance compared to other methods in terms of both performance measures. While GLM yields better results in terms of Bias, Wass and RBF exhibit better performance in terms of RMSE. Interestingly, H\"{o}l-1 and H\"{o}l-2 shows smaller Bias values but larger RMSE values.


\clearpage
\vspace*{-10pt}
\section{Experimental Details for ReLU-FRL} \label{sec:ex_settings_relu_fl}

\subsection{Datasets}\label{sec:datasets-appendix}

\begin{itemize}
    \item \textsc{Adult}:
    The Adult income dataset \citep{Dua:2019} can be downloaded from the UCI repository\footnote{\url{https://archive.ics.uci.edu/ml/datasets/adult}}.
    We follow the pre-processing of the implementation of IBM's AIF360 \citep{aif360-oct-2018} \footnote{\url{https://aif360.readthedocs.io/en/stable/}}.
    
    \item \textsc{Dutch}:
    The Dutch census dataset can be downloaded from the public GitHub of \citet{Le_Quy_2022} \footnote{\url{https://github.com/tailequy/fairness\_dataset/tree/main/experiments/data/dutch.csv}}.
    We follow the pre-processing of \citet{Le_Quy_2022} in its official GitHub\footnote{\url{https://github.com/tailequy/fairness\_dataset/tree/main/experiments/data/}}.
    
    \item \textsc{Crime}:
    The communities and crime dataset can be downloaded from the UCI repository\footnote{\url{https://archive.ics.uci.edu/dataset/183/communities+and+crime}}.
    We follow the pre-processing of \citet{jiang2022generalized} in its official GitHub\footnote{\url{https://github.com/zhimengj0326/GDP/blob/master/tabular/data\_load.py}}.
\end{itemize}

Table \ref{table:datasets} below presents the basic information of the three datasets used in our experiments.

\begin{table}[ht]
    \centering
    \caption{
    The description of the datasets: \textsc{Adult}, \textsc{Dutch}, and \textsc{Crime}.
    }
    \label{table:datasets}
    \vskip 0.1in
    \resizebox{\textwidth}{!}{%
    \begin{tabular}{c|c|c|c}
        \toprule
         Dataset & \textsc{Adult} & \textsc{Dutch} & \textsc{Crime} \\
         \midrule \midrule
         $\mathbf{X}$ & Demographic attributes & Demographic attributes & Demographic attributes \\
         $S$ & Gender & Gender & \% of black population (binarized) \\
         $Y$ & Outcome over $\$50k$ & High-level occupation & \% of crimes (binarized) \\
         $d$ & 101 & 58 & 121 \\
         Train data size & 30,136 & 48,336 & 1,794 \\
         Test data size & 15,086 & 12,084 & 200 \\
         Batch size & 1024 & 1024 & 256 \\
         \bottomrule
    \end{tabular}
    }
\end{table}

\subsection{Implementation Details}\label{sec:impdetail_relu_frl}

\begin{paragraph}{Baseline algorithms}
    For given dataset $\{(\mathbf{x}_i, s_i, y_i)\}_{i=1}^n$,  the training objective is to minimize
    \begin{align*}
        \frac{1}{n} \sum_{i=1}^n \ell (g \circ h(\mathbf{x}_i, s_i), y_i) \quad \textup{ subject to } \quad d(\mathcal{P}_{0, n}^{h}, \mathcal{P}_{1, n}^{h}) \leq \delta,
    \end{align*}
    with respect to $g\in \mathcal{G}$ and $h\in \mathcal{H},$
    where $\ell$ is a given task-specific loss function and $d$ is a given deviance measure between the empirical distributions $\mathcal{P}_{0, n}^{h}$ and $\mathcal{P}_{1, n}^{h}$ of $\{h(\mathbf{x}_i,0) : s_i = 0\}$ and 
    $\{h(\mathbf{x}_i,1) : s_i = 1\}$, respectively. We use the cross-entropy loss for the $\ell$.
    The Lagrangian form of this constrained optimization can be expressed as
    \begin{align} \label{eq:obj_ftn_frl}
        \frac{1}{n} \sum_{i=1}^n \ell (g \circ h(\mathbf{x}_i, s_i), y_i) + \lambda d(\mathcal{P}_{0, n}^{h}, \mathcal{P}_{1, n}^{h})
    \end{align}
    for some $\lambda>0$.
    In our experimental results, we present the  Pareto-front trade-off graphs illustrating the relationship between the fairness levels (\(\Delta \overline{\textup{DP}}, \Delta \textup{DP}, \Delta \textup{SDP}\)) and the binary classification accuracy (\textsc{Acc}) by varying the values of $\lambda$.

    For the deviance measure, along with the ReLU-IPM, we consider SIPM \citep{pmlr-v162-kim22b},
    H\"{o}lder-IPM \citep{Wang2023manifold} approximating H\"{o}lder discriminators by  one hidden layer neural networks, MMD-B-Fair \citep{pmlr-v206-deka23a} and 
    the Jensen–Shannon divergence (LAFTR, \cite{Madras2018LearningAF}).
    For LAFTR, we use a single-layered MLP with hidden size 100 and ReLU activation.

\end{paragraph}

\begin{paragraph}{Training hyperparameters}
    We train all models for $200$ epochs in total with Adam optimizer \citep{kingma2014adam}.
    The learning rate is initialized by $0.001$ and scheduled by multiplying $0.95$ at each epoch.
    For the optimization we update the discriminator for $2$ iterations of the gradient ascent  per a single update of the model parameters (i.e. parameters in the representation) by the gradient descent.
    The learning rate is set to $0.001$ and not scheduled.
    For ReLU-IPM and SIPM, we use the ensemble technique explained in Section \ref{sec:com_relu}  with $K=100$ in the gradient ascent step.
    The reported results are assessed on the test dataset with the final models trained after the $200$ epochs.
    For H\"{o}lder-IPM, we use $100$ nodes at the hidden layer.
\end{paragraph}


\subsection{Results of Additional Experiments} \label{sec:add_experiments_relu_frl}

Figures \ref{fig:1_ReLU_MLP_dp} and \ref{fig:1_ReLU_MLP_sdp} presents the Pareto-front trade-off graphs of $\Delta \textup{DP}$ and $\Delta \textup{SDP}$ for comparing the ReLU-FRL with the single layered neural network head with the ReLU activation function to other baselines. The results are similar to those in Figure \ref{fig:1_ReLU_MLP_meandp}.

Figure \ref{fig:linear_meandp} to Figure  \ref{fig:1_Sigmoid_MLP_sdp}
draw the Pareto-front trade-off graphs for the liner head and the single layered neural network head with the sigmoid activation function. The results indicate that the ReLU-FRL works well regardless of the choice of the prediction head.

\begin{figure}[p]
    \centering
    \includegraphics[width=0.81\textwidth]{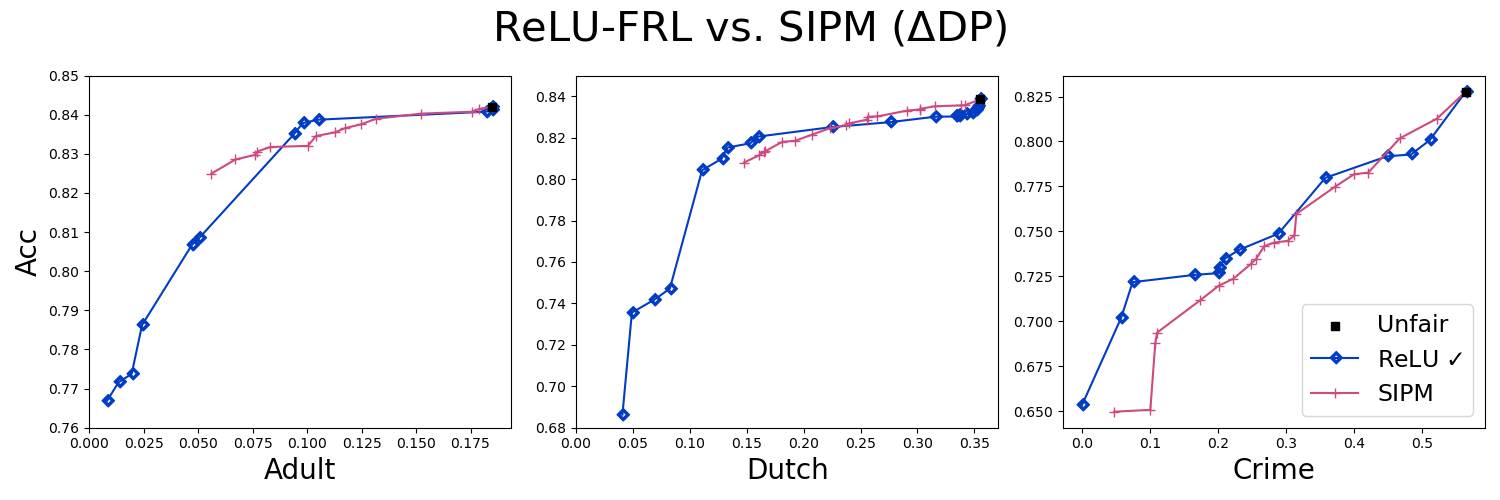}
    \includegraphics[width=0.81\textwidth]{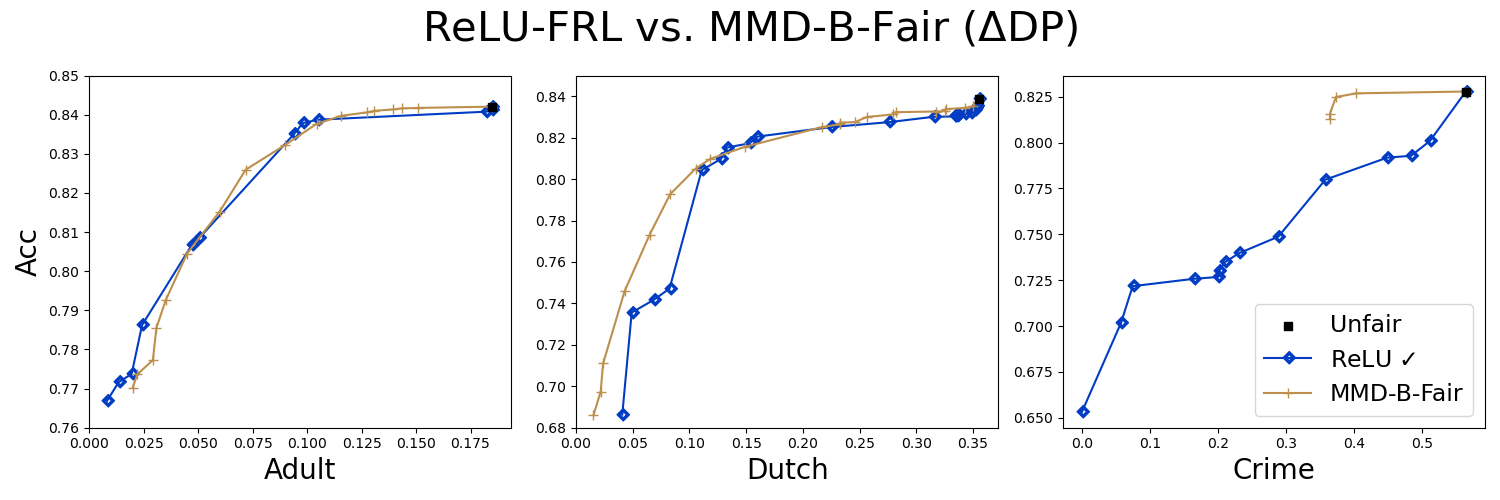}
    \includegraphics[width=0.81\textwidth]{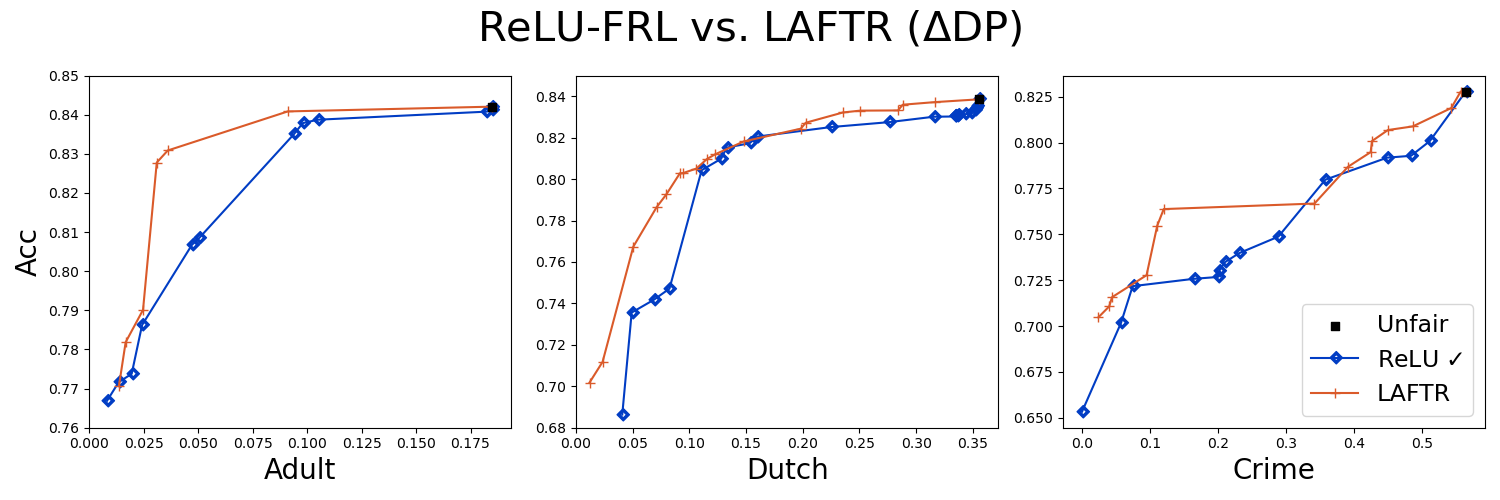}
    \includegraphics[width=0.81\textwidth]{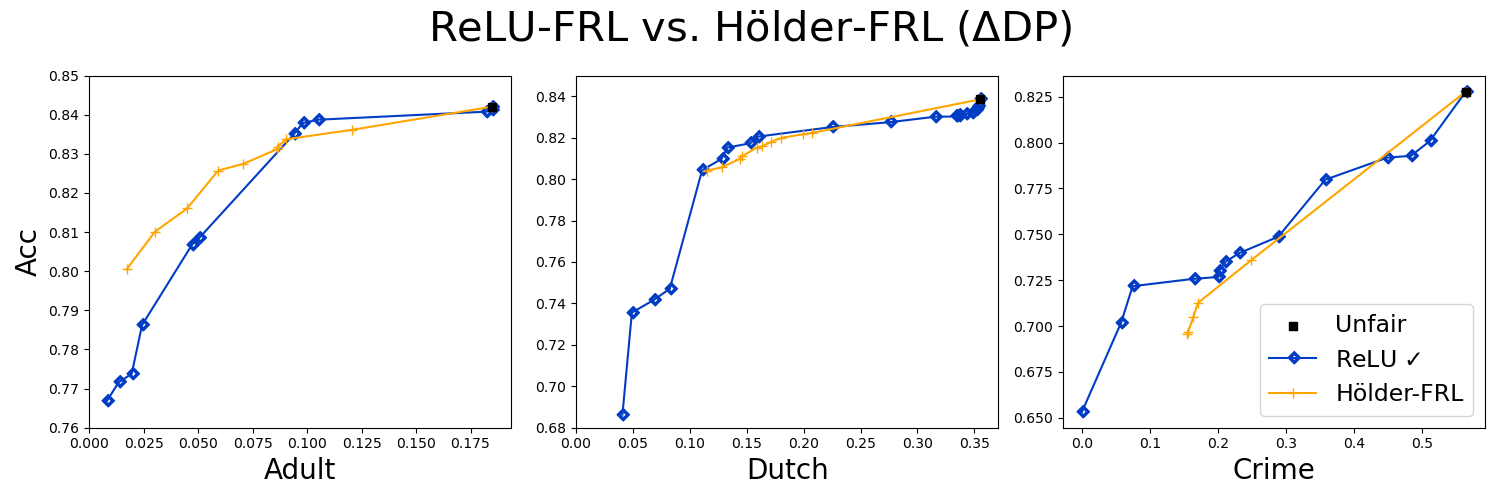}
    \caption{
    \textbf{Single-layered NN (ReLU activation) prediction head}: Pareto-front lines of fairness level $\Delta \textup{DP}$ and \texttt{Acc}.
    (Left) \textsc{Adult}, (Center) \textsc{Dutch}, (Right) \textsc{Crime}.
    }
    \label{fig:1_ReLU_MLP_dp}
\end{figure}

\begin{figure}[p]
    \centering
    \includegraphics[width=0.81\textwidth]{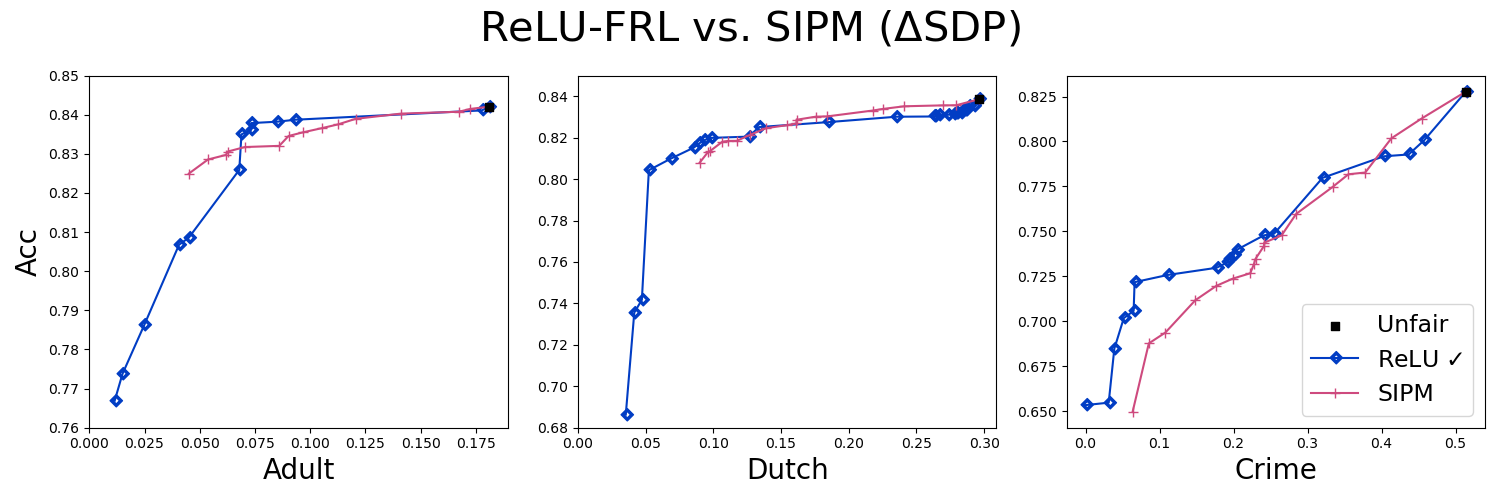}
    \includegraphics[width=0.81\textwidth]{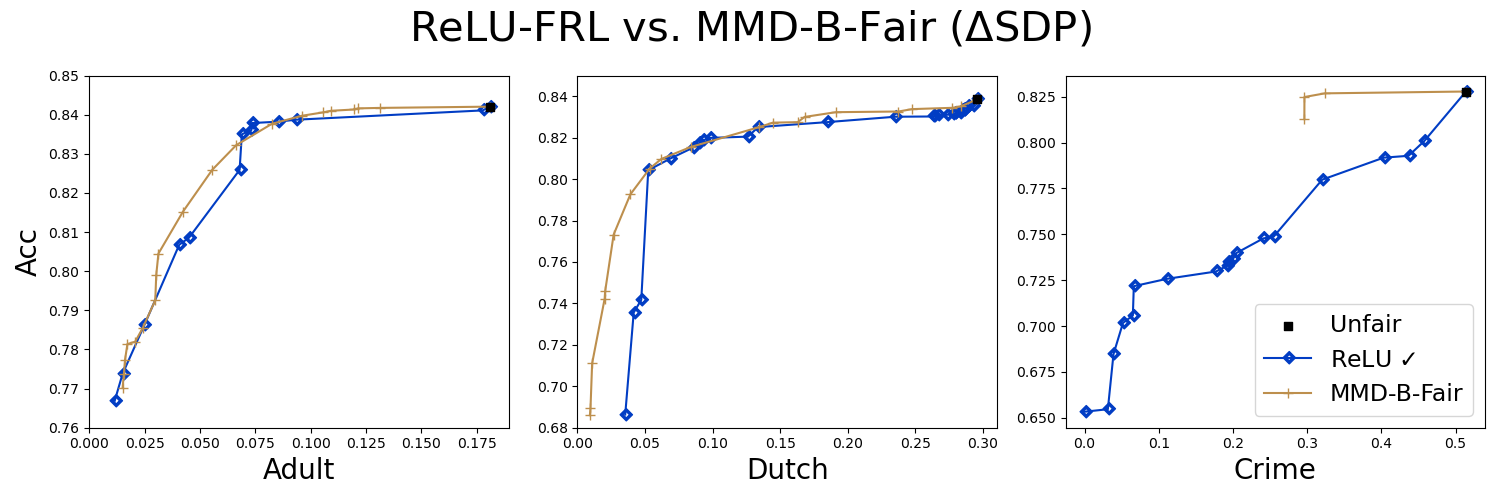}
    \includegraphics[width=0.81\textwidth]{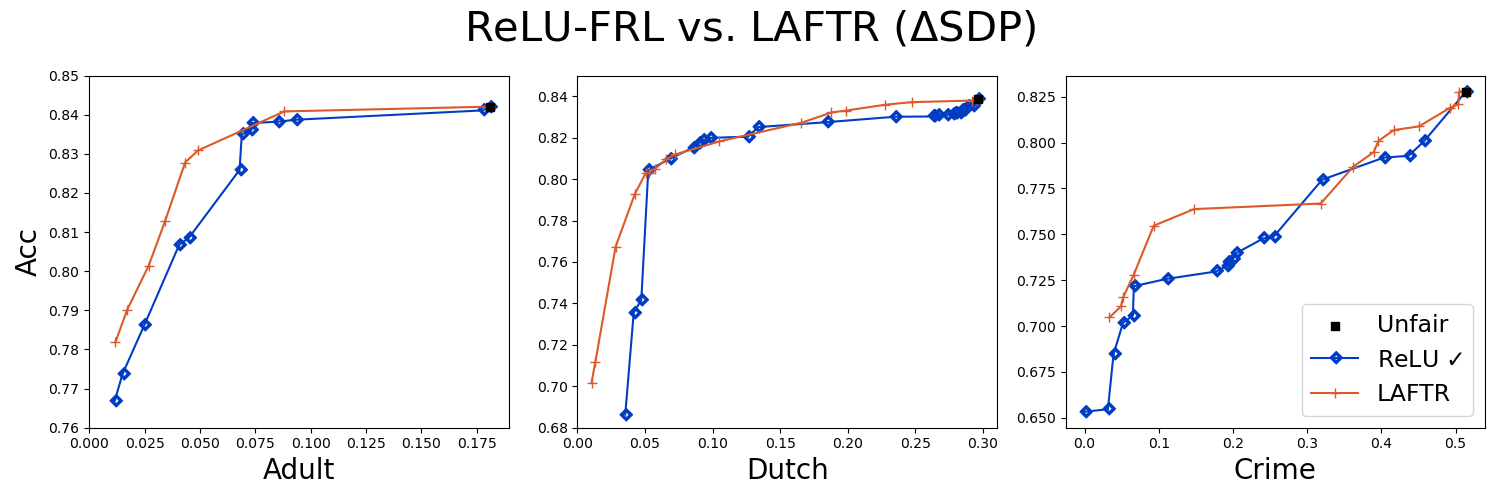}
    \includegraphics[width=0.81\textwidth]{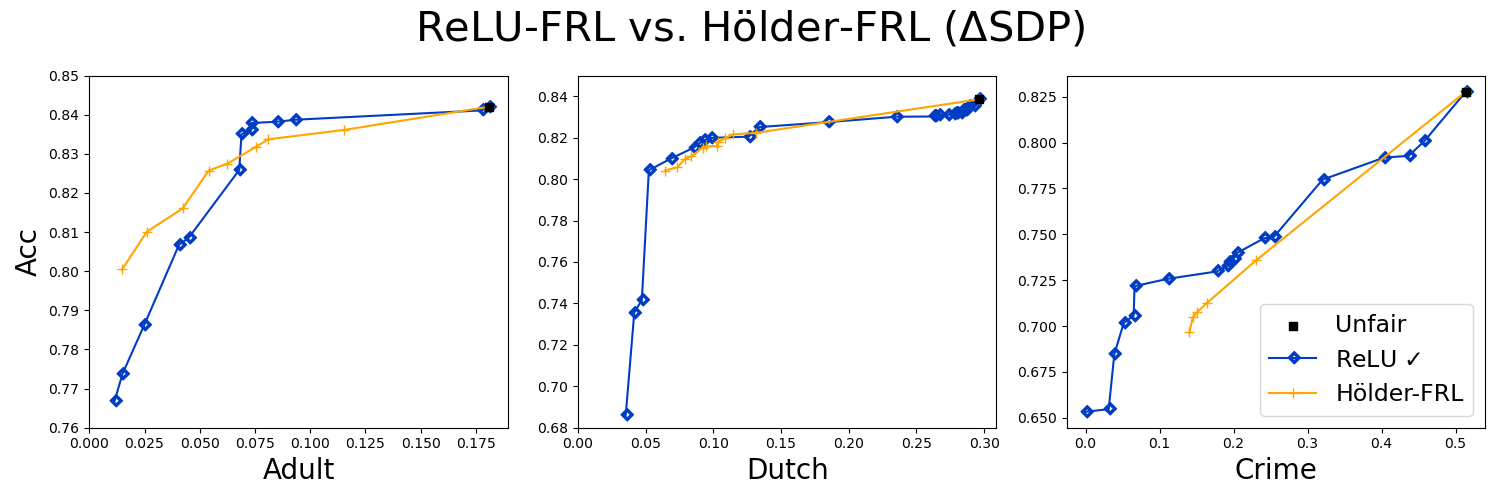}
    \caption{
    \textbf{Single-layered NN (ReLU activation) prediction head}: Pareto-front lines of fairness level $\Delta \textup{SDP}$ and \texttt{Acc}.
    (Left) \textsc{Adult}, (Center) \textsc{Dutch}, (Right) \textsc{Crime}.
    }
    \label{fig:1_ReLU_MLP_sdp}
\end{figure}


\begin{figure}[p]
    \centering
    \includegraphics[width=0.81\textwidth]{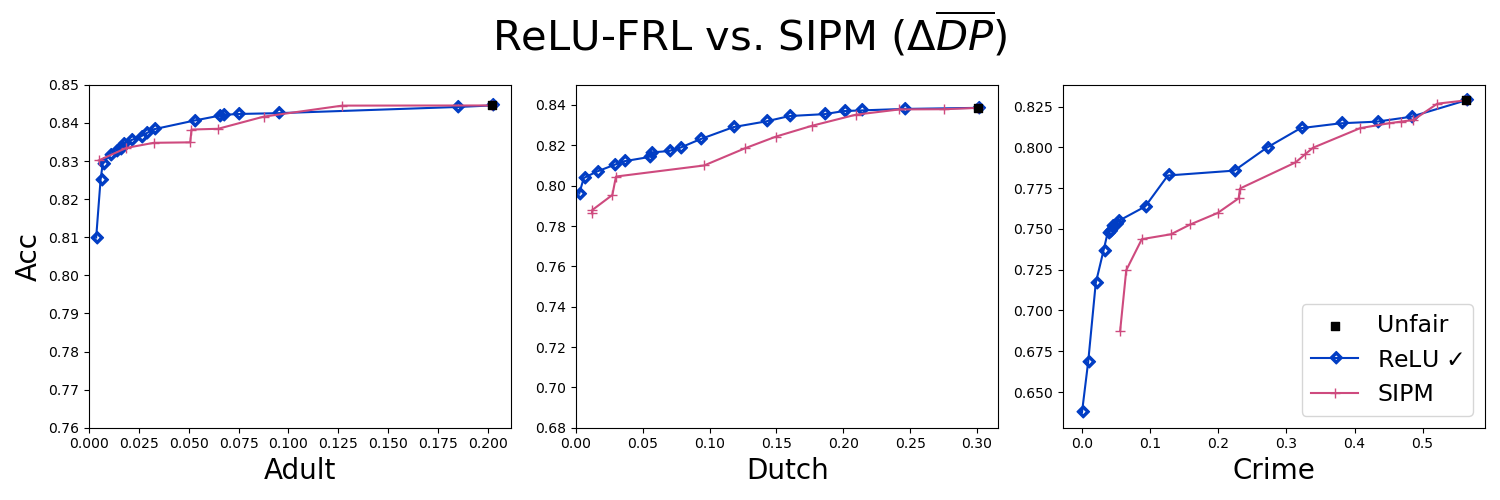}
    \includegraphics[width=0.81\textwidth]{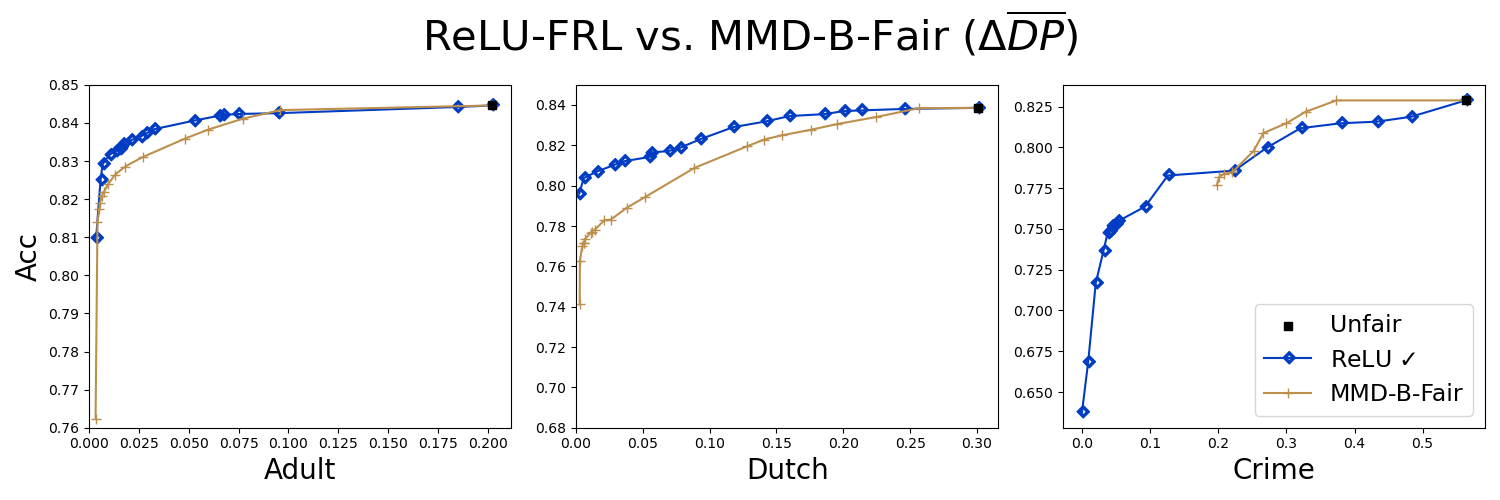}
    \includegraphics[width=0.81\textwidth]{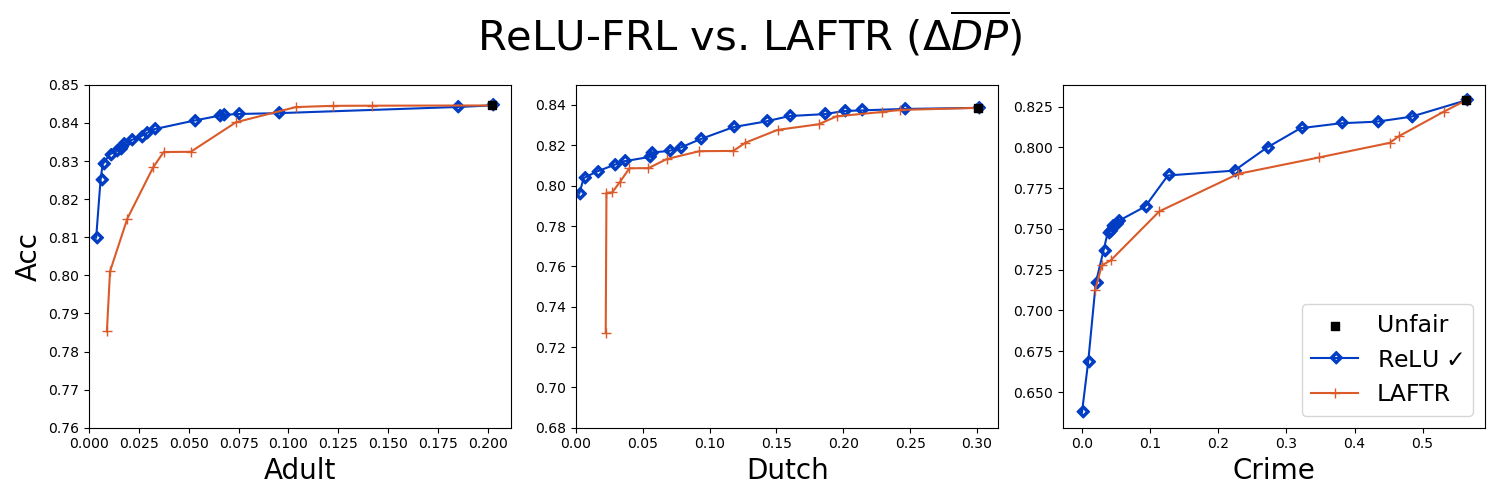}
    \includegraphics[width=0.81\textwidth]{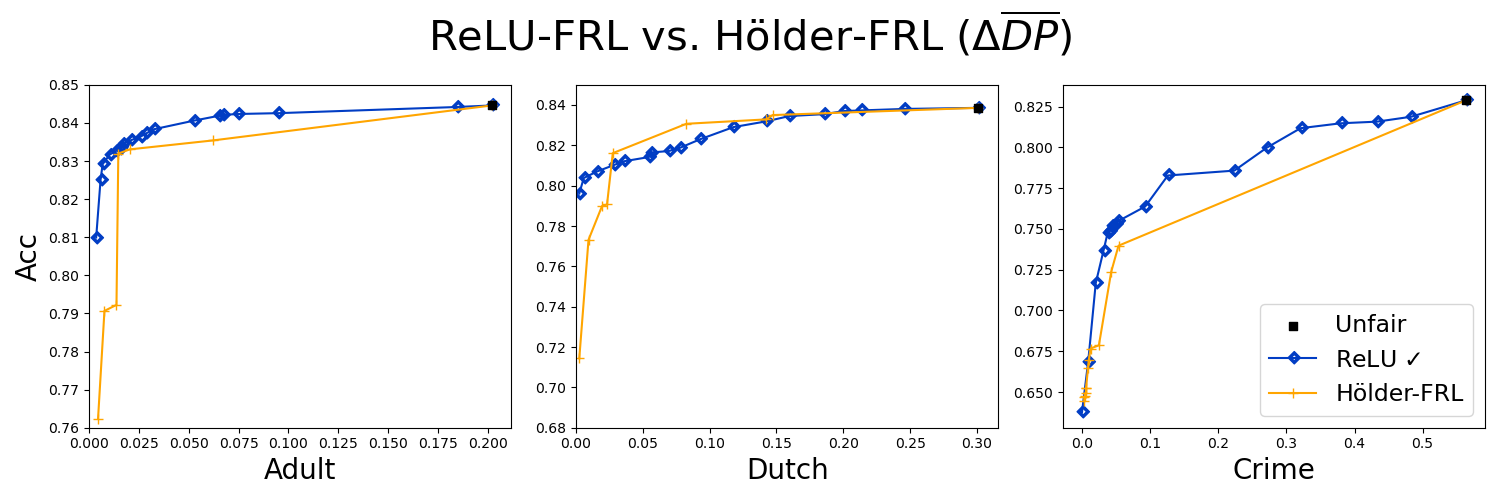}
    \caption{
    \textbf{Linear prediction head}: Pareto-front lines of fairness level $\Delta \overline{\textup{DP}}$ and \texttt{Acc}.
    (Left) \textsc{Adult}, (Center) \textsc{Dutch}, (Right) \textsc{Crime}.
    }
    \label{fig:linear_meandp}
\end{figure}

\begin{figure}[p]
    \centering
    \includegraphics[width=0.81\textwidth]{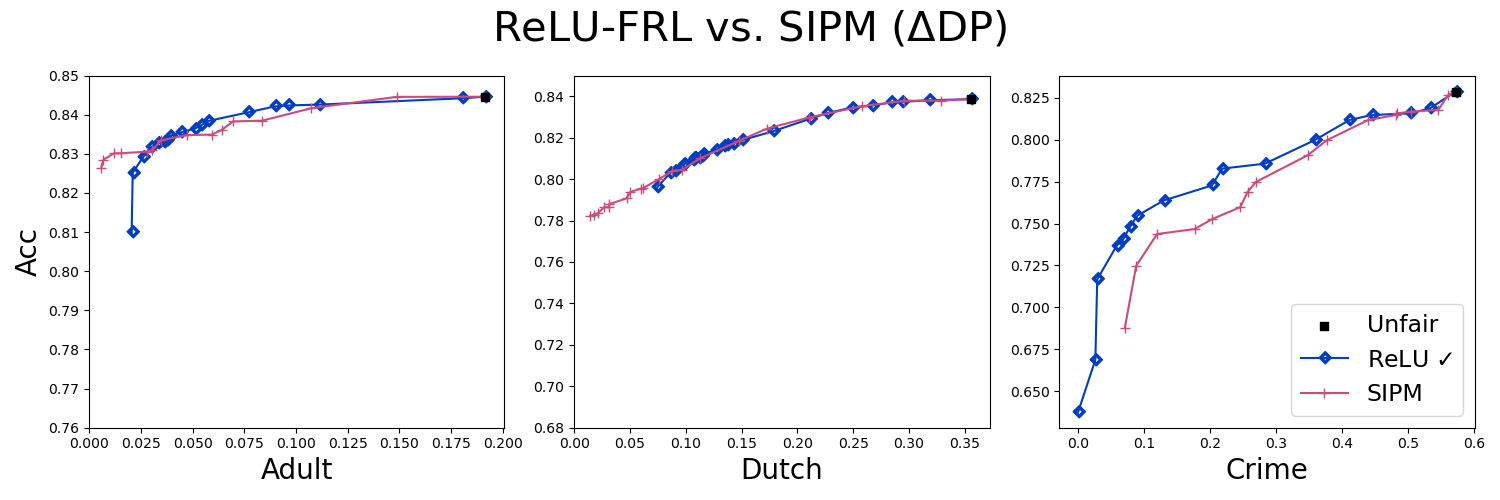}
    \includegraphics[width=0.81\textwidth]{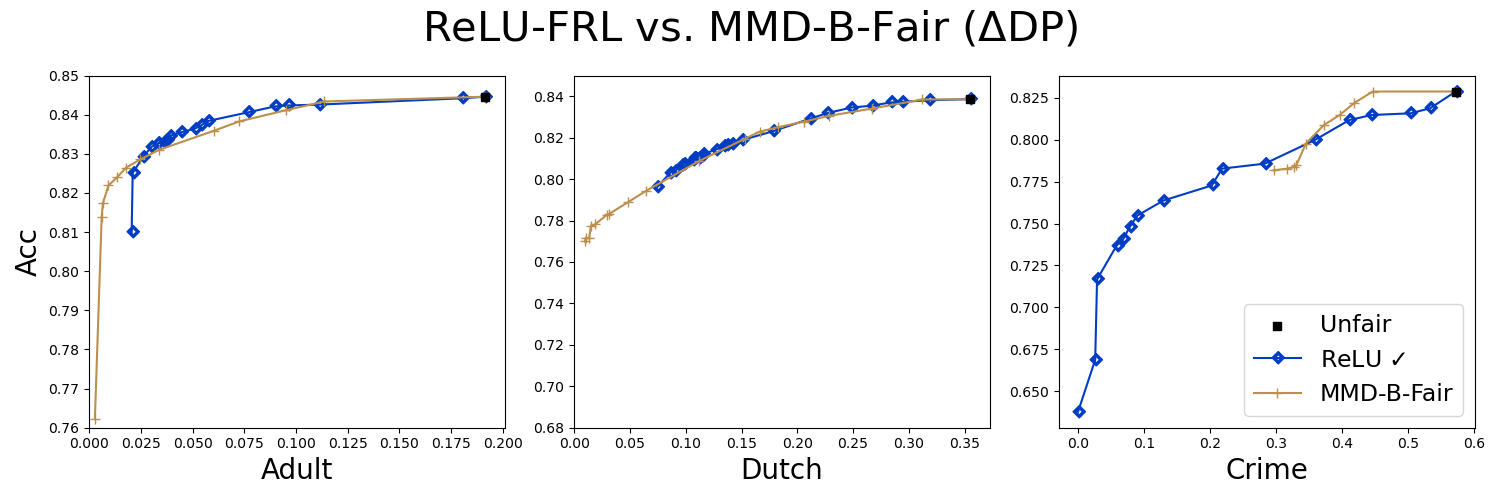}
    \includegraphics[width=0.81\textwidth]{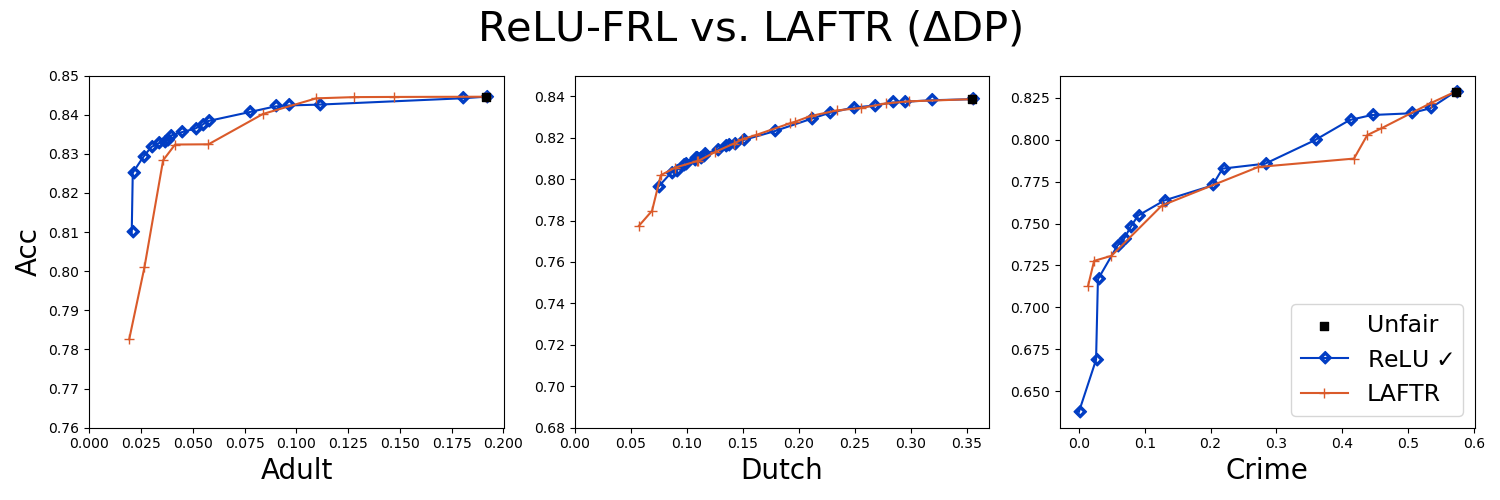}
    \includegraphics[width=0.81\textwidth]{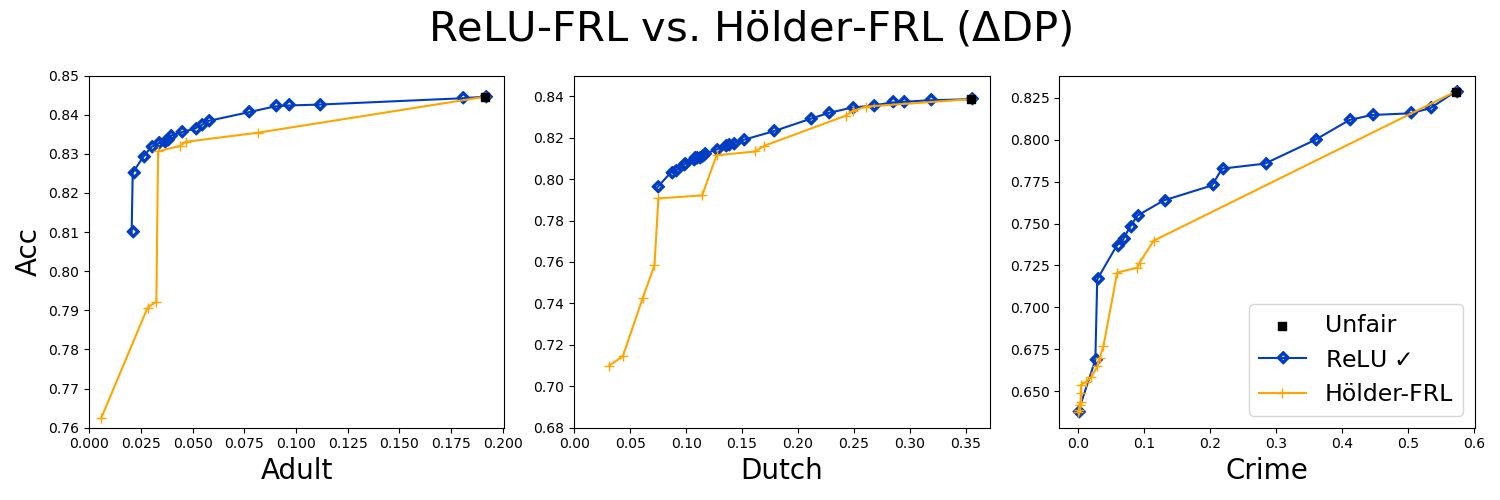}
    \caption{
    \textbf{Linear prediction head}: Pareto-front lines of fairness level $\Delta \textup{DP}$ and \texttt{Acc}.
    (Left) \textsc{Adult}, (Center) \textsc{Dutch}, (Right) \textsc{Crime}.
    }
    \label{fig:linear_dp}
\end{figure}

\begin{figure}[p]
    \centering
    \includegraphics[width=0.81\textwidth]{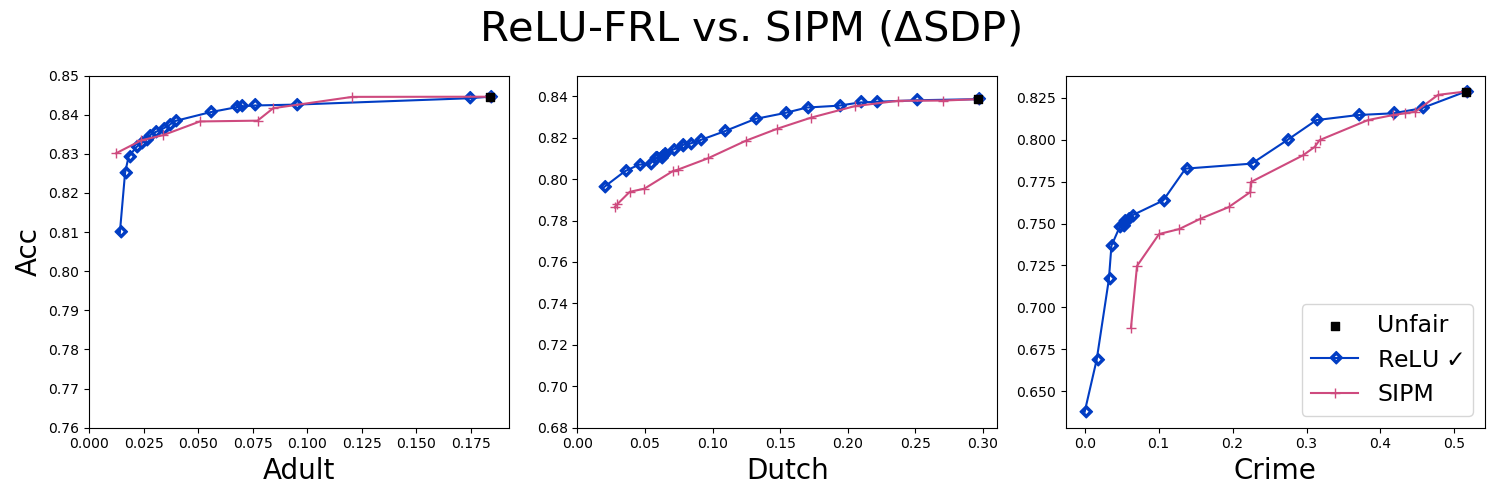}
    \includegraphics[width=0.81\textwidth]{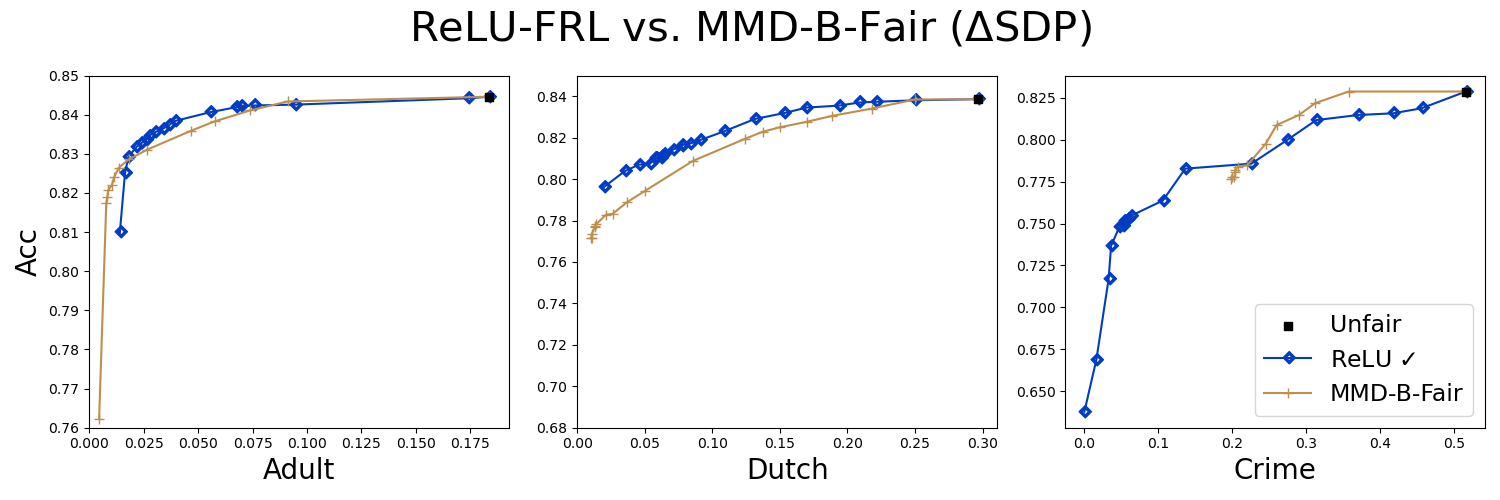}
    \includegraphics[width=0.81\textwidth]{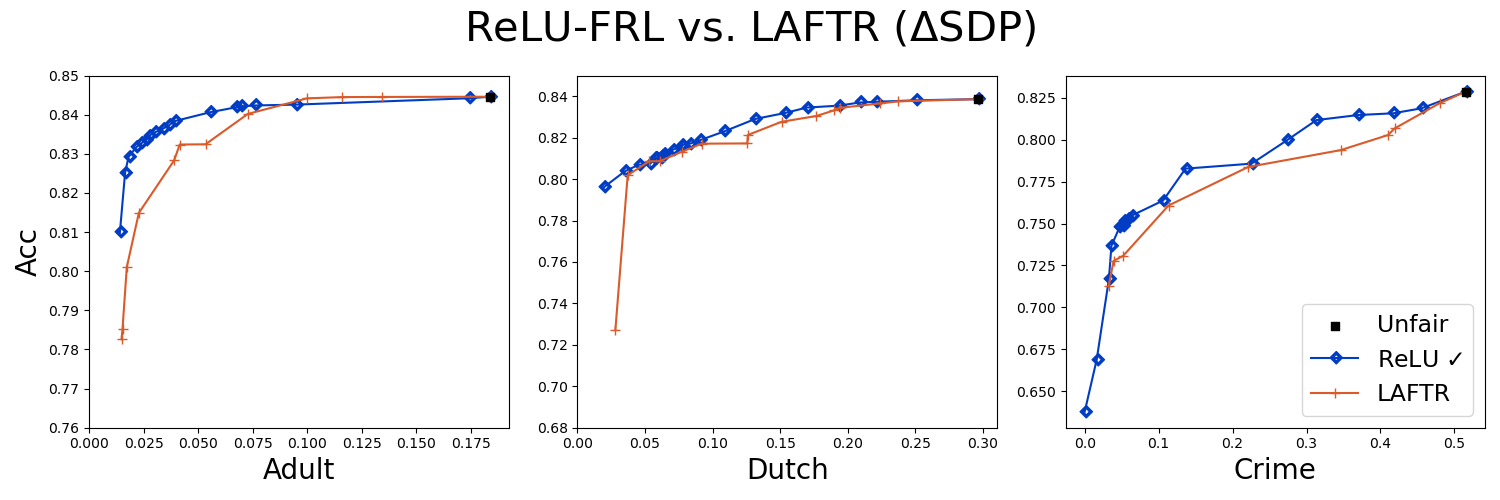}
    \includegraphics[width=0.81\textwidth]{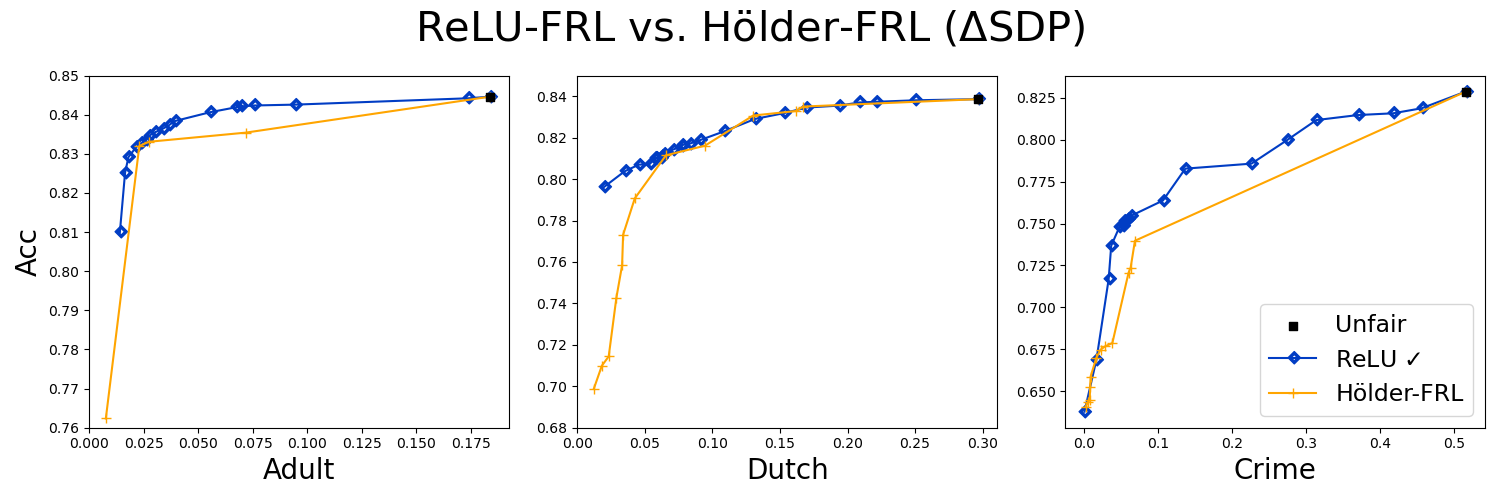}
    \caption{
    \textbf{Linear prediction head}: Pareto-front lines of fairness level $\Delta \textup{SDP}$ and \texttt{Acc}.
    (Left) \textsc{Adult}, (Center) \textsc{Dutch}, (Right) \textsc{Crime}.
    }
    \label{fig:linear_meandp}
\end{figure}


\begin{figure}[p]
    \centering
    \includegraphics[width=0.81\textwidth]{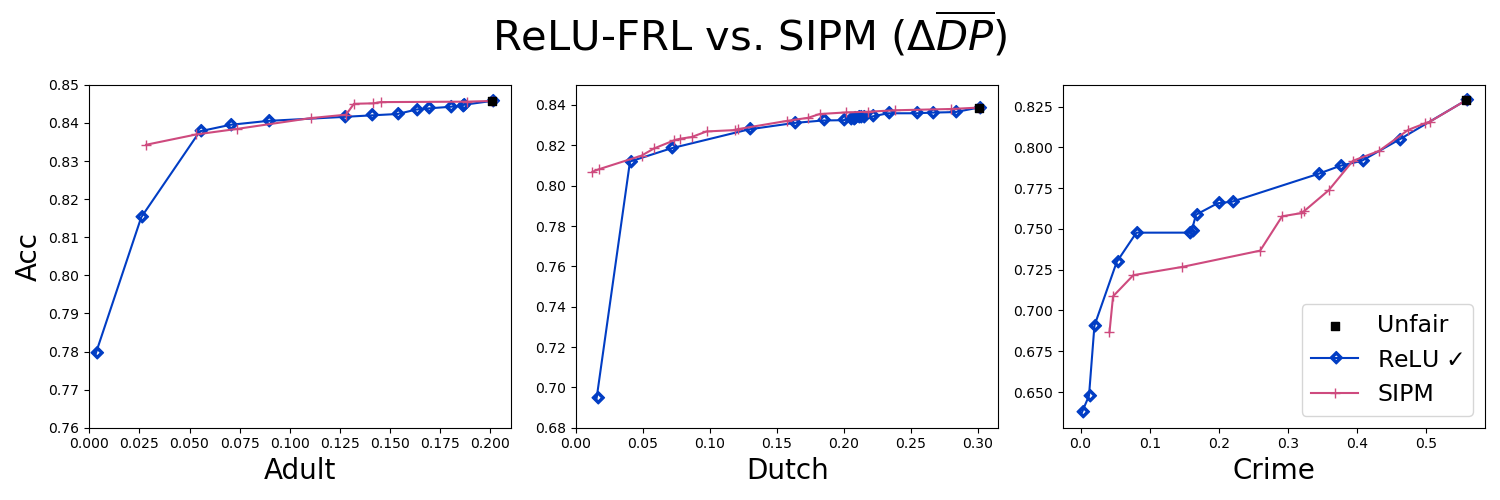}
    \includegraphics[width=0.81\textwidth]{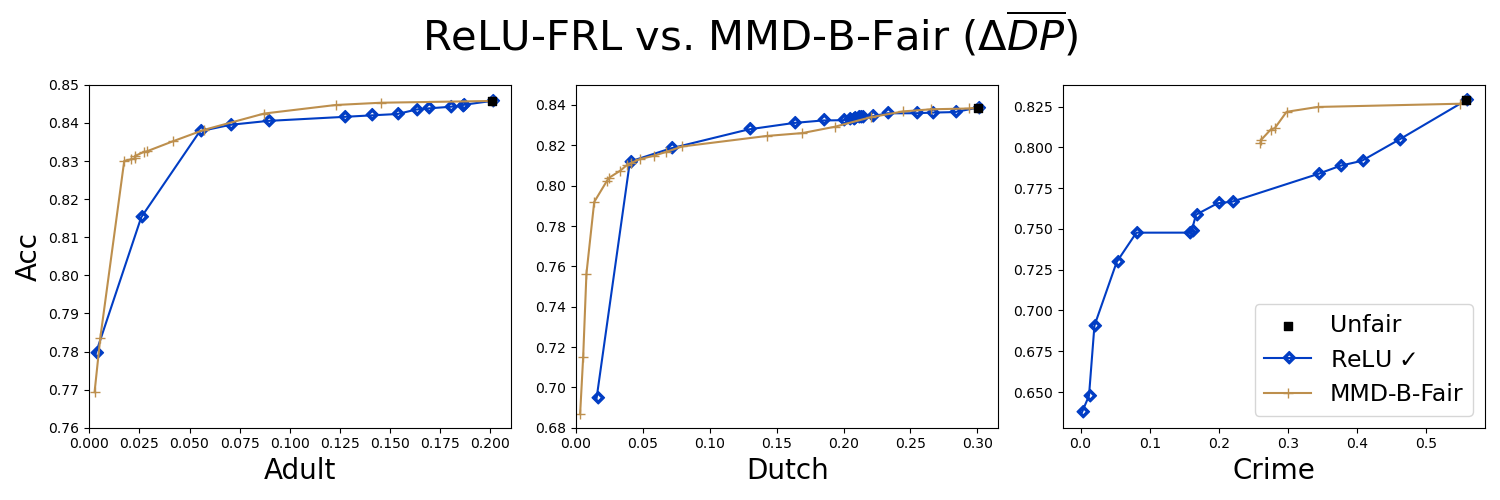}
    \includegraphics[width=0.81\textwidth]{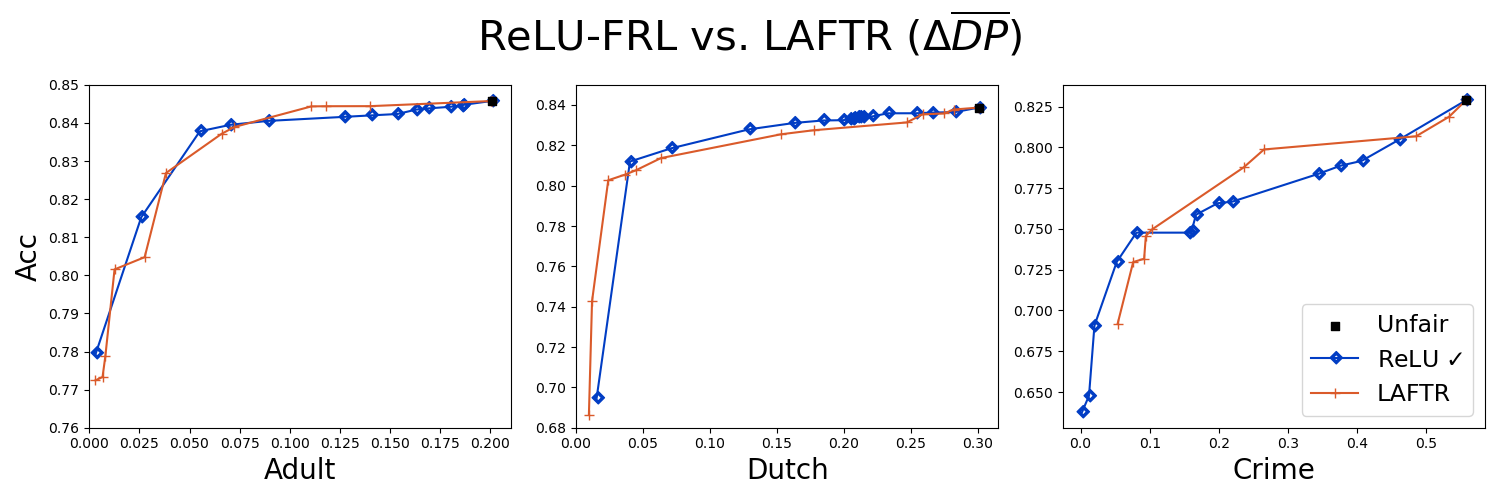}
    \includegraphics[width=0.81\textwidth]{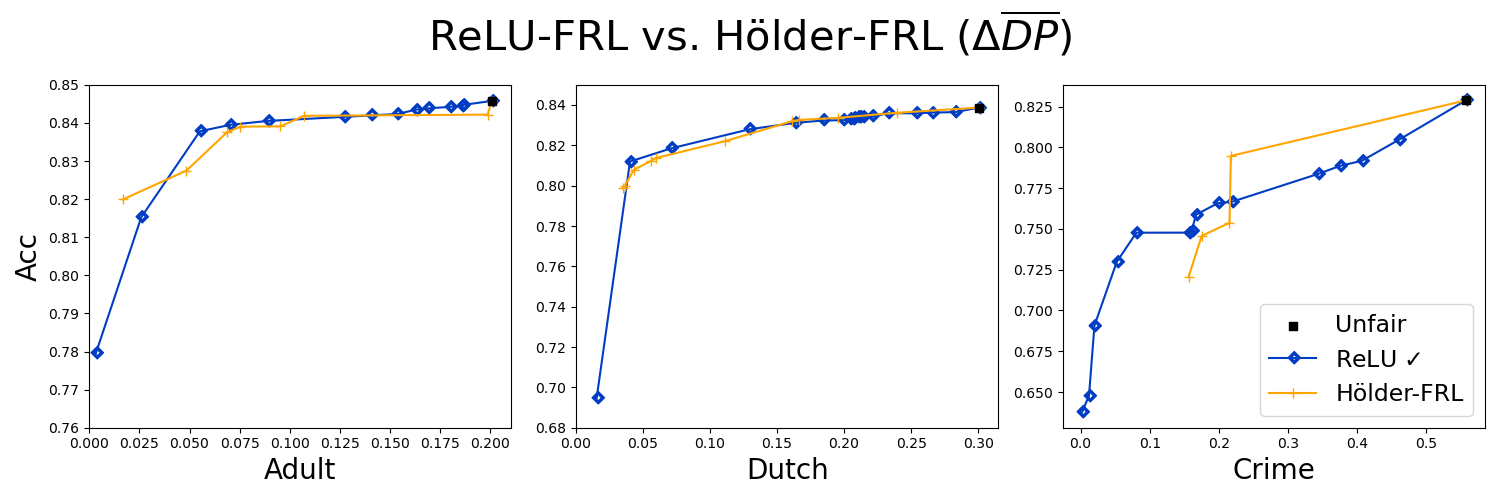}
    \caption{
    \textbf{Single-layered NN (sigmoid activation) prediction head}: Pareto-front lines of fairness level $\Delta \overline{\textup{DP}}$ and \texttt{Acc}.
    (Left) \textsc{Adult}, (Center) \textsc{Dutch}, (Right) \textsc{Crime}.
    }
    \label{fig:1_Sigmoid_MLP_meandp}
\end{figure}

\begin{figure}[p]
    \centering
    \includegraphics[width=0.81\textwidth]{figures/dp/main_linear_ReLU-FRL_vs_SIPM.png}
    \includegraphics[width=0.81\textwidth]{figures/dp/main_linear_ReLU-FRL_vs_MMD-B-Fair.png}
    \includegraphics[width=0.81\textwidth]{figures/dp/main_linear_ReLU-FRL_vs_LAFTR.png}
    \includegraphics[width=0.81\textwidth]{figures/dp/main_linear_ReLU-FRL_vs_Holder-FRL.png}
    \caption{
    \textbf{Single-layered NN (sigmoid activation) prediction head}: Pareto-front lines of fairness level $\Delta \textup{DP}$ and \texttt{Acc}.
    (Left) \textsc{Adult}, (Center) \textsc{Dutch}, (Right) \textsc{Crime}.
    }
    \label{fig:1_Sigmoid_MLP_dp}
\end{figure}

\begin{figure}[p]
    \centering
    \includegraphics[width=0.81\textwidth]{figures/sdp/main_linear_ReLU-FRL_vs_SIPM.png}
    \includegraphics[width=0.81\textwidth]{figures/sdp/main_linear_ReLU-FRL_vs_MMD-B-Fair.png}
    \includegraphics[width=0.81\textwidth]{figures/sdp/main_linear_ReLU-FRL_vs_LAFTR.png}
    \includegraphics[width=0.81\textwidth]{figures/sdp/main_linear_ReLU-FRL_vs_Holder-FRL.png}
    \caption{
    \textbf{Single-layered NN (sigmoid activation) prediction head}: Pareto-front lines of fairness level $\Delta \textup{SDP}$ and \texttt{Acc}.
    (Left) \textsc{Adult}, (Center) \textsc{Dutch}, (Right) \textsc{Crime}.
    }
    \label{fig:1_Sigmoid_MLP_sdp}
\end{figure}

\newpage
\vskip 0.2in
\bibliography{reference}

\end{document}